\newtheorem{asmp}{Assumption}[section]
\newtheorem{cor}{Corollary}[section]
\newtheorem{defn}{Definition}[section]
\newtheorem{thm}{Theorem}[section]
\newtheorem{prop}{Proposition}[section]
\theoremstyle{definition}
\newcommand{\tw}{\tilde{w}}
\newcommand{\ttheta}{\Tilde{\theta}}
\newcommand{\bara}{\bar{a}}
\newcommand{\barw}{\bar{w}}
\newcommand{\Rmuzero}{R^{-1}\{0\}}
\title{Geometry and Local Recovery of Global Minima of Two-layer Neural Networks at Overparameterization}
\author{%
    \name Tao Luo${}^{a, b}$\thanks{Corresponding author} \email luotao41@sjtu.edu.cn \\
    \name Leyang Zhang${}^{c}$ \email leyangz\_hawk@outlook.com \\
    \name Yaoyu Zhang${}^{a}$ \email zhyy.sjtu@sjtu.edu.cn \\
    \addr ${}^a$ School of Mathematical Sciences, Institute of Natural Sciences and MOE-LSC,
    Shanghai Jiao Tong University, Shanghai, 200240, China \\
    \addr ${}^b$ CMA-Shanghai, Shanghai Artificial Intelligence Laboratory, Shanghai, China \\
    \addr ${}^c$ School of Mathematics, Georgia Institute of Technology, Atlanta, GA, 30332,
    United States}
\begin{document}

\maketitle

\begin{abstract}
     Under mild assumptions, we investigate the geometry of the loss landscape for two-layer neural networks in the vicinity of global minima. Utilizing novel techniques, we demonstrate: (i) how global minima with zero generalization error become geometrically separated from other global minima as the sample size grows; and (ii) the local convergence properties and rate of gradient flow dynamics. Our results indicate that two-layer neural networks can be locally recovered in the regime of overparameterization.\\
\end{abstract}

keywords: neural networks,  global minima, analytic function theory, non-linear analysis, gradient flow, recovery stability\\

\tableofcontents
~\\

\section{Introduction}

Over the past decade, neural networks—a distinct class of nonlinear models—have revolutionized the field of artificial intelligence, including the theoretical studies and applications. However, the mathematical underpinnings that set them apart from other models are not well comprehended. Deciphering these structures poses a daunting but essential challenge for the field of mathematics. Among the various aspects of neural networks, the loss landscape is obviously important in shaping their training dynamics and generalization capabilities \cite{WeinanE, RSun}. Of particular interest is the geometry of the global minima, which lies at the heart of a fundamental enigma: how neural networks are able to find well-generalizing solutions via global training from an ostensibly infinite pool of global minima—a majority of which do not generalize well—especially when the network is overparameterized \cite{CZhang, LBreiman}. \\

In our research, we propose and address the geometric structure and local recovery problem for two-layer neural networks. To be precise, we uncover an important geometric structure of the perfect global minima, namely, set that recovers the target function: this set consists of different ``branches" (subsets) which become separated from imperfect global minima at overparameterization. The separation property guarantees local recovery capability of the target function and surely reduces the difficulty in finding well-generalizing solutions globally via gradient dynamics. In fact, the separation of branches occur successively as training sample size increases. Beyond the separation property, we show that the geometry near different branches are vary significantly, which leads to the distinct limiting behavior of gradient flows nearby. Note that all these are inherent to the family of neural network models.\\

Our work demonstrates the profound impact of the fine geometry of global minima to the generalization of neural networks. In traditional machine learning problems, the global minima usually have trivial geometry, e.g., isolated points, rendering generalization a separate issue from loss landscape analysis \cite{RSun}. However, for neural networks, in particular at overparameterization, we showcase the importance of analyzing the geometry of the well-generalizing set, e.g., the perfect global minima, amid the global minima of loss landscape for the understanding of generalization. Our view of the global minima that particularly highlights the structure of the perfect global minima as the backbone and its generalization consequence is a significant refinement over the view suggested by Cooper \cite{YCooper} which focuses on the overall geometry of the global minima. We hope our work could convince the audiences in mathematics that the global minima, or more broadly the critical points of the loss landscape, of neural networks possess rich geometric structures that are  amenable to analysis, mathematically interesting, intrinsic to the model architecture,  and undoubtedly plays a fundamental role to their exceptional training and generalization performance. \\



Technically speaking, the study of global minima in the loss landscape presents two fundamental issues. The first issue is the linear independence of neurons (and their derivatives). It determines geometric structures of the loss landscape, such as dimensions of global minima. Previous works address the linear independence of neurons for the analysis of critical points \cite{BSimsek, RSun}. In our work, to understand when perfect global minima are separated from the imperfect ones, we establish the linear independence of neurons \textit{and their derivatives}. The second issue is that neural network architecture yields highly degenerate Hessian of loss at a global minimum. The loss may fail to be a Morse function, or even a generalized Morse--Bott function, thus beyond the reach of traditional methods. We devise methods for addressing gradient flow dynamics in the vicinity of degenerate critical points, which are indispensable to the analysis of neural networks. This outcome broadens the scope of most existing research that primarily focuses on gradient flow dynamics near critical points of Morse and generalized Morse--Bott functions. Furthermore, our investigations into the dynamics around global minima have yielded unprecedented results in terms of convergence rates and directions.
\\

Specifically, we start by presenting the main results of this paper in an illustrative (but informal) way in Section \ref{Section Glance at this Paper}. In Section \ref{Section Preparing Lemmas and Propositions}, we prove lemmas and propositions which will be used to derive these main results. Importantly, we investigate the linear independence of neurons (and their partial derivatives against parameters) and then define ``separating inputs" (Definition \ref{Defn Separating inputs}). The next section, Section \ref{Section Loss landscape near Minfty} gives a detailed treatment of the geometry of global minima and the functional properties of loss near the perfect global minima of it. These help us to characterize the gradient flow near the perfect global minima in Section \ref{Section Dynamics of gradient flow near Minfty}, where the convergence, convergence rate, limiting direction and recovery stability (Definition \ref{Defn Recovery stability}) of such gradient flows are investigated. Finally, we make conclusion and discussion of our work in Section \ref{Section Conclusion and discussion}. \\

The following diagram (see next page) summarizes and demonstrates the interconnections of theoretical results in the main part of our paper (Section \ref{Section Preparing Lemmas and Propositions}, \ref{Section Loss landscape near Minfty}, \ref{Section Dynamics of gradient flow near Minfty}).

\begin{figure}[H]
    \centering
    \includegraphics[width=\textwidth]{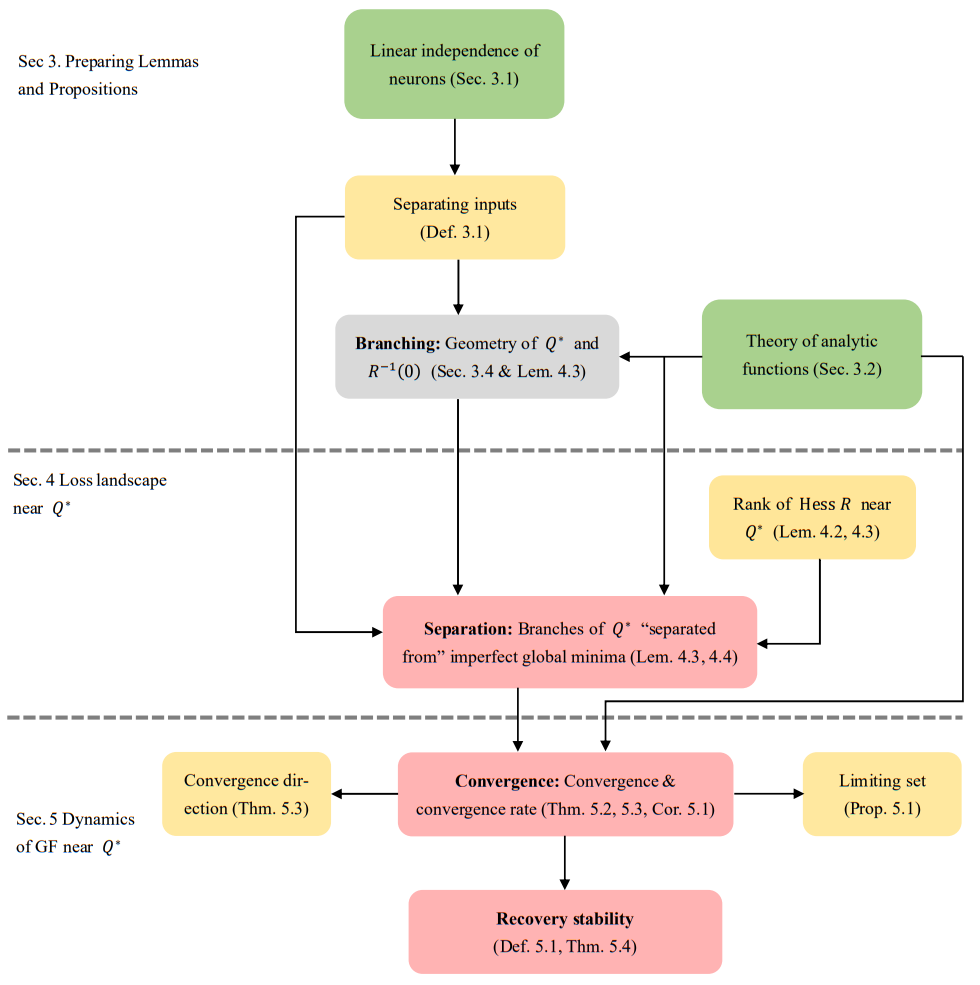}
    \caption{Overview of theoretical results and their interconnections. The main parts are in \textcolor{magenta}{dark pink} boxes, the basic theories are in \textcolor{teal}{green} boxes and the other results are in \textcolor{yellow}{yellow} boxes.}
    \label{Fig Mind Map}
\end{figure}

\section{A Glance at this Paper}\label{Section Glance at this Paper}

\subsection{Notations and Assumptions}\label{Subsection Notations and assumptions}

In this section, we make clear the notations and assumptions which we shall use throughout the paper, unless we specify it explicitly. Let $\bN$ denote the set of natural numbers $\{1, 2, 3, ...\}$. For any two elements $x, y$ of a Hilbert space, we use $x \cdot y$ and $\<x, y\>$ (interchangeably) to indicate the inner product of $x$ and $y$. For any function $f$, $\nabla f$ is the gradient of $f$ and $\operatorname{Hess} f$ the Hessian of $f$ (given that either one exists). Given a subset $E$ of a Hilbert space, we denote its closure (in the Euclidean space) by $\bar{E}$, and we say $E$ has $\lambda_k$-measure zero if the $k$-dimension Lebesgue measure of it is well-defined and is zero. We use $B(\theta, r)$ to denote the open ball centered at $\theta$ with radius $r$; when the $B(\theta, r)$ is in a Euclidean space, it is the open ball with respect to the standard norm. \\ 

Then we make assumptions on the objects we study, starting from the activations. Indeed, different kinds of activations give different loss landscape geometry. In our paper, we focus on a generic collection of analytic activations, which turns out to give a reasonable linear independence result: given $m \in \bN$ and distinct $w_1, ..., w_m \in \bR^d$, $\sigma(w_1\cdot x), ..., \sigma(w_m\cdot x)$ are linearly independent. This will then give the ``simplest possible" structure of global minima of loss function. See Section \ref{Subsection geometry of Minfty}. \\

\begin{asmp}[generic (analytic) activation]\label{generic activation}
    We consider any analytic activation $\sigma: \bR \to \bR$ such that 
    \begin{equation}
        \sigma(x) = \sum_{j=0}^\infty c_j x^j, \quad x \in (-R, R) \siq \bR
    \end{equation} 
    where $R$ is the radius of convergence, $c_0 \neq 0$, and for any $N \in \bN$ there are some odd number $j_{\mathrm{odd}}$ and even number $j_{\mathrm{even}}$ both greater than $N$ with $c_{j_{\mathrm{odd}}} \neq 0$, $c_{j_{\mathrm{even}}} \neq 0$. We call any such $\sigma$ a generic activation. 
\end{asmp} 

For example, the exponential activation $\exp(x)$ satisfies these requirements, while some other commonly-seen activation functions, including $\sigma(x) = \frac{1}{1+e^x}$, $\sigma(x) = \tanh(x) = \frac{e^x - e^{-x}}{e^x + e^{-x}}$ and $\sigma(x) = \log(1 + e^{-x})$ do not satisfy this assumption. However, almost any horizontal translation of them is a generic activation: given a non-polynomial analytic activation $\sigma$, for almost all $\vep > 0$ the function $x \mapsto \sigma(x + \vep)$ satisfies Assumption \ref{generic activation}. \\

The motivation of defining such activations is that any set of neurons constructed from generic activations preserves the number of first-layer features, i.e., the weights. Thus, the neurons are ``good feature-maps" as they preserve the information from the input-layer. Mathematically, we will show that whenever $w_1, ..., w_r \in \bR^d$ are distinct, $\sigma(w_1\cdot x), ..., \sigma(w_r\cdot x)$ are linearly independent, for any $r \in \bN$. This will be proved in Proposition \ref{independent activation}, when $\sigma$ is a generic activation. We shall also see that Assumption \ref{generic activation} (that is, $\sigma$ is a generic activation) is a necessary condition for it. \\

Having made assumptions on our activation function, we turn to the set-up of the network training -- the model, target function, and loss. In this paper, we focus on training a two-layer neural network 
\[
    g:\bR^{(d+1)m} \times \bR^d \to \bR, \quad g(\theta, x) = \sum_{k=1}^m a_k \sigma(w_k\cdot x), 
\]
Here $m \in \bN$ is fixed, which is often called the width of $g$, $x \in \bR^d$ is the input of $g$, and the parameter $\theta$ of $g$ is in $\bR^{(d+1)m}$: for a parameter we have several notations
\begin{align*}
    \bR^{(d+1)m} \ni \theta &= (a_1, w_1, ..., a_m, w_m) = (a_k, w_k)_{k=1}^m \in \prod_{k=1}^m (\bR \times \bR^d); \\
    \theta^* &= (a_1^*, w_1^*, ..., a_m^*, w_m^*) = (a_k^*, w_k^*)_{k=1}^m; \\ 
    \theta_i^j &= ((a_1)_i^j, (w_1)_i^j, ..., (a_m)_i^j, (w_m)_i^j) = ((a_k)_i^j, (w_k)_i^j)_{k=1}^m. 
\end{align*}
Throughout this paper, $i$ and $j$ are arbitrary indices to distinguish points in parameter space $\bR^{(d+1)m}$. \\

Next, we make clear about the models and target functions we consider in the paper. Starting from an abstract sense, we define our model as a function $g: X\times \bR^d \to \bR$. Here $X$ is any topological space which we call \textit{the parameter space} of $g$, and $\theta \in X$ is called a \textit{parameter}, and any $x \in \bR^d$ ($d \in \bN$ given) is an \textit{input} of $g$. The collection of all such models is defined as $\calG := \{g: X\times \bR^d \to \bR\}$. A particularly interesting case is 
\[
    X = \coprod_{m=1}^\infty \bR^{(d+1)m} =: \coprod_{m=1}^\infty \{(a_k, w_k)_{k=1}^m: a_k\in \bR, w_k \in \bR^d\}
\]
endowed with the Euclidean topologies from each $\bR^{(d+1)m}$, and $g(\theta, x) = \sum_{k=1}^m a_k \sigma(w_k\cdot x)$ for $\theta \in \bR^{(d+1)m}$, $x \in \bR^d$. Then our $\calG$ is just the collection of all two-layer NNs (of finite width) with activation $\sigma$. Notice that $\bR^{(d+1)m}$ embeds naturally into $\bR^{(d+1)m'}$ for $m \le m'$. So we can further define 
\[
    \calG_m := \left\{g(\theta, x) = \sum_{k=1}^m a_k \sigma(w_k\cdot x): \theta \in \bR^{(d+1)m}\right\} \quad \forall\, m \in \bN,  
\]
and write $\calG_m \siq \calG_{m'}$ for $m \le m'$. \\

Once given $\calG$, we assume that our target function $f$ is simply an element in it. Notice that when $X = \coprod_{m=1}^\infty \bR^{(d+1)m}$ and $g(\theta,x)$ defined as above, $f$ is just a two-layer neural network with width $m_0 \in \bN$, for some $m_0 \in \bN$. This is a natural setting, as the universal approximation theorem holds (for many commonly seen $\sigma$) on any compact subset of $\bR^d$ [9, 10, 11]. It is also clear that for any $m \ge m_0$ we have 
\[
    f^* \in \calG_m \siq \calG. 
\]
In the theory of neural network, this can be interpreted as: an NN model with no fewer features than the target function can fit it perfectly. This is the setting we consider in this paper. 

\begin{asmp}[finite-feature setting]\label{finite-feature setting}
    Given a generic activation $\sigma$ and $m, m_0 \in \bN$ with $m \ge m_0$, we consider a target function 
    \[
        f^*(x) = \sum_{k=1}^{m_0} \bara_k \sigma(\barw_k\cdot x), 
    \]
    where each $\bara_k \in \bR\cut\{0\}$ and $\barw_k \in \bR^d \cut \{0\}$ and $\barw_k \neq \barw_j$ whenever $k \neq j$. The collection of two-layer NN models we consider is $\calG_m \siq \calG$. 
\end{asmp}

Now we define our loss function as the usual empirical $L^2$ loss: 
\[
    R: \bR^{(d+1)m} \to \bR, R(\theta) = \int_{\bR^d} |g(\theta, x) - f^*(x)|^2 d\mu(x), 
\]
the measure $\mu$ being a Borel measure on $\bR^d$. Throughout the paper we are interested in Dirac masses $\mu = \sum_{i=1}^n \delta(\cdot- x_i)$ for distinct $x_i \in \bR^d$, $1 \leq i \leq n$; the $x_i$'s and $(x_i, f^*(x_i))$'s are both called samples, we do not distinguish them. In this case we have 
\begin{equation}\label{Eq Definition of Rmu}
    R(\theta) = \sum_{i=1}^n \left| g(\theta, x_i) - f^*(x_i) \right|^2 = \sum_{i=1}^n \left| \sum_{k=1}^m a_k\sigma(w_k\cdot x_i) - \sum_{k=1}^{m_0} \bar{a}_k \sigma(\bar{w}_k\cdot x_i) \right|^2. 
\end{equation}

\begin{remark}
    Some remarks on the loss function $R$. 
    \begin{itemize}
        \item [(a)] As we mentioned above, $f^* \in \calG_{m_0}$, whence there is some $\theta \in \bR^{(d+1)m}$ such that $g(\theta,\cdot) = f^*$. We define the \textit{perfect global minima}
        \begin{equation}\label{Eq Q^*}
            Q^* := \{\theta \in \bR^{(d+1)m}: g(\theta, \cdot) = f^*\}. 
        \end{equation}
    Clearly $Q^* \siq \Rmuzero$. We may also say $\Rmuzero \cut Q^*$ \textit{imperfect global minima}.
        \item [(b)] When $\mu = \sum_{i=1}^n \delta(\cdot-x_i)$ with $n \leq (d+1)m$, we say the model (or more generally, the system) is \textit{overparametrized}, otherwise it is called \textit{underparametrized}. Similarly, when $\mu = \rho dx$ for some continuous function $\rho: \bR^d \to (0, +\infty)$, the model is underparametrized. 

        Unlike traditional machine learning, in many NN trainings the models are overparametrized; this is one difficulty in the analysis of them. 
    \end{itemize}
\end{remark}

Finally, given $\theta_0 = ((a_k)_0, (w_k)_0)_{k=1}^m \in \bR^{(d+1)m}$, the gradient flow $\gamma = (a_k, w_k)_{k=1}^m: [0, +\infty) \to \bR^{(d+1)m}$ with initial value $\theta_0$ is defined as 
\begin{align*}
    \dot{\gamma}(t) = (\dot{a}_k(t), \dot{w}_k(t))_{k=1}^m = -\nabla R(\gamma(t)), \quad \gamma(0) = \theta_0. 
\end{align*}
In particular, when $R$ has the form in equation (\ref{Eq Definition of Rmu}), we have for $k=1,\dots,m$
\begin{align*}
    \dot{a}_k(t) &= - \parf{R}{a_k}(\gamma(t)) = 2 \sum_{i=1}^n (g(\gamma(t), x_i) - f^*(x_i)) \sigma(w_k(t) \cdot x_i), \\ 
    \dot{w}_k(t) &= - \parf{R}{w_k}(\gamma(t)) = 2 a_k(t) \sum_{i=1}^n (g(\gamma(t), x_i) - f^*(x_i)) \sigma'(w_k(t)\cdot x_i) x_i. 
\end{align*}

\subsection{Local Recovery Problem}\label{Subsection Motivation}
In this paper, we propose the concept of the local recovery problem in the context of neural networks at overparametrization. This problem refers to the challenge of ensuring that a neural network can perfectly recover the target function (the learned function has zero generalization error), in a local region of its parameter space. The local recovery problem arises due to the complex and non-convex nature of the loss landscape in neural networks, where infinitely many global minima exist and form complex patterns. \\

The key aspects of the local recovery problem include:

\begin{itemize}
    \item [(a)] \textbf{Geometric structure:} As we investigate the recovery of our target function, we are particularly interested in the geometry of global minima and the relation between sample size and global minima geometry. 

    \item [(b)] \textbf{Separation of minima:} In overparameterized neural networks, the possibility that perfect global minima, i.e., those that accurately recover the target function, can become separated from other global minima. This ensures our model to recover target function. 

    \item [(c)] \textbf{Convergent gradient dynamics:} The limiting properties of gradient-based optimization methods near these minima is affected by the local recovery problem. Understanding the dynamics in the vicinity of perfect global minima is essential for efficient training.
\end{itemize}

By proposing the local recovery problem, we aim to highlight how these geometric and dynamic properties impact the network's ability to find these perfectly generalizing solutions throughout training process. Addressing this problem involves developing strategies to navigate the complex loss landscape and understand why the network can recover the target function effectively in local regions.\\

To illustrate the local recovery problem more concretely, we will use a specific example. Consider a two-neuron model with exponential activation function and with one-dimensional input, i.e., $g(\theta, x) = a_1 e^{w_1\cdot x} + a_2 e^{w_2\cdot x}$, where $x \in \bR^2$ and $\theta=(a_1,w_1,a_2,w_2)\in\bR^6$. Consider the training data $\{(x_i, y_i)\}_{i=1}^n$ and hence the loss 
\begin{align*}
    R(\theta) = \sum_{i=1}^n \left| g(\theta, x_i) - y_i \right|^2. 
\end{align*}
By Cooper's results \cite{YCooper}, up to an arbitrarily small perturbation of the $y_i$'s, $\Rmuzero$ is a submanifold of $\bR^6$ with dimension $\max\{0, 6 - n\}$. In our paper, by considering the cases where $y_i$'s are sampled from a target function $f^*$ expressible by the given two-layer NN, which allows for perfect generalization, we uncover more detailed structure of the global minima of $R$. For illustration, we consider a simple example with $f^*(x) = \bara e^{\barw\cdot x}$ ($\bara \ne 0$). The loss function writes 
\begin{align*}
    R(\theta) = \sum_{i=1}^n \left| g(\theta, x_i) - \bara e^{\barw \cdot x_i} \right|^2. 
\end{align*}
First, note that $e^{w_1\cdot x}$ and $e^{w_2\cdot x}$ are linearly independent if and only if $w_1 \ne w_2$. Thus, based on the number of distinct $w_k$'s, we have a partition of the perfect global minima as $Q^* = Q_1 \cup Q_2 \cup Q_3$ independent of the training inputs, where 
\begin{align*}
    Q_1 &= \{(a, \barw, \bara-a, \barw): a \in \bR\} \\ 
    Q_2 &= \{(\bara, \barw, 0, w): w \in \bR^2 \cut \{\barw\}\} \\
    Q_3 &= \{(0, w, \bara, \barw): w \in \bR^2 \cut\{\barw\}\}. 
\end{align*}
Geometrically, $Q_1, Q_2$ and $Q_3$ look like three ``branches" with different dimensions. See also the figure below. 
\begin{figure}[H]
    \centering
    \includegraphics[width = 0.675\textwidth]{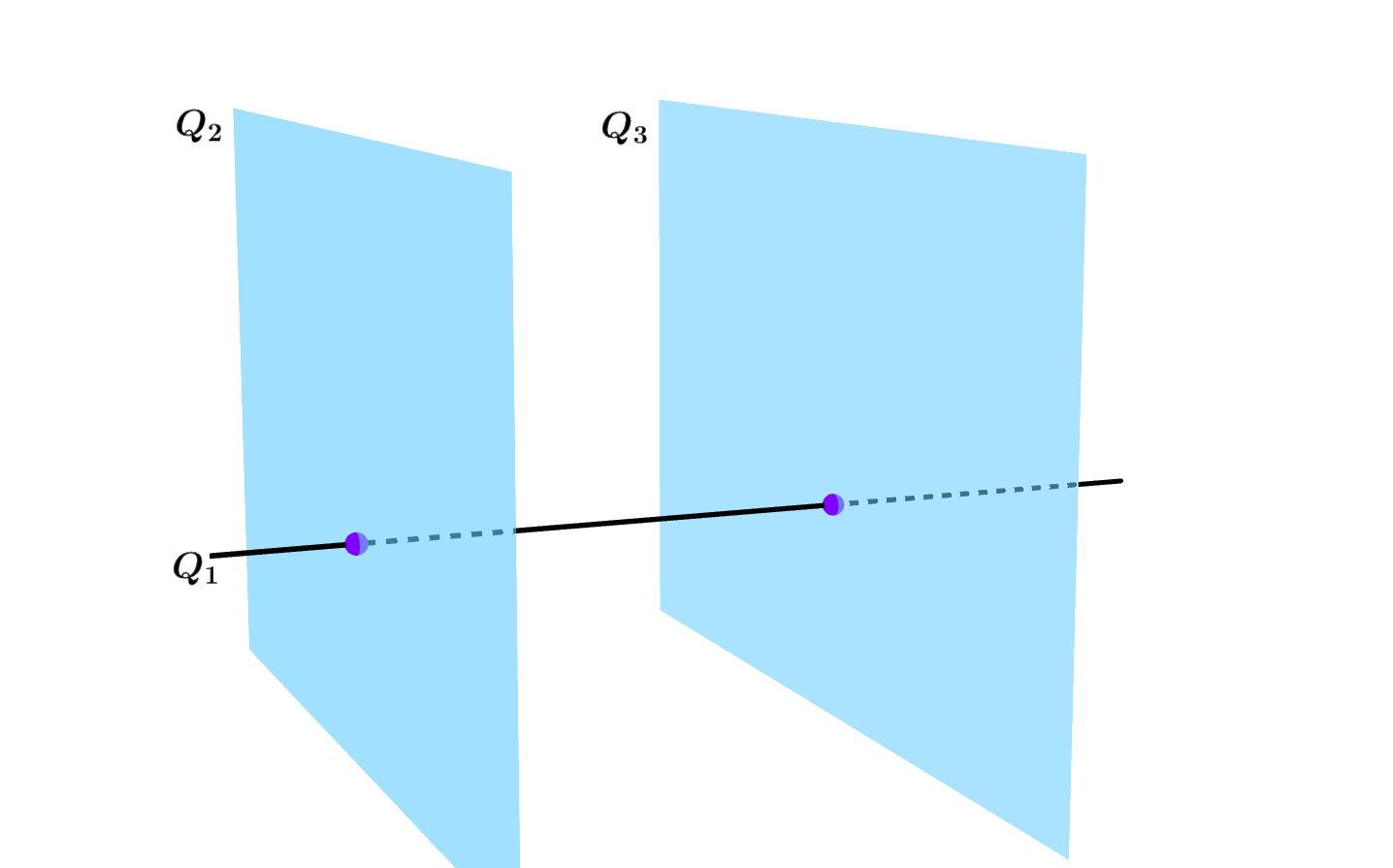}
    \caption{Illustration of $Q^*$. The closure of the branches $Q_1, Q_2, Q_3$ are all affine subspaces in the parameter space. Moreover, $Q_1, Q_2$ intersects at $(\bara, \barw, 0, \barw)$ and $Q_1, Q_3$ intersects at $(0, \barw, \bara, \barw)$.}
    \label{Figure Illust of Q*}
\end{figure}

Observe that $\overline{Q_1}, \overline{Q_2}, \overline{Q_3}$ are all affine subspaces in $\bR^6$ with different dimensions. In general, when sample size $n\geq 6$, we expect no imperfect global minima, thus the NN achieves perfect generalization when it converges to zero loss. However, at overparameterization, i.e., when sample size $n<6$, imperfect global minimum generally exists, which sparkles the question of whether each branch of perfect global minima are enclosed by the imperfect global minima. If not, the training dynamics clearly has chance to converge to some perfect global minima, thus achieving perfect generalization.  
Inspired by this example, we notice that in general, to understand the problem of achieving perfect generalization for a two-layer neural network ($m$, $m_0$ and $n$ are arbitrary), we must investigate the following questions: 
\begin{itemize}
    \item [(a)] \textbf{Geometric structure:} How can we describe the perfect global minima $Q^*$ geometrically? Does it consist of branches as for the example above? 
    \item [(b)] \textbf{Separation of minima:} How is $Q^*$ related to $\Rmuzero$, in particular to the imperfect global minima $\Rmuzero \cut Q^*$? Can it be ``separated" from $\Rmuzero \cut Q^*$? How does this depend on samples? 
    \item [(c)] \textbf{Convergent Gradient dynamics:} What are the convergence, convergence rate, convergence direction, etc., of gradient flows near $\Rmuzero$? Moreover, what can we say about the stability of these gradient flows? 
\end{itemize}
In \cite{BSimsek, KFukumizu}, Simsek and Fukumizu already have an answer to question (a), i.e., the geometry of $Q^*$. In the papers they show that $Q^*$ is a set with lots of symmetry. Moreover, it is the union of finitely many branches of different dimensions, the closure of each branch being an affine subspace. See also Section \ref{Subsection geometry of Minfty}. Based on the geometry of $Q^*$, we provide detailed answers to questions (b) and (c) in Sections \ref{Section Loss landscape near Minfty} and \ref{Section Dynamics of gradient flow near Minfty}. A summary of the results can be found in the following Section \ref{Subsection Main results}, where the informal theorems (Theorem \ref{Thm Hess Rmu, informal} to question (b) and Theorem \ref{Thm gradient flow properties local, informal} to question (c)) are presented. Then we apply these theorems to this example. 

\subsection{Main Results}\label{Subsection Main results}

The main results of this paper are theorems resolving the local recovery of two-layer neural networks. Let us summarize and discuss them informally as follows. For the separation of branches of $Q^*$ we have the following. 

\vspace{0.5em}

\begin{thm}[separation of branches in $Q^*$]\label{Thm Hess Rmu, informal}
    Let $\{Q_t\}_{t=1}^N$ be the branches of $Q^*$. Each branch $Q_t$ corresponds to a sample size threshold $N_t \le m(d+1)$ (and if $m > m_0$, we have $N_t < m(d+1)$), such that when sample size $n \ge N_t$, $Q_t$ is ``separated" from the imperfect global minima. Moreover, by rearranging the indices of $Q_t$ if necessary, there is a partition 
    \[
        Q^* := \bigcup_{t=1}^N Q_t = \left(\bigcup_{t=1}^{N'} Q_t\right) \bigcup \left(\bigcup_{t=N'+1}^{N} Q_t\right)
    \]
    such that whenever $t \le N'$ and $n \ge N_t$, $R$ is not Morse--Bott anywhere at $Q_t$, while for $t > N'$ and $n \ge N_t$, $R$ is Morse--Bott a.e. at $Q_t$. 
\end{thm}

By saying that $Q_t$ is separated we mean there is an open $U \siq \bR^{(d+1)m}$ such that $U \cap \Rmuzero = U \cap Q_t$. For definition of a Morse--Bott function $f$ we mean the $\mathrm{Hess}\,f$ is non-degenerate along the normal bundle of a manifold in $(\nabla f)^{-1}\{0\}$ (Definition \ref{Morse--Bott function defn}). The details of Theorem \ref{Thm Hess Rmu, informal} will be shown in Lemmas \ref{separated branch} and \ref{underparametrized system} in Section \ref{Section Loss landscape near Minfty}, which relies on the theory of real analytic functions contained in Section \ref{Subsection Theory of real analytic functions}. \\

Finally, any gradient flow near $\Rmuzero$ has the following properties. 

\begin{thm}[gradient flow near global minima]\label{Thm gradient flow properties local, informal}
    Following the hypotheses and notations in Theorems \ref{Thm Hess Rmu, informal}, any gradient flow sufficiently close to $\Rmuzero$ converges. On the other hand, any point in $\Rmuzero$ is the limit of some gradient flow. The following results hold. 
    \begin{itemize}
        \item [(a)] Whenever $t \le N'$ and sample size $n \ge N_t$, a generic gradient flow sufficiently close to $Q_t$ converges to a point $\theta^* \in Q_t$. The convergence does not have linear rate and the curve is ``biased towards" $\ker \mathrm{Hess} R(\theta^*)$. Moreover, any small perturbation of it still converges to $Q_t$. 

        \item [(b)] Given $t > N'$. When sample size $n \ge N_t$, any gradient flow sufficiently close to $Q_t$ converges to points $Q_t$ at linear rate. Similar to (a), any small perturbation of it still converges to $Q_t$. 
    \end{itemize} 
\end{thm}

In short, we characterize the convergence (Theorem \ref{convergence rate of analytic or GMB function}), limiting set (Proposition \ref{Prop Converse of gradient flow convergence}), limiting direction and convergence rate of gradient flows near global minimum, especially near $Q^*$ (Theorem \ref{Thm Convergence rates of gradient flow}). Meanwhile we develop a concept called ``generalization stability" which discusses how the limiting model (under gradient flow) changes at perturbation of gradient flow (see Definition \ref{Defn Recovery stability} and the remark below). \\

The theorems above exhibit comprehensively the structural change of geometry and local dynamics of global minima at the overparameterized regime, with the perfect global minima  $Q^*$ as its backbone. In particular, the branch separation and convergence results guarantees the local recovery capability of two-layer neural networks, i.e., the target function will be recovered when initialized near separated branches.  Furthermore, our results suggest the following mechanism of generalization at overparameteration that deserve further study: with proper generic initialization and hyperparameter tuning, the gradient dynamics can be globally guided to a neighbourhood of the separated branches of $Q^*$, thus recovering the target function at convergence.  In the following, we illustrate above results in a simple example.  \\

\textbf{Example (Continued). } Using Theorems \ref{Thm Hess Rmu, informal} and \ref{Thm gradient flow properties local, informal}, together with some calculation, we now answer questions about local recovery theorem for our two-neuron model example.  
\begin{itemize}
    \item [(a)] From \cite{BSimsek, KFukumizu}, $Q^*$ is a union of several subsets of $\bR^4$ whose closures are affine subspaces. Specifically, $Q_1$ is a one-dimensional affine subspace, while $Q_2, Q_3$ are both two-dimensional affine subspaces minus a point. This coincides with our observation and Figure \ref{Figure Illust of Q*} above. 

    \item [(b)] By Theorem \ref{Thm Hess Rmu, informal}, each $Q_t$, $1 \le t \le 3$, corresponds to a sample size $N_t$ making it separated. Specifically, by Lemma \ref{separated branch} we have $N_1 = 5$ and $N_2 = N_3 = 4$. Moreover, when sample size $n \ge N_t$, $R$ is not Morse--Bott anywhere at $Q_t$ if $t = 1$, and $R$ is Morse--Bott a.e. at $Q_t$ if $t = 2,3$. 

    \item [(c)] By Theorem \ref{Thm gradient flow properties local, informal}, any gradient flow sufficiently near $\Rmuzero$ converges to it, and any point in $\Rmuzero$ is the limit of some gradient flow. According to this theorem and (b), a generic gradient flow $\gamma$ does not converge to a point $\theta^* \in Q_1$ at linear rate and is biased towards $\ker \mathrm{Hess} R(\theta^*)$, when sample size $n \ge 5$. However, any gradient flow $\gamma$ converging to a point in $Q_2 \cup Q_3$ has linear rate, when $n \ge 4$. 
\end{itemize}

The following figure illustrates these properties of $Q^*$ and gradient flow nearby.

\begin{figure}[H]
    \centering
    \includegraphics[trim={0 2cm 0 2cm}, clip, width=0.7\textwidth]{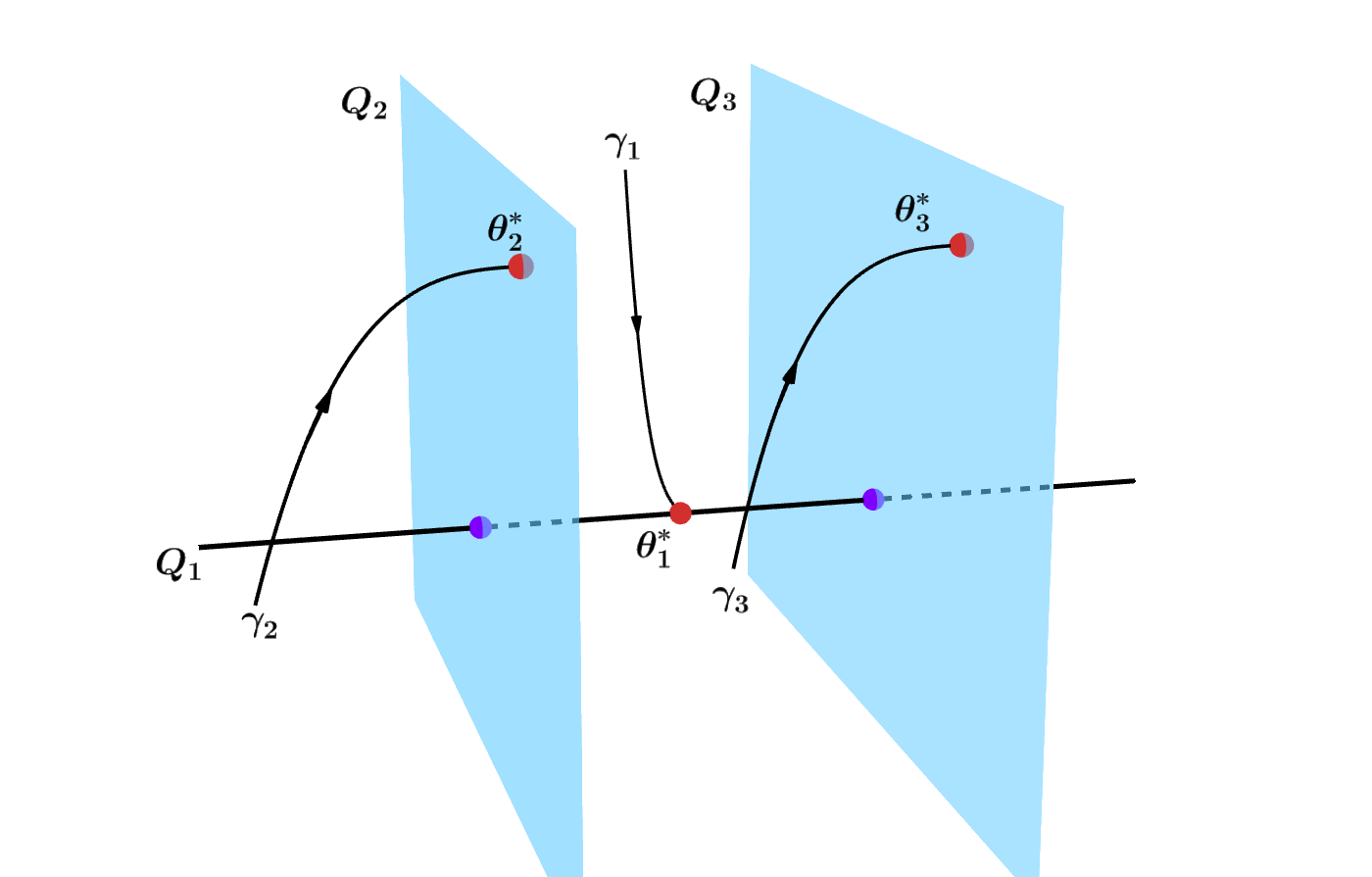}
    \caption{Illustration of the example for two-neuron model fitting a one-neuron network. As shown in part (a) of example, $Q^*$ consists of three sets whose closures are one-dimensional affine subspaces. By (b), the loss $R$ is not Morse--Bott near any point in $Q_1$, whence by (c) a gradient flow with limit in $Q_1$ ($\theta_1^*$ in the figure) is in general ``biased towards" $\ker\,\mathrm{Hess}\, R(\theta_1^*)$. On the other hand, $R$ is Morse--Bott a.e. at $Q_2$ and $Q_3$, whence a gradient flow with limit in $Q_2 \cup Q_3$ ($\theta_2^*, \theta_3^*$ in the figure) in general converges at linear rate. Finally, note that $Q_{12} = (\bara, \barw, 0, \barw)$ and $Q_{13} = (0, \barw, \bara, \barw)$ are the points of intersections $\overline{Q_1} \cap \overline{Q_2}$ and $\overline{Q_1} \cap \overline{Q_3}$, respectively.}
\end{figure}

\section{Preparing Lemmas and Propositions}\label{Section Preparing Lemmas and Propositions}

To prove the main results we shall do some preparation in this section. We will first show that, as aforementioned, the linear independence of neurons with a generic activation. Then we introduce separating inputs (Definition \ref{Defn Separating inputs}) based on how the choice of samples affect the rank of certain matrices. Then we present some basic results about the zero set of real analytic functions, some of which will be used in Section \ref{Subsection Separating inputs are almost everwhere}. After that, we summarize and rephrase the results about geometry of $Q^*$ given by \cite{BSimsek, KFukumizu}. \\

Let's start with a lemma about power series. 

\begin{lemma}[characterization of $\vep$-polynomial]\label{Lem Lin comb of stretched sigma and their derivatives}
    Let $\sum_{j=0}^\infty c_j \vep^j$ be a power series of real or complex coefficients $\{c_j\}_{j=0}^\infty$ such that for any $N \in \bN$ there are some odd number $j_{\mathrm{odd}} > N$ and even number $j_{\mathrm{even}} > N$ with $c_{j_{\mathrm{odd}}} \neq 0, c_{j_{\mathrm{even}}} \neq 0$. Then for any $m, l\in \bN$ and any distinct $p_1, ..., p_r \in \bR\cut\{0\}$, the power series in $\vep$:
    \[
        \sum_{j\geq l} c_j \sum_{k=1}^r \left[ \alpha_{0k} + j \alpha_{1k} + j(j-1)\alpha_{2k} + ... + \frac{j!}{(j-l)!} \alpha_{lk} \right] p_k^{j-l} \vep^{j-l}
    \]
    is a polynomial if and only if $\alpha_{tk} = 0$ for all $0 \le t \le l$ and $1 \le k \le r$. 
\end{lemma}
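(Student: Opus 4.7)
The ``if'' direction is trivial, so the interest lies in the ``only if''. Suppose the displayed series is a polynomial; then for all sufficiently large $j$ its coefficient of $\vep^{j-l}$ must vanish. Set
\[
Q_k(j) \;:=\; \alpha_{1k} + j\,\alpha_{2k} + \ldots + \tfrac{j!}{l!}\,\alpha_{lk},
\]
a polynomial in $j$ whose coefficients with respect to the basis of strictly increasing degrees are exactly $\alpha_{1k},\ldots,\alpha_{lk}$. Using the hypothesis that $c_j\neq 0$ for infinitely many \emph{even} $j$ and for infinitely many \emph{odd} $j$, I obtain
\[
\sum_{k=1}^{r} Q_k(j)\, p_k^{\,j-l} \;=\; 0
\]
for infinitely many even $j$ and for infinitely many odd $j$. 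The plan is first to show every $Q_k\equiv 0$, and then to extract $\alpha_{tk}=0$ from linear independence of the polynomial basis attached to $Q_k$.

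The central step is a dominant-term peeling argument, run separately on each parity class. Group the $p_k$'s by absolute value and let $\rho_1>\rho_2>\ldots>\rho_s>0$ be the distinct values of $|p_k|$. Factoring $p_k^{\,j-l}=\operatorname{sign}(p_k)^{j-l}|p_k|^{j-l}$, observe that along even $j$ the sign factor is the constant $\operatorname{sign}(p_k)^{l}$, while along odd $j$ it equals $\operatorname{sign}(p_k)^{l+1}$. On each parity class the identity therefore becomes $\sum_k \bigl(\pm Q_k(j)\bigr)|p_k|^{\,j-l}=0$. Dividing by $\rho_1^{\,j-l}$ and letting $j\to\infty$ along the relevant infinite subset, every term with $|p_k|<\rho_1$ decays geometrically while $Q_k(j)$ grows only polynomially, so those tails vanish. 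What remains is a polynomial in $j$ whose values tend to $0$ along an infinite sequence of integers, and so it vanishes identically. This yields
\[
\sum_{|p_k|=\rho_1} Q_k(j)\,\operatorname{sign}(p_k)^{l} \;\equiv\; 0, \qquad \sum_{|p_k|=\rho_1} Q_k(j)\,\operatorname{sign}(p_k)^{l+1} \;\equiv\; 0.
\]

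At most two indices $k$ satisfy $|p_k|=\rho_1$, corresponding to $p_k=\pm\rho_1$; the two displayed relations form a nonsingular $2\times 2$ Vandermonde-type system that forces $Q_k\equiv 0$ for each such $k$. Deleting those terms and iterating with $\rho_2,\rho_3,\ldots$ shows every $Q_k\equiv 0$. Because the polynomials that multiply $\alpha_{1k},\ldots,\alpha_{lk}$ in $Q_k(j)$ have strictly increasing degrees, they are linearly independent in $\bR[j]$, so $Q_k\equiv 0$ forces $\alpha_{tk}=0$ for every $t$ and $k$, as desired.

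The main obstacle is exactly the case $p_{k_1}=-p_{k_2}$: two distinct weights of equal magnitude cannot be separated by dominance alone. This is precisely why the odd/even nonvanishing hypothesis on $c_j$ is imposed --- it supplies two independent constraints (one per parity) that distinguish the contributions of $p_k$ and $-p_k$. Without it (say, if $c_j=0$ for every odd $j$), the statement would genuinely fail, which is also why in Assumption \ref{good activation} the coefficient pattern is imposed on both parities.
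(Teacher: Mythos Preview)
Your argument is correct and follows essentially the same dominant-term peeling idea as the paper: isolate the indices with maximal $|p_k|$, and use the two parity classes of $j$ to separate the contributions of $p$ and $-p$. The only organizational differences are that the paper argues by contradiction (assume some $\alpha_{tk}\neq 0$ and exhibit infinitely many nonzero coefficients) and handles the one-dominant and two-dominant cases separately, whereas you proceed directly, treat both parities uniformly via the $2\times 2$ sign system, and first deduce $Q_k\equiv 0$ before reading off $\alpha_{tk}=0$ from the increasing-degree basis; these are cosmetic rather than substantive differences.
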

\begin{proof}
    First note that for any sufficiently small $\vep$ the power series in question converges. Without loss of generality, assume that $|p_1| \geq ... \geq |p_r|$; in particular, $p_1$ has the largest absolute value among the $|p_k|$'s. We consider two cases. 
    \begin{itemize}
        \item [(a)] $|p_1| > |p_2|$. Assume that the $\alpha_{t1}$'s are not all zero; otherwise we work on a smaller $r$. Thus, there is a largest $t_1 \in \{0, ..., l\}$ with $\alpha_{t_1 1} \neq 0$. Then 
        \[
            \sum_{k=1}^r \left[ \alpha_{0k} + j \alpha_{1k} + ... + \frac{j!}{(j-l)!} \alpha_{lk} \right] p_k^{j-l} \sim p_1^{j-l} 
        \]
        as $j \to \infty$. By hypothesis, there is a subsequence $\{c_{j_s}\}_{s=1}^\infty \siq \bR(\bC) \cut \{0\}$ of $\{c_j\}_{j=1}^\infty$. Therefore, 
        \[
            c_{j_s} \sum_{k=1}^r \left[ \alpha_{0k} + j_s \alpha_{1k} + ... + \frac{j_s!}{(j_s - l)!} \alpha_{lk} \right] p_k^{j_s-l} \sim p_1^{j_s-l}
        \]
        as $s \to \infty$, which shows that the power series has infinitely many non-zero coefficients, whence not a polynomial. 

        \item [(b)] $|p_1| = |p_2|$. Because the $p_k$'s are distinct, we must have $p_1 + p_2 = 0$ and $|p_2| > |p_3|$. Assume that $t_1 \in \{0, ..., l\}$ is the largest number such that $\alpha_{t_1 1}$ or $\alpha_{t_1 2}$ is non-zero. Then, similar as (a) above, we have 
        \[
            \sum_{k=1}^r \left[ \alpha_{0k} + j \alpha_{1k} + ... + \frac{j!}{(j-l)!} \alpha_{lk} \right] p_k^{j-l} \sim \left(\alpha_{t_1 1} + (-1)^{j-l} \alpha_{t_1 2}\right) p_1^{j-l} 
        \]
        as $j \to +\infty$, provided that $\alpha_{t_1 1} + (-1)^{j-l}\alpha_{t_1 2} \ne 0$. By hypothesis, either i) $\alpha_{t_1 1} = 0$ or $\alpha_{t_2 1} = 0$, or both are non-zero, there is an odd or even sequence $\{j_s\} \siq \bN$ such that $\alpha_{t_1 1} + (-1)^{j_s - l}\alpha_{t_1 2} \ne 0$ and $c_{j_s} \ne 0$ for all $s \in \bN$. It follows that 
        \[
            c_{j_s} \sum_{k=1}^r \left[ \alpha_{0k} + j_s \alpha_{1k} + ... + \frac{j_s!}{(j_s - l)!} \alpha_{lk} \right] p_k^{j_s-1} \sim p_1^{j_s-1}
        \]
        as $s \to +\infty$, so the power series has infinitely many non-zero coefficients, whence not a polynomial. 
    \end{itemize}
    In either case, we have shown that $\alpha_{01} = \alpha_{11} = ... = \alpha_{l1} = 0$ must hold if $h$ is a polynomial. By repeating this procedure for $r$ times we can see that $\alpha_{tk} = 0$ for all $0 \le t \le l$ and $1 \le k \le r$. 
\end{proof}

\subsection{Linear Independence of Neurons}\label{Subsection Linear independence of neurons}

\begin{cor}[linear independence of neurons] \label{independent activation}
    Let $d$ be any positive integer. Given a real analytic function $\sigma: \bR \to \bR$, the following two statements about $\sigma$ are equivalent. 
    \begin{itemize}
        \item [(a)] $\sigma$ is a generic activation, namely, it satisfies Assumption \ref{generic activation}. 
        
        \item [(b)] For any $r > 0$ and any distinct vectors $w_1, ..., w_r \in \bR^d$, the functions $\sigma(w_1 \cdot x), ..., \sigma(w_r \cdot x)$ are linearly independent. 
    \end{itemize}
\end{cor}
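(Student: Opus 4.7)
The statement is an equivalence, so I plan to prove both directions. The forward direction $(a) \Rightarrow (b)$ is the substantive one and leans directly on Lemma \ref{Lem Lin comb of stretched sigma and their derivatives}; the reverse $(b) \Rightarrow (a)$ is by contrapositive, handled by three explicit constructions, one for each way Assumption \ref{good activation} can fail.

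For $(a) \Rightarrow (b)$, assume $\sigma$ is a good activation and $\sum_{k=1}^r \alpha_k \sigma(w_k \cdot x) = 0$ on a neighborhood of the origin in $\bR^d$ for distinct $w_1, \ldots, w_r$. The plan is to reduce to a one-dimensional power-series identity. I would pick $v \in \bR^d$ so that the scalars $p_k := w_k \cdot v$ are pairwise distinct and all nonzero; this is possible because the bad $v$'s form a finite union of hyperplanes. Substituting $x = \vep v$ (with $\vep$ small so convergence holds) and expanding $\sigma$ in its Taylor series yields
\[
0 \;=\; \sum_{j=0}^\infty c_j \vep^j \sum_{k=1}^r \alpha_k p_k^j,
\]
so matching coefficients gives $c_j \sum_k \alpha_k p_k^j = 0$ for every $j \geq 0$. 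Setting $\beta_k := \alpha_k p_k$, the power series $\sum_{j \geq 1} c_j \bigl(\sum_k \beta_k p_k^{j-1}\bigr) \vep^{j-1}$ has every coefficient zero, hence is the zero polynomial. Applying Lemma \ref{Lem Lin comb of stretched sigma and their derivatives} with $l = 1$ forces each $\beta_k = 0$, and since $p_k \neq 0$ I conclude $\alpha_k = 0$.

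For $(b) \Rightarrow (a)$, I would argue by contrapositive: if $\sigma$ fails to be a good activation, one of three conditions is violated, and in each case I exhibit an explicit nontrivial linear dependence. If $c_0 = 0$, then $\sigma(0) = 0$, so $r = 1$ with $w_1 = 0 \in \bR^d$ gives the dependent singleton $\{\sigma(0 \cdot x)\} = \{0\}$. If instead $c_j = 0$ for all even $j$ beyond some $N$, the even part $\sigma_{\mathrm{even}}(y) := \tfrac{1}{2}(\sigma(y) + \sigma(-y))$ is a polynomial of degree $\leq N$; fix a nonzero $e \in \bR^d$ and $N+2$ distinct positive reals $p_1, \ldots, p_{N+2}$, so the vectors $\{\pm p_k e\}_k$ are distinct. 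Choosing symmetric coefficients $\alpha_k = \alpha_{N+2+k}$ cancels the odd contributions and reduces the combination to $\sum_{k=1}^{N+2} 2\alpha_k \sigma_{\mathrm{even}}(p_k\, e \cdot x)$; these are $N+2$ polynomials of degree $\leq N$ in the scalar $e \cdot x$, hence live in an $(N+1)$-dimensional space and must satisfy a nontrivial linear relation. The symmetric case where $c_j = 0$ for all odd $j$ beyond some $N$ is handled identically, using antisymmetric coefficients $\alpha_k = -\alpha_{N+2+k}$ and the polynomial odd part $\sigma_{\mathrm{odd}}$.

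The only nontrivial step is the forward direction, and Lemma \ref{Lem Lin comb of stretched sigma and their derivatives} does the heavy lifting. The two things to get right are the dimension count that produces a suitable $v$, and the bookkeeping $\beta_k = \alpha_k p_k$ that brings the identity into exactly the form the lemma expects; the reverse direction is essentially immediate once $\sigma$ is split into its even and odd parts.
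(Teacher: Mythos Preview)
Your proposal is correct and mirrors the paper's approach in both directions: the forward implication reduces to a one-dimensional power-series identity along a generic direction and invokes Lemma~\ref{Lem Lin comb of stretched sigma and their derivatives}, while the converse proceeds by contrapositive via the even/odd decomposition of~$\sigma$.

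One small slip worth flagging: your claim that $v$ can be chosen so that all $p_k = w_k\cdot v$ are nonzero fails if one of the $w_k$ is the zero vector, since then $\{v: w_k\cdot v = 0\} = \bR^d$ is not a proper hyperplane. The fix is immediate---handle that single neuron separately using the $j=0$ coefficient identity $c_0\sum_k\alpha_k=0$ once the lemma has killed the remaining $\alpha_k$---and in fact the paper's own proof glosses over the same point (it only arranges the $p_k$ to be pairwise distinct before applying the lemma, which formally requires them nonzero). Your treatment of the converse is actually slightly more complete than the paper's: you explicitly dispose of the case $c_0 = 0$, which the paper's proof omits.
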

\begin{proof}
First suppose that (a) holds. Let $1 \leq k < j \leq r$. The set 
\begin{equation}
    A_{k,j} = \{x \in \bR^d: \<x, w_k - w_j\> = 0\}
\end{equation} 
is a subspace of dimension $d-1$, whence $\bigcup_{1\leq k < j \leq r} A_{k,j}$ has $\lambda_d$-measure zero. This, together with linearity, implies that we can find some $e \in \partial B(0,1) \siq \bR^d$ with $p_k := \<w_k, e\> \neq \<w_j, e\> =: p_j$ whenever $k\neq j$. For any $|\vep| < (\max_j |p_j|)^{-1} R$ and any $k$ we have 
\begin{equation*}
    \sigma(w_k\cdot \vep e) = \sum_{j=0}^\infty c_j \sigma(w_k \cdot \vep e) = \sum_{j=0}^\infty (c_j p_k^j) \vep^j. 
\end{equation*} 
Now let $\alpha_1, ..., \alpha_r$ be constants such that $x \mapsto \sum_{k=1}^r \alpha_k \sigma(w_k \cdot x)$ is a zero map. It follows that for $|\vep| < (\max_j |p_j|)^{-1} R$, 
\begin{equation}\label{eq 1 for neuron linear independence}
\begin{aligned}
    \sum_{k=1}^r \alpha_k \sigma(w_k \cdot \vep e) 
    &= \sum_{k=1}^r \alpha_k \sum_{j=0}^\infty (c_j p_k^j) \vep^j \\ 
    &= \sum_{j=0}^\infty c_j \left( \sum_{k=1}^r \alpha_k p_k^j \right) \vep^j \\
    &= \sum_{j=1}^\infty c_{j-1} \left( \sum_{k=1}^r \alpha_k p_k^{j-1} \right) \vep^{j-1} = 0. 
\end{aligned}
\end{equation} 
Since $\sigma$ is a generic activation, the sequence $\{c_{j-1}\}_{j=1}^\infty$ satisfies the hypothesis of Lemma \ref{Lem Lin comb of stretched sigma and their derivatives}. Since the $p_k$'s are also distinct, by Lemma \ref{Lem Lin comb of stretched sigma and their derivatives} we have $\alpha_1 = ... = \alpha_r = 0$. Therefore, $\sigma(w_1 \cdot x), ..., \sigma(w_r \cdot x)$ are linearly independent. 

Conversely, assume that (a) does not hold. First suppose there is some $N \in \bN$ such that for any odd $j > N$ we have $c_j = 0$. Then $\sigma$ is the sum of a polynomial and an even function. Let $\tilde{\sigma}: \bR \to \bR$ be defined by $\tilde{\sigma}(x) = \sigma(x) - \sigma(-x)$, so $\tilde{\sigma}$ is a polynomial of degree at most $N$. Thus, the dimension of $\Spans{\tilde{\sigma}(\tw_1\cdot x), ..., \tilde{\sigma}(\tw_{r'}\cdot x)}$ is bounded by $(N+1)^d$. This implies that for any $r > 2(N+1)^d$ the functions 
\begin{equation*}
    \sigma(w_1 \cdot x), \sigma(-w_1 \cdot x), ..., \sigma(w_r\cdot x), \sigma(-w_r \cdot x) 
\end{equation*} 
can never be linearly independent. Similarly, if there is some $N \in \bN$ such that for any even $j > N$ we have $c_j = 0$. Then $\sigma$ is the sum of a polynomial and an odd function. Let $\tilde{\sigma}: \bR \to \bR$ be defined by $\tilde{\sigma}(x) = \sigma(x) + \sigma(-x)$, so $\tilde{\sigma}$ is a polynomial. Thus, for sufficiently large $r$ the functions 
\begin{equation*}
    \sigma(w_1 \cdot x), \sigma(-w_1 \cdot x), ..., \sigma(w_r \cdot x), \sigma(-w_r \cdot x)
\end{equation*} 
can never be linearly independent. 
\end{proof}
\begin{remark}
    In fact, the proof of Corollary \ref{independent activation} shows that if $\sigma$ is analytic and not a polynomial, $\sigma(w_1\cdot x), ..., \sigma(w_r\cdot x)$ are linearly independent for any $r \in \bN$, whenever the distinct $w_1, ..., w_r \in \bR^d$ satisfy $w_k + w_j \neq 0$ for all $1 \leq k,j \leq r$. Similarly, Corollary \ref{Cor separating sample II} hold under similar requirements. In particular, these results hold for $\sigma(x) = \frac{1}{1+e^{-x}}$, $\sigma(x) = \tanh(x)$ or $\sigma(x) = \log(1 + e^{-x})$. Interestingly, we also observe that current analysis of loss landscapes of neural network focus on polynomial and non-polynomial activations separately. For example, Venturi \cite{LVenturi} has shown that for sufficiently wide one-hidden-layer neural network with polynomial activation, the corresponding loss landscape has no spurious valley, while in \cite{BSimsek} the analysis of critical points are valid only for neural networks with certain non-polynomial activations. 
\end{remark}

Corollary \ref{independent activation} proves the (linear) neuron independence of analytic neurons, which is the main object we concern in this paper. For completeness, we also present a version of neuron independence result without requiring the neurons to be analytic. Instead, it considers other important properties of an activation function, which we hope could be of its own interest. More precisely, we make the following assumption. 

\begin{asmp}\label{generic activation - aleternative}
    Assume that $\sigma: \bR \to \bR$ is an $s$-times continuously differentiable function ($s \geq 0$) with the following properties:
    \begin{itemize}
        \item [(a)] (rapid decreasing) $\frac{|\sigma^{(s)}(y)|}{|\sigma^{(s)}(x)|} \to +\infty$ as $|x|, |y| \to +\infty$ and $|x| - |y| \to +\infty$. 
        \item [(b)] (non-asymptotic symmetry) There is some $c>1$ such that either
        \[
            \varliminf_{x\to +\infty} \frac{|\sigma^{(s)}(x)|}{|\sigma^{(s)}(-x)|} \geq c
        \]
        or 
        \[
            \varliminf_{x\to-\infty} \frac{|\sigma^{(s)}(x)|}{|\sigma^{(s)}(-x)|} \geq c.
        \]
    \end{itemize}
\end{asmp}
\begin{remark}
    Notice that the rapid decreasing property of $\sigma^{(s)}$ requires that $\sigma^{(s)}(x) \neq 0$ for $|x|$ sufficiently large. Moreover, this property implies that $\lim_{x\to +\infty} \sigma^{(s)}(x) = 0$. To see this, let $\{x_n\}_{n=1}^\infty$ be a sequence with $\limftyn |x_n| = +\infty$. By passing to a subsequence of $\{x_n\}_{n=1}^\infty$, we can assume that $\limftyn (|x_{n+1}| - |x_n|) = +\infty$ as well. Given $A > 1$, any sufficiently large $N \in \bN$ gives 
    \[
        \frac{|\sigma^{(s)}(x_N)|}{|\sigma^{(s)}(x_{N+1})|} \geq A.
    \]
    Thus, 
    \[
        |\sigma^{(s)}(x_N)| \geq A |\sigma^{(s)}(x_{N+1})| \geq ... \geq A^n |\sigma^{(s)}(x_{N+n})|. 
    \]
    Since $A > 1$, $\limftyn A^n = +\infty$, which shows that $\limftyn \sigma^{(s)}(x_n) = \limftyn \sigma^{(s)}(x_{N+n}) = 0$. \\ 
    
    We shall give some examples to illustrate the two properties in Assumption \ref{generic activation - aleternative}. For the rapid decreasing property we have the following examples, which takes the commonly-seen activations into consideration.
    \begin{itemize}
        \item [(a)] $\sigma(x) = \exp(-x^2)$ with $s = 0$. Note that for any $x, y \in \bR$, $\sigma(y)/\sigma(x) = \exp(x^2 - y^2)$, which obviously tends to infinity as $|x| - |y|$ tends to infinity.
    
        \item [(b)]  $\sigma(x) = \exp(-|x|)$ with $s = 0$. For any $x, y \in \bR$, $\sigma(y)/\sigma(x) = \exp(|x| - |y|)$, which obviously tends to infinity as $|x| - |y|$ tends to infinity.
    
        \item [(c)] $\sigma(x) = \frac{1}{1 + e^{-x}}$ with $s = 1$. Recall that $\sigma'(x) = \frac{e^{-x}}{(1 + e^{-x})^2}$. Thus, In other words, $\lim_{x\to\pm\infty} \frac{\sigma'(x)}{e^{-|x|}} = 1$. Then by (b) we see that for any $|x|, |y|$ sufficiently large, $\sigma'(y)/\sigma'(x) \approx \exp(|x| - |y|)$, which also tends to infinity as $|x| - |y|$ tends to infinity.
    
        \item [(d)] $\sigma(x) = \log (1 + e^x)$ with $s = 2$. This is because $\sigma'(x) = \frac{1}{1 + e^{-x}}$. Similarly, $\sigma(x) = \tanh(x)$ also decreases rapidly.
    \end{itemize}
    It is easy to see that in any case above, $\sigma^{(s)}$ does not satisfy the non-asymptotic symmetry property, because $\sigma^{(s)}$ is an even function. However, in any case a horizontal shift of $\sigma^{(s)}$ by $b \in \bR$, $x \mapsto \sigma^{(s)}(x - b)$, which is just the $s$-th derivative of $\sigma(\cdot -b)$, satisfies both the rapid decreasing property and the non-asymptotic symmetry property. \\
\end{remark}

\begin{prop}\label{independent activation - alternative}
    Let $d$ be a positive integer. Let $\sigma$ satisfy Assumption \ref{generic activation - aleternative}. For any $r > 0$ and any distinct vectors $w_1, ..., w_r \in \bR^d\cut\{0\}$, the functions $\sigma(w_1\cdot x), ..., \sigma(w_r\cdot x)$ are linearly independent. 
\end{prop}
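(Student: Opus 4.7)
The plan is to imitate the reduction in Corollary \ref{independent activation} and then replace its power-series step by asymptotic analysis driven by Assumption \ref{good activation - aleternative}. First I would pick a unit vector $e \in \partial B(0,1) \subseteq \bR^d$ such that the scalars $p_k := \langle w_k, e \rangle$ are pairwise distinct and all nonzero; the bad $e$'s, for which some $\langle w_k, e\rangle$ or $\langle w_k - w_j, e\rangle$ vanishes, form a finite union of great circles on $\partial B(0,1)$, and each such hyperplane is proper since $w_k \neq 0$ and the $w_k$ are distinct. If $\sum_{k=1}^r \alpha_k \sigma(w_k \cdot x) = 0$ for every $x \in \bR^d$, then restricting to $x = t e$ and differentiating $s$ times in $t$ produces
\[
\sum_{k=1}^r \beta_k\, \sigma^{(s)}(p_k t) = 0 \quad \text{for all } t \in \bR, \qquad \beta_k := \alpha_k\, p_k^s.
\]
Since each $p_k \neq 0$, showing all $\beta_k$ vanish is equivalent to showing all $\alpha_k$ vanish.

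Next I would argue by induction on the number of surviving indices. Let $k_0$ minimize $|p_{k_0}|$ among them. By Assumption \ref{good activation - aleternative}(a), $\sigma^{(s)}(p_{k_0} t) \neq 0$ for $|t|$ sufficiently large, so I divide the identity through by $\sigma^{(s)}(p_{k_0} t)$. For every remaining $k$ with $|p_k| > |p_{k_0}|$, the choices $x = p_k t$ and $y = p_{k_0} t$ make $|x|, |y|$ and $|x|-|y|$ tend to $+\infty$ with $|t|$, so the rapid-decreasing property yields $|\sigma^{(s)}(p_k t)/\sigma^{(s)}(p_{k_0} t)| \to 0$. If $k_0$ is the \emph{unique} index attaining the minimum magnitude, letting $|t| \to \infty$ in either direction forces $\beta_{k_0} = 0$, and the induction proceeds with one fewer term.

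The remaining possibility is that exactly one other index $k_1$ satisfies $|p_{k_1}| = |p_{k_0}|$; distinctness of the $p_k$'s then forces $p_{k_1} = -p_{k_0}$. After dividing by $\sigma^{(s)}(p_{k_0} t)$ the identity reduces to
\[
\beta_{k_0} + \beta_{k_1}\, \rho(t) + o(1) = 0 \quad \text{as } |t| \to \infty, \qquad \rho(t) := \frac{\sigma^{(s)}(-p_{k_0} t)}{\sigma^{(s)}(p_{k_0} t)}.
\]
If $\beta_{k_1} \neq 0$, this forces $\rho(t)$ to converge to the \emph{same} finite value $-\beta_{k_0}/\beta_{k_1}$ both as $t \to +\infty$ and as $t \to -\infty$. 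On the other hand, substituting $x = p_{k_0} t$ into Assumption \ref{good activation - aleternative}(b) (in whichever of its two variants is assumed), one of the one-sided limits of $|\rho(t)|$ is bounded above by $1/c$ while the other is bounded below by $c$. Since $c > 1$ these two bounds are incompatible with a common limit, so $\beta_{k_1} = 0$; substituting back, the single-magnitude case gives $\beta_{k_0} = 0$ and the induction closes.

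The hard part will be this paired case, because Assumption \ref{good activation - aleternative}(b) supplies only one-sided $\liminf$ information, not a pointwise asymptotic, so one cannot naively discard the $\rho(t)$ term. The trick that makes the argument work is that the linear identity itself upgrades $\rho(t)$ to a genuine two-sided limit once $\beta_{k_1} \neq 0$, at which point the prescribed asymmetry between $\pm\infty$ becomes fatal. Orienting signs (which variant of (b) holds, and the sign of $p_{k_0}$) only determines which end of $\bR$ sees the $\leq 1/c$ bound and which sees the $\geq c$ bound, so one may swap $k_0 \leftrightarrow k_1$ if convenient.
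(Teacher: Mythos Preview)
Your proposal is correct and follows essentially the same strategy as the paper: reduce to a one-dimensional relation along a generic direction $e$, differentiate $s$ times, and then peel off indices one at a time using the rapid-decrease property, handling the paired case $p_{k_1}=-p_{k_0}$ via the non-asymptotic symmetry hypothesis. The only notable difference is that you divide by the term of \emph{smallest} magnitude $|p_{k_0}|$, whereas the paper (as written) orders with the largest magnitude first and divides by $\sigma^{(s)}(p_1\varepsilon)$; under Assumption~\ref{good activation - aleternative}(a) it is your ordering that actually makes the remaining ratios tend to $0$, so your version is the cleaner implementation of the same idea.
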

\begin{proof}
    We prove in a similar way as that of Corollary \ref{independent activation}. As we have shown above, there is some $e \in \partial B(0,1) \siq \bR^d$ with $p_k := \<w_k, e\> \neq \<w_j, e\> =: p_j \neq 0$ whenever $k, j \in \{1, ..., r\}$ are different. By rearranging the indices if necessary, we can assume that $|p_1| \geq |p_2| \geq ... \geq |p_r| > 0$. 
    
    Let $\alpha_1, ..., \alpha_r$ be constants such that $x \mapsto \sum_{k=1}^r \alpha_k \sigma(w_k\cdot x)$ is a zero map. In particular, this means 
    \[
        g: \bR \to \bR, g(\vep) = \sum_{k=1}^r \alpha_k \sigma((w_k\cdot e)\vep) = \sum_{k=1}^r \alpha_k \sigma(p_k \vep)
    \] 
    is a zero map. Then the $s$-th derivative (when $s = 0$, it is just $g$ itself) of $g$ in $\vep$ is given by 
    \[
        g^{(s)}(\vep) = \sum_{k=1}^r \alpha_k p_k^s \sigma^{(s)}(p_k \vep) = 0.
    \]
    We start by showing that $\alpha_1 = 0$. When $r = 1$ this clearly holds. When $r \geq 2$, we consider two cases. 
    \begin{itemize}
        \item [i)]  $|p_1| > |p_k|$ for all $2 \leq k \leq r$. Then $p_1^s \neq 0$. For $\sigma(p_1 \vep) \neq 0$ (which holds when $|\vep|$ is sufficiently large), we can rewrite $g^{(s)}(\vep)$ as 
        \[
            \alpha_1 = - \sum_{k=2}^r \alpha_k \left(\frac{p_k}{p_1}\right)^s \frac{\sigma^{(s)}(p_k\vep)}{\sigma^{(s)}(p_1\vep)}. 
        \]
        For any $2 \leq k \leq r$, $|p_1| > |p_k|$, so $|p_1\vep|, |p_k\vep|$ and $|p_1\vep| - |p_k\vep|$ all tend to infinity as $\vep \to \pm\infty$. Therefore, using the rapid decreasing property of $\sigma^{(s)}$ we have 
        \[
            \lim_{\vep \to \pm\infty} \frac{\sigma^{(s)}(p_k\vep)}{\sigma^{(s)}(p_1\vep)}. 
        \]
        In particular, it follows that as $\vep \to +\infty$, 
        \[
            \alpha_1 = \lim_{\vep \to +\infty} \alpha_1 = - \sum_{k=2}^r \alpha_k \left(\frac{p_k}{p_1}\right)^s \lim_{\vep \to +\infty} \frac{\sigma^{(s)}(p_k\vep)}{\sigma^{(s)}(p_1\vep)} = 0. 
        \]
        \item [ii)] $|p_1| = |p_2|$. Then we must have $p_1 = - p_2 \neq 0$ and $|p_1| = |p_2| > |p_k|$ for all $k \neq 1,2$. Again, for $\sigma(p_1 \vep) \neq 0$, we can rewrite $g^{(s)}(\vep)$ as 
        \begin{align*}
            \alpha_1 
            &= -\alpha_2 \left( \frac{p_2}{p_1} \right)^s \frac{\sigma^{(s)}(p_2\vep)}{\sigma^{(s)}(p_1\vep)} - \sum_{k>2} \alpha_k \left(\frac{p_k}{p_1}\right)^s \frac{\sigma^{(s)}(p_k\vep)}{\sigma^{(s)}(p_1\vep)} \\ 
            &= (-1)^{s+1} \alpha_2 \frac{\sigma^{(s)}(-p_1\vep)}{\sigma^{(s)}(p_1\vep)} - \sum_{k>2} \alpha_k \left(\frac{p_k}{p_1}\right)^s \frac{\sigma^{(s)}(p_k\vep)}{\sigma^{(s)}(p_1\vep)}, 
        \end{align*}
        where we use $p_1 = -p_2$. Same as in i), letting $\vep \to \pm\infty$, we still have 
        \[
            \lim_{\vep \to \pm\infty} \sum_{k>2} \alpha_k \left(\frac{p_k}{p_1}\right)^s \frac{\sigma^{(s)}(p_k\vep)}{\sigma^{(s)}(p_1\vep)} = 0. 
        \]
        For the other term, we use the non-asymptotic symmetry property of $\sigma^{(s)}$. Without loss of generality, assume that $\varliminf_{x\to +\infty} \frac{|\sigma^{(s)}(-p_1\vep)|}{|\sigma^{(s)}(p_1\vep)|} \geq c$ for some $c > 1$. Then $\varlimsup_{x\to-\infty} \frac{|\sigma^{(s)}(-p_1\vep)|}{|\sigma^{(s)}(p_1\vep)|} \leq c^{-1}$. These two inequalities yield lower and upper bounds for $\alpha_1$: 
        \begin{align*}
            |\alpha_1| 
            &= \left| \varliminf_{x\to +\infty} (-1)^{s+1} \alpha_2 \frac{\sigma^{(s)}(-p_1\vep)}{\sigma^{(s)}(p_1\vep)} - \varliminf_{x\to +\infty} \sum_{k>2} \alpha_k \left(\frac{p_k}{p_1}\right)^s \frac{\sigma^{(s)}(p_k\vep)}{\sigma^{(s)}(p_1\vep)} \right| \\ 
            &= |\alpha_2| \varliminf_{x\to +\infty} \frac{|\sigma^{(s)}(-p_1\vep)|}{|\sigma^{(s)}(p_1\vep)|} \\ 
            &\geq c|\alpha_2|
        \end{align*}
        and similarly, 
        \[
            |\alpha_1| = \left| \varlimsup_{x\to-\infty} (-1)^{s+1} \alpha_2 \frac{\sigma^{(s)}(-p_1\vep)}{\sigma^{(s)}(p_1\vep)} - \varlimsup_{x\to-\infty} \sum_{k>2} \alpha_k \left(\frac{p_k}{p_1}\right)^s \frac{\sigma^{(s)}(p_k\vep)}{\sigma^{(s)}(p_1\vep)} \right| \leq c^{-1}|\alpha_2|. \\ 
        \]
        Unless $\alpha_1 = \alpha_2 = 0$, we would get a contradiction. 
    \end{itemize}
    In either case, we show that $\alpha_1 = 0$. By repeating this argument (at most) $r$ times, we can show that $\alpha_1 = ... = \alpha_r = 0$ and thus the linear independence of $\sigma(w_1\cdot x), ..., \sigma(w_r\cdot x)$ follows. 
\end{proof}

Note that the non-asymptotic symmetry property of $\sigma^{(s)}$ is only used to deal with $p_k = -p_j$ for some $k, j\in \{1, ..., r\}$ in the proof, which is unavoidable only when $w_k + w_j = 0$ for some $w_k, w_j \neq 0$. Thus, if $w_1, ..., w_r \in \bR^d\cut\{0\}$ are distinct vectors satisfying $w_k + w_j \neq 0$ for all $1 \leq k,j \leq r$, any $\sigma \in C^s$ which has the rapid decreasing property would give the linear independence result. By our examples above, this holds for lots of commonly seen neurons, including $\sigma(x) = \frac{1}{1+e^{-x}}$, $\sigma(x) = \tanh(x)$, $\sigma(x) = \exp(-x^2)$, $\sigma(x) = \exp(-|x|)$ and $\sigma(x) = \log(1 + e^x)$. \\

\begin{remark}[proof techniques]
    In both cases (analytic and non-analytic), we use two important properties of neurons to show their linear independence. The first one is that for $w\in \bR^d$ the mapping $\bR^d \ni x \mapsto w\cdot x$ reduces high-dimensional problems to 1-dimensional ones, as it induces a mapping $\bR \ni \vep \mapsto w\cdot(\vep x)$. Therefore, many 1-dimensional results of neurons can be applied to higher-dimensional cases. Another technique is to use the fact that distinct weights of neurons, no matter how small the difference is, could distinguish them with significant different behaviors. In Proposition \ref{independent activation}, we observe this by taking higher derivative, while in Proposition \ref{independent activation - alternative}, we observe this by comparing their asymptotic behaviors at $\pm\infty$. The treatment of $w_k + w_j = 0$ for some $k,j$ is more technical, and we make different assumptions in dealing with it: indeed, the function 
    \[
        \sigma(x) = e^{-x^2} + \frac{d^2}{dx^2}\frac{1}{1+e^{-x}} 
    \] 
    satisfies Assumption \ref{generic activation} and is rapidly decreasing, but $\lim_{x\to\pm\infty} \sigma(x)/\sigma(-x) = -1$. 
\end{remark}

We then study the rank of some special matrices related to $\sigma$. Recall the loss function $R(\theta) = \sum_{i=1}^n |\sum_{k=1}^m a_k \sigma(w_k\cdot x_i) - f(x_i)|^2$. For each $1 \le i \le n$, the second-order partial derivative of $|\sum_{k=1}^m a_k \sigma(w_k\cdot x_i) - f(x_i)|^2$ is given by the tensor product $v_{i, \sigma}^\TT v_{i, \sigma}$ where 
\begin{align*}
    v_{i,\sigma} := (\sigma(w_1\cdot x_i), ..., \sigma(w_m\cdot x_i), \sigma'(w_1\cdot x_i)x_i^\TT, ..., \sigma'(w_m\cdot x_i)x_i^\TT). 
\end{align*}
Since $R$ is a sum of the $|\sum_{k=1}^m a_k \sigma(w_k\cdot x_i) - f(x_i)|^2$'s, it motivates us to study the matrices whose rows are these vectors $v_{i,\sigma}$ or parts of them.

\begin{cor}[separating inputs]\label{Cor separating sample II} 
    Let $\sigma$ be a generic activation. For any $r \in \bN$ and any distinct $w_1, ..., w_r \in \bR^d \cut \{0\}$, there are $x_1, ..., x_{(d+1)r}$ such that the matrix
    \begin{equation*}
        \begin{pmatrix}
            \sigma(w_1\cdot x_1) &... &\sigma(w_r\cdot x_1) &\sigma'(w_1\cdot x_1)x_1^\TT &... &\sigma'(w_r\cdot x_1)x_1^\TT \\ 
            \vdots               &\ddots &\vdots            &\vdots                       &\ddots   &\vdots \\ 
            \sigma(w_1\cdot x_{(d+1)r}) &... &\sigma(w_r\cdot x_{(d+1)r}) &\sigma'(w_1\cdot x_{(d+1)r})x_{(d+1)r}^\TT &... &\sigma'(w_r\cdot x_{(d+1)r})x_{(d+1)r}^\TT
        \end{pmatrix}
    \end{equation*}
    has full rank. 
\end{cor}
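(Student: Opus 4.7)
The proof will follow the same inductive strategy as that of Corollary \ref{Cor Separating sample I}: I plan to construct the samples $x_1, ..., x_{(d+1)r}$ one at a time, and at the inductive step to find $x_j$ whose corresponding row lies outside the span of the rows already chosen. Passing to the orthogonal complement of those rows, this reduces to proving the following key claim: for any non-zero $(b_1, ..., b_r, c_1, ..., c_r) \in \bR^r \times (\bR^d)^r$, the analytic function
\[
\phi(x) := \sum_{k=1}^r b_k \sigma(w_k \cdot x) + \sum_{k=1}^r \sigma'(w_k \cdot x)(c_k \cdot x)
\]
is not identically zero on $\bR^d$.

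To establish this I would restrict $\phi$ to a line $\vep \mapsto \vep e$ in a generic direction $e \in \partial B(0,1)$, chosen (exactly as in the proofs of Corollary \ref{independent activation} and Corollary \ref{Cor Separating sample I}) so that the real numbers $p_k := \<w_k, e\>$ are pairwise distinct and non-zero. Substituting the power series for $\sigma$ and $\sigma'$ and collecting terms in $\vep$ yields
\[
\phi(\vep e) = c_0 \sum_{k=1}^r b_k + \vep \sum_{j \ge 1} c_j \sum_{k=1}^r \bigl( b_k p_k + j(c_k \cdot e) \bigr) p_k^{j-1} \vep^{j-1}.
\]
The inner series has precisely the form appearing in Lemma \ref{Lem Lin comb of stretched sigma and their derivatives} with $l = 1$, $\alpha_{1k} = b_k p_k$, and $\alpha_{2k} = c_k \cdot e$. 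Assuming $\phi(\vep e) \equiv 0$, this series is trivially a polynomial, so the lemma forces $b_k p_k = 0$ and $c_k \cdot e = 0$ for every $k$. Combined with $p_k \ne 0$, the first equation gives $b_k = 0$; and since the admissible directions $e$ form a dense open subset of $\partial B(0,1)$, the relation $c_k \cdot e = 0$ on a dense set then gives $c_k = 0$ by continuity, contradicting the non-zero assumption on $(b, c)$.

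Once the key claim is established, the induction step is immediate: since $\phi$ is a non-zero analytic function on $\bR^d$, its zero set has empty interior, so we can pick $x_j$ avoiding it and making the $j$-th row linearly independent from the previous ones. After $(d+1)r$ iterations the matrix is square of full rank. The main obstacle I foresee is the corner case in which some $w_k$ vanishes, which makes $p_k = 0$ on every line through the origin and prevents a direct application of the lemma; in that situation the contribution $b_k \sigma(0) + \sigma'(0)(c_k \cdot x)$ from that particular neuron is affine in $x$, and I would need to handle it separately by peeling off the constant and linear parts of the expansion and arguing that such affine contributions cannot cancel the genuinely transcendental contributions coming from the remaining (non-zero) weights.
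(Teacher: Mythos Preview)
Your proposal is correct and follows essentially the same route as the paper: an induction on the number of rows, with the inductive step reduced (via an orthogonal vector $b$) to showing that a fixed non-trivial combination $\phi$ cannot vanish identically, proved by restricting to a line $\vep e$, expanding in power series, and invoking Lemma~\ref{Lem Lin comb of stretched sigma and their derivatives}. The only cosmetic differences are that the paper fixes $b$ first and then \emph{perturbs} a single $e$ to force $\langle b_k^1,e\rangle\neq 0$, whereas you let $e$ range over the dense admissible set and use continuity to kill all $c_k$; and that your ``$l=1$'' should read $l=2$ in the lemma's indexing (two coefficient families $\alpha_{1k},\alpha_{2k}$), a slip the paper itself commits. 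Your flagging of the $w_k=0$ corner case is apt---the paper's proof also tacitly assumes all $p_k\neq 0$ when invoking the lemma---and your proposed fix (peel off the affine contribution from the zero weight) is adequate.
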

\begin{proof}
For simplicity, denote $v_{i, \sigma}$ as the vector $(\sigma(w_1\cdot x_i), ..., \sigma(w_r\cdot x_i), \sigma'(w_1\cdot x_i)x_i^\TT, ..., \sigma'(w_r\cdot x_i)x_i^\TT)$, so $v_{i, \sigma} \in \bR^r \times \bR^{rd}$ is the $i$-th row of the matrix in question. We shall prove the result by inductively showing that there are $x_1, ..., x_{(d+1)r}$ such that for every $1 \le i \le (d+1)r$, the matrix $(v_{1, \sigma}^\TT, ..., v_{i, \sigma}^\TT)^\TT$ has rank $i$.  

Since $\sigma(0) \neq 0$, we can select a sufficiently small $x_1$ not orthogonal to $w_1$ so that $\sigma(w_1 \cdot x_1) \neq 0$. This proves the desired result for $i = 1$. Suppose the result holds for some $i-1$, where $2 \le i \le (d+1)r$. Fix $x_1, ..., x_{i-1}$. To find $x_i$ we do the following. First, choose $e \in \bR^d$ with the three properties below: 
\begin{itemize}
    \item [(a)] $p_j := \<w_j, e\> \neq \<w_i, e\> =: p_i$ for all distinct $i,j \in \{1, ..., r\}$. 

    \item [(b)] $p_1, ..., p_r \ne 0$. 

    \item [(c)] For any non-zero vector $b := ((b_1^0, ..., b_r^0), (b_1^1, ..., b_r^1)) \in \bR^r \times \bR^{rd}$ in the orthogonal complement of $\{v_{j, \sigma}:1 \le j < i\}$, we have that $b_1^0, ..., b_r^0, \<b_1^1, e\>, ..., \<b_1^r, e\>$ are not all zero. 
\end{itemize}
Note that this holds for almost all $e \in \bR^d$. 

We claim that $\vep \mapsto \sum_{k=1}^r b_k^0 \sigma(\vep p_j) + \sum_{j=1}^r \<b_j^1, e\> \sigma'(\vep p_j)\vep $ is not constant zero. Thus, for almost all $\vep$, by setting $x_i := \vep e$ we would have $v_{i,\sigma} \notin \Spans{v_{j, \sigma}: 1 \le j < i}$, and thus the matrix $(v_{1, \sigma}^\TT, ..., v_{i, \sigma}^\TT)^\TT$ has rank $i$. So suppose that this function is constant zero; equivalently, 
\begin{align*}
    &\sum_{k=1}^r b_k^0 \sigma(\vep p_k) + \sum_{k=1}^r \<b_k^1, e\> \sigma'(\vep p_k) \vep \\    
    =:&\, c_0 \sum_{k=1}^r b_k^0 + \sum_{j\geq 1} c_j \sum_{k=1}^r [\alpha_{1k} + j\alpha_{2k}] p_k^{j-1} \vep^j \\ 
    =&\, c_0 \sum_{k=1}^r b_k^0 + \vep \left( \sum_{j\ge 1} c_j \sum_{k=1}^r [\alpha_{1k} + j\alpha_{2k}] p_k^{j-1} \vep^{j-1} \right) \\ 
    =&\, 0, 
\end{align*}
where for each $1 \le k \le m$, 
\[
    \alpha_{1k} = b_k^0 p_k, \quad \alpha_{2k} = \<b_k^1, e\>. 
\]
Similar as in the proof of Corollary \ref{independent activation}, we may apply Lemma \ref{Lem Lin comb of stretched sigma and their derivatives} to see that $\sum_{k=1}^r \beta_k = 0$ and $\alpha_{1k} = \alpha_{2k} = 0$ for all $1 \le k \le m$. But this implies that $b_k^0 = \<b_k^1, e\> = 0$ for all $k$, contradicting our construction of $e$. Thus, this function is not constant zero, completing the induction step. 
\end{proof}

Inspired by the lemma above, we define separating inputs below. As we will see in Section \ref{Section Loss landscape near Minfty}, separating inputs allows us to determine the rank of Hessian of $R$ on $\Rmuzero$, and thus help us study the geometry of $R$ (e.g., separation of branches, see also Definition \ref{fragmentation/stratification of M^infty} for a description of geometry of branches) near its perfect global minima $Q^*$. This in turn guarantees that locally any gradient flow finds a solution with zero generalization error. We will prove this in Section \ref{Section Dynamics of gradient flow near Minfty}.  

\begin{defn}[separating inputs]\label{Defn Separating inputs}
    Let $r \in \{1, ..., m\}$. Given $n \le (d+1)m$ and distinct $w_1, ..., w_r \in \bR^d \cut \{0\}$, we call $\{x_i\}_{i=1}^n$ separating inputs (for $w_1, ..., w_r$) if the matrix 
    \begin{equation}\label{eq 1 of Defn Sepaating inputs}
        \begin{pmatrix}
            \sigma(w_1\cdot x_1) &... &\sigma(w_r\cdot x_1) &\sigma'(w_1\cdot x_1)x_1^\TT &... &\sigma'(w_r\cdot x_1)x_1^\TT \\ 
            \vdots               &\ddots &\vdots            &\vdots                       &\ddots   &\vdots \\ 
            \sigma(w_1\cdot x_n) &... &\sigma(w_r\cdot x_n) &\sigma'(w_1\cdot x_n)x_n^\TT &... &\sigma'(w_r\cdot x_n)x_n^\TT 
        \end{pmatrix}
    \end{equation}
    has full rank. For any $\theta = (a_k, w_k)_{k=1}^m \in \bR^{(d+1)m}$ with non-zero $w_k$'s, we say $\{x_i\}_{i=1}^n$ are separating inputs for $\theta$ if they are separating inputs for the distinct elements among $\{w_1, ..., w_m\}$.  
\end{defn}

Note that Corollary \ref{Cor separating sample II} shows the existence of separating inputs for $\theta$. Since $\sigma$ is analytic, this implies that for almost every $(x_1, ..., x_n) \in \bR^{dn}$, $\{(x_i, f^*(x_i))\}_{i=1}^n$ are separating inputs for $\theta$, see Lemma \ref{open dense set of separating inputs}. 

\subsection{Theory of Real Analytic Functions}\label{Subsection Theory of real analytic functions}

To prove such corollaries in this subsection, we introduce some properties of real analytic functions. such tools will also be used in the analysis of the geometry of the loss landscape of $R$ near its global-min $\Rmuzero$. 

\begin{lemma}[analytic implicit function theorem]\label{analytic version of rank theorem}
    Let $\calM_1$ and $\calM_2$ be smooth manifolds of dimension $s_1$ and $s_2$, respectively, and each one has a coordinate representation given by an analytic diffeomorphism. Suppose that $\calF: \calM_1 \to \calM_2$ is a smooth map with constant rank $s$ ($\text{rank}\,D\calF(p) = s$ for all $p$). For each $p \in \calM_1$ and any coordinate ball $U$ around $p$, there exist coordinate maps $\varphi: U \to \bR^{s_1}$, $\psi: \calF(\varphi(U)) \to \bR^{s_2}$, such that 
    \begin{equation}
        \psi \circ \calF \circ \varphi^{-1} (\zeta_1, ..., \zeta_{s_1}) = (\zeta_1, ..., \zeta_s, 0, ..., 0) 
    \end{equation}
    for $(\zeta_1, ..., \zeta_{s_1}) \in \varphi(U)$. 
\end{lemma}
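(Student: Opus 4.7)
The plan is to reduce this to the standard Constant Rank Theorem from smooth manifold theory, carried out in local coordinates. First I would use the (analytic) coordinate chart of $\calM_1$ around $p$ and of $\calM_2$ around $\calF(p)$ to translate the problem into one about a smooth map between open subsets of $\bR^{s_1}$ and $\bR^{s_2}$. After this translation, with $U$ an open ball around $p$ (identified with $0 \in \bR^{s_1}$), we may assume $\calF: U \to \bR^{s_2}$ is smooth with $\calF(0) = 0$ and $\operatorname{rank} D\calF(x) = s$ for every $x \in U$. The analytic hypothesis on the manifold charts plays no essential role beyond guaranteeing that coordinate identifications are well-defined; the argument itself proceeds in the smooth category via the inverse function theorem.

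Next I would normalize so that the upper-left $s \times s$ block of $D\calF(0)$ is invertible, which can be arranged by permuting input and output coordinates (an affine change of variables). I would then define
\[
\varphi(x^1, \ldots, x^{s_1}) = (\calF^1(x), \ldots, \calF^s(x), x^{s+1}, \ldots, x^{s_1}).
\]
Its Jacobian at $0$ is block lower-triangular with the invertible upper-left $s \times s$ block of $D\calF(0)$ on one diagonal entry and the identity on the other, hence non-singular. The inverse function theorem then yields a neighborhood on which $\varphi$ is a diffeomorphism onto an open subset of $\bR^{s_1}$; shrinking $U$ if necessary, this gives the desired coordinate map on the domain.

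In the new coordinates $\zeta = \varphi(x)$, the composition has, by construction, the form
\[
\calF \circ \varphi^{-1}(\zeta) = (\zeta_1, \ldots, \zeta_s, R^{s+1}(\zeta), \ldots, R^{s_2}(\zeta)).
\]
Since this composition still has rank $s$ and its Jacobian already contains an $s \times s$ identity in the upper-left corner, the partial derivatives $\partial R^j / \partial \zeta_k$ must vanish for every $k > s$ and every $j > s$. Consequently each $R^j$ depends only on $(\zeta_1, \ldots, \zeta_s)$. I would finish by setting
\[
\psi(y^1, \ldots, y^{s_2}) = (y^1, \ldots, y^s, y^{s+1} - R^{s+1}(y^1, \ldots, y^s), \ldots, y^{s_2} - R^{s_2}(y^1, \ldots, y^s)),
\]
whose Jacobian is block lower-triangular with identities on the diagonal, hence a diffeomorphism onto its image. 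A direct substitution confirms $\psi \circ \calF \circ \varphi^{-1}(\zeta) = (\zeta_1, \ldots, \zeta_s, 0, \ldots, 0)$.

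The main obstacle is essentially bookkeeping rather than mathematics: I must shrink the neighborhood $U$ consistently so that $\varphi(U)$ lies in the domain of $\calF \circ \varphi^{-1}$ and the image $\calF \circ \varphi^{-1}(\varphi(U))$ lies in the domain of $\psi$, and I must take care in translating the Euclidean conclusion back to coordinate representations on the abstract manifolds $\calM_1, \calM_2$. Everything else is a textbook application of the inverse function theorem together with the constant-rank hypothesis.
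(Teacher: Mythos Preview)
Your construction is exactly the standard Rank Theorem argument (as in Lee), and this is precisely what the paper does too: its proof is a two-line pointer to the Rank Theorem proof in \cite{JLee}. So the approach is the same.

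However, you explicitly dismiss the analytic hypothesis as playing ``no essential role,'' and that is the opposite of the paper's point. The reason the lemma is titled \emph{Analytic implicit function theorem} and the reason the paper invokes \cite{SKrantz} (for the fact that the inverse of a real-analytic diffeomorphism is again analytic) is that the coordinate maps $\varphi,\psi$ you build are themselves analytic when $\calF$ is analytic: your $\varphi$ is an analytic map, its local inverse is analytic by the analytic inverse function theorem, the remainder functions $R^j$ are then analytic, and hence $\psi$ is analytic. This analyticity of the charts is exactly what is used downstream in Lemma~\ref{zero set of multivariable analytic function}(b), where one needs the level sets to be \emph{analytic} submanifolds so that the argument can be iterated. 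The lemma statement admittedly does not spell this out, but your write-up should note that every step of the construction stays in the analytic category rather than asserting that analyticity is irrelevant.
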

\begin{proof}
By \cite{SKrantz}, the inverse of a (real) analytic diffeomorphism is analytic. Following the proof of the Rank theorem in \cite{JLee}, we can see that all the functions constructed can be made analytic, whence the desired result holds. 
\end{proof}

\begin{lemma}\label{zero set of multivariable analytic function}
Let $\Omega \siq \bR^{(d+1)m}$ be a connected open set and let $f: \Omega \to \bR$ be a (real) analytic function. The following results hold. 
\begin{itemize}
    \item [(a)] Either $f = 0$ on $\Omega$ or $f^{-1}\{0\}$ has $\lambda_{(d+1)m}$-measure zero. 
    
    \item [(b)] Suppose that $f$ is not a zero map. For any $z_0 \in f^{-1}\{0\}$ and any compact $K \siq \Omega$, $K \cap f^{-1}\{0\}$ is contained in a finite union of (analytic) smooth manifolds of dimension $(d+1)m-1$, each one having a coordinate representation by an analytic diffeomorphism. 
    
    \item [(c)] Let $f^{-1}\{0\}$ be locally contained in the union of connected embedded smooth manifolds $\calM_1, ..., \calM_N$ of the same dimension, with $\calM_1 \siq f^{-1}\{0\}$. Then there is an open $U \siq \Omega$ such that $U \cap f^{-1}\{0\} = U \cap \calM_1 \neq \emptyset$. 
\end{itemize}
\end{lemma}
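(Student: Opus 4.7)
\textbf{Proposal for Lemma \ref{zero set of multivariable analytic function}.}

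For (a), I would induct on the ambient dimension. The base case is classical: zeros of a non-identically-zero real analytic function on a connected open interval in $\bR$ are isolated, hence countable, hence of Lebesgue measure zero. For the inductive step in dimension $n = (d+1)m$, I would fix any $z_0 \in \Omega$ and use the identity principle on the connected set $\Omega$: if $f \not\equiv 0$ then at $z_0$ there is a multi-index direction in which the Taylor coefficients do not all vanish, so by a rotation we obtain a coordinate axis along which $f$ restricted to slices is not identically zero on an open set of slice parameters. Fubini combined with the one-dimensional case applied slice-by-slice then gives that $f^{-1}\{0\}$ is $\lambda_n$-null.

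For (b), the plan is to stratify $f^{-1}\{0\} \cap K$ recursively using the analytic implicit function theorem (Lemma \ref{analytic version of rank theorem}). On the regular part $\{p : \nabla f(p) \neq 0\}$ the map $f$ has constant rank $1$, so Lemma \ref{analytic version of rank theorem} yields an analytic coordinate representation exhibiting this set as a codimension-one analytic submanifold. The singular part $\{p \in f^{-1}\{0\} : \nabla f(p) = 0\}$ is contained in $(\partial_i f)^{-1}\{0\}$ for any $i$, and I can pick $i$ with $\partial_i f \not\equiv 0$ (some such $i$ exists, else $f$ is constant on the connected $\Omega$, hence $0$), and recurse. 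The key finiteness fact I would establish is that on a compact $K \siq \Omega$ the order of vanishing of $f$ is \emph{uniformly bounded}: otherwise a limit point of high-order zeros would be a zero of infinite order, forcing $f \equiv 0$ via analytic continuation, a contradiction. This bounds the recursion depth, and covering $K$ by finitely many coordinate balls in which the stratification applies yields the desired finite union.

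For (c), I would argue by contradiction using a Baire category argument together with a dimension comparison. Assume no such $U$ exists. Then every point of $\calM_1$ is a limit of points in $f^{-1}\{0\} \setminus \calM_1$; since $f^{-1}\{0\}$ is locally contained in $\bigcup_j \calM_j$, those points lie in $\bigcup_{j \neq 1}(\calM_j \setminus \calM_1)$. Hence $\calM_1 \siq \bigcup_{j \neq 1} \overline{\calM_j \setminus \calM_1}$. Since $\calM_1$ is a manifold, hence a Baire space, some $\calM_1 \cap \overline{\calM_j \setminus \calM_1}$ contains a nonempty relatively open subset $W$ of $\calM_1$. Since $\calM_j$ is embedded, near any $p \in W$ it is locally closed, so $W$ is locally contained in $\calM_j$ itself. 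As $\calM_1$ and $\calM_j$ have the same dimension and $W$ is open in $\calM_1$, the embedded $k$-manifolds $\calM_1$ and $\calM_j$ coincide on a neighborhood of $p$, contradicting that $p$ is a limit of points in $\calM_j \setminus \calM_1$.

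The main obstacle will be the dimension step at the end of (c): one must justify that two embedded submanifolds of the same dimension, one contained in the closure of the other, actually agree locally. I expect this to rely on the local-closedness of embedded submanifolds together with the fact that an open subset of a $k$-manifold contained in another $k$-manifold forces local coincidence. A secondary subtlety in (b) is ensuring the recursion on partial derivatives terminates uniformly over $K$; the uniform vanishing-order bound described above is what I expect to carry this through.
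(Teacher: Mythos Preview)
Your plan for (a) is the paper's: induction on dimension plus Fubini on slices. Your rotation to ensure slices are not identically zero is actually more careful than the paper, which applies the inductive hypothesis to every slice without checking it is not the zero function.

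For (b) there is a real gap. Picking \emph{one} $i$ with $\partial_i f \not\equiv 0$ and recursing on $\partial_i f$ need not terminate, even given your uniform bound on the vanishing order of $f$ over $K$. Take $f(x,y) = \sin^2(x)\,y^2$ near the origin: the vanishing order of $f$ there is $4$, yet if at every step you choose $i = 1$ the iterates $\partial_1^k f$ are of the form $c_k\,\varphi_k(2x)\,y^2$ with $\varphi_k \in \{\sin,\cos\}$, each vanishing to order at least $2$ at the origin, so the origin lies in every singular stratum $S_k$ and is never covered by a regular level set. The point is that a bound on the vanishing order of $f$ constrains \emph{all} mixed partials at once, not a single chain of them. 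The paper sidesteps this by not recursing: at each $y \in K \cap f^{-1}\{0\}$ it reads off the vanishing order $N+1$, chooses $\alpha, j$ with $|\alpha| = N$ and $\partial_j D^\alpha f(y) \neq 0$, and on the neighbourhood where $\partial_j D^\alpha f \neq 0$ covers the zero set directly by the finite union $\bigcup_{|t| \le N}\bigcup_i \{D^t f = 0,\ \partial_i D^t f \neq 0\}$; compactness of $K$ then gives the global finite cover. Your vanishing-order bound is exactly the right ingredient---you just need to replace the single chain by this full union over multi-indices.

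For (c) your Baire-category route is a clean variant of the paper's case split on $\calM_1 = (\calM_1 \setminus \calM_2) \cup (\calM_1 \cap \calM_2)$, and the obstacle you flag at the end is real and in fact shared: the paper's proof asserts that $x \in \overline{\calM_2}$ with $\calM_2$ embedded forces $x \in \calM_2$, which is false in general (an open arc is embedded but not closed). In the paper's applications the $\calM_j$ are the sets $\{D^t f = 0,\ \partial_i D^t f \neq 0\}$ from (b), which are relatively closed in the open set $\{\partial_i D^t f \neq 0\}$, so the issue does not bite there; but as stated, this step needs the same extra care in both your version and the paper's.
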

\begin{proof}
\begin{itemize}
    \item [(a)] We will show that given $n \in \bN$ and any set $\Omega$ open in $\bR^n$, the zero set of any non-constant analytic function $f: \Omega \to \bR$ has $\lambda_n$-measure zero. Indeed, when $n=1$ the zeros of $f$ are (possibly empty isolated points), which means $f^{-1}\{0\}$ has measure zero. Suppose that the result holds for $n-1$. Let $E \siq \bR$ be the projection of $\Omega$ onto $\Spans{e_1}$. For any $z_1 \in E$, $f(z_1, \cdot)$ is an analytic function in $n-1$ variables, whence by our assumption its zero set has $\lambda_{n-1}$-measure zero. It follows that 
    \begin{equation*}
        \lambda_n (f^{-1}\{0\}) = \int_\Omega \chi_{f^{-1}\{0\}} = \int_E \lambda_{n-1}(f(z_1,\cdot)^{-1}\{0\}) dz_1 = 0, 
    \end{equation*}
    where $\chi$ is the characteristic function. This completes our induction and shows that $f^{-1}\{0\}$ has measure zero. 
    
    \item [(b)] It suffices to show that for any $y \in f^{-1}\{0\}$, $y$ has a bounded neighborhood $U$ such that $U \cap f^{-1}\{0\}$ is contained in a finite union of (analytic) submanifolds of codimension $1$. Since $f$ is not a zero map on $\Omega$, there is some $N \in \bN$ such that $D^t f(y) = 0$ for all multi-index $t$ with $|t| \leq N$ and there are some $j \in \{1, ..., (d+1)m\}$, multi-index $\alpha$ with $|\alpha| = N$ and $\partial_{z_j} D^\alpha f(y) \neq 0$. Thus, 
    \begin{equation*}
        y \in \{z \in \bR^{(d+1)m}: D^\alpha f(z) = 0 \text{ and } \partial_{z_j} D^\alpha f(z) \neq 0\}, 
    \end{equation*} 
    which shows that $y$ is in a smooth manifold of dimension $(d+1)m-1$, and by Lemma \ref{analytic version of rank theorem}, it has a coordinate representation given by an analytic diffeomorphism. Let $U$ be a bounded neighborhood of $y$ on which $\partial_{z_j} D^\alpha f \neq 0$. It follows that $U \cap f^{-1}\{0\}$ is contained in
    \begin{equation}
         \bigcup_{t, 0 \leq |t| \leq N} \bigcup_{i=1}^{(d+1)m} 
        \{z \in \bR^{(d+1)m}: D^t f(z) = 0 \text{ and } \partial_{z_i} D^t f(z) \neq 0\}. 
    \end{equation}
    Just note that the right side is a finite union of (possibly empty) embedded, analytic smooth manifolds, as desired. 
    
    \item [(c)] By shrinking the domain of $f$, $\Omega$ if necessary, we may for simplicity assume that $f^{-1}\{0\} \siq \bigcup_{j=1}^N \calM_i$. First suppose that $N = 2$. Since 
    \begin{equation*}
        \calM_1 = (\calM_1 \cut \calM_2) \cup (\calM_1 \cap \calM_2), 
    \end{equation*} 
    one of the sets on the right side of the equation must contain an open submanifold $E$ of $\calM_1$. Consider first the case in which $E \siq \calM_1 \cut \calM_2$ and $x \in E$. If for any $\delta > 0$ the $(d+1)m$-dimensional ball $B(x, \delta)$ intersects $\calM_2$, then $x$ is in the closure of $\calM_2$, which implies that $x \in \calM_2$, as $\calM_2$ is embedded in $\bR^{(d+1)m}$, a contradiction. Now suppose that $x \in E \siq \calM_1 \cap \calM_2$. Since $\dim \calM_1 = \dim \calM_2$, $E$ is also an open submanifold of $\calM_2$; furthermore, it is the intersection of an open subset of $\bR^{(d+1)m}$ with $\calM_1$ ($\calM_2$). In either case, there is some open $U \siq \bR^{(d+1)m}$ with $U \cap f^{-1}\{0\} = U \cap \calM_1$. When $N \geq 2$, we argue by induction to obtain the same desired result. 
\end{itemize}
\end{proof}

\begin{cor}\label{R mu is real analytic}
    Let $\mu = \sum_{i=1}^n \delta(\cdot -x_i)$ as in Assumption \ref{finite-feature setting}. If $\sigma$ is analytic, then $R$ is analytic, in which case all the results in Lemma \ref{zero set of multivariable analytic function} apply to $R$. 
\end{cor}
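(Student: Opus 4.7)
The plan is to argue, in a direct and modular way, that $R$ is obtained from $\sigma$ and the coordinate projections of $\theta$ by the operations that preserve real analyticity (finite linear combinations, finite products, and compositions), and then to quote the preceding lemma.

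First I would unpack $R$ as given in equation (\ref{Eq Definition of Rmu}). For each fixed input $x_i \in \bR^d$, the map $\bR^{(d+1)m} \to \bR$ sending $\theta = (a_k, w_k)_{k=1}^m$ to $w_k \cdot x_i$ is linear, hence a polynomial in the coordinates of $\theta$, and in particular real analytic on all of $\bR^{(d+1)m}$. Since $\sigma: \bR \to \bR$ is real analytic by hypothesis, and since the composition of a real analytic function with a real analytic map is again real analytic (a standard fact, see e.g.\ \cite{SKrantz}), the map $\theta \mapsto \sigma(w_k \cdot x_i)$ is real analytic on $\bR^{(d+1)m}$ for each $k$ and each $i$.

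Next I would assemble $R$. The product $\theta \mapsto a_k \sigma(w_k \cdot x_i)$ is a product of two real analytic functions, hence analytic; summing over $k$ gives that $\theta \mapsto g(\theta, x_i) = \sum_{k=1}^m a_k \sigma(w_k \cdot x_i)$ is analytic. Subtracting the constant $f^*(x_i)$ (which depends only on $x_i$ and on the fixed target parameters $\bar a_k, \bar w_k$) preserves analyticity; squaring gives another analytic function; and finally the finite sum over $i = 1, \ldots, n$ gives that $R$ itself is real analytic on the connected open set $\Omega = \bR^{(d+1)m}$.

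Having established that $R$ is real analytic on $\bR^{(d+1)m}$, the three conclusions in Lemma \ref{zero set of multivariable analytic function} apply to $R$ directly, with $f$ replaced by $R$ and $\Omega = \bR^{(d+1)m}$ (which is indeed connected and open). I do not anticipate a serious obstacle here: the only mildly delicate point is that one invokes the composition rule for multivariable real analytic maps (so that $\sigma \circ \ell$ is analytic in $\theta$, where $\ell$ is the linear form $\theta \mapsto w_k \cdot x_i$), but this is classical and is the exact setting covered in \cite{SKrantz}. No case analysis or quantitative estimate is required, and the sample points $x_1, \ldots, x_n$ enter only as fixed parameters of the analytic expression.
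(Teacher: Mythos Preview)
Your proof is correct and follows essentially the same approach as the paper: show that $R$ is built from $\sigma$ and the coordinate functions of $\theta$ via compositions, finite products, and finite sums, each of which preserves real analyticity, and then invoke Lemma~\ref{zero set of multivariable analytic function}. The only difference is that you spell out the steps in slightly more detail than the paper does.
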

\begin{proof}
By our hypothesis, $R(\theta) = \sum_{i=1}^n |g(\theta, x_i) - f^*(x_i)|^2$ for some $x_1, ..., x_n \in \bR^d$. Because the composition of two analytic functions is again analytic, each $a_k \sigma(w_k \cdot x_i)$ is analytic (in $a_k, w_k$); because adding and/or multiplying two analytic functions produces analytic functions, $R$ is analytic. 
\end{proof}

\begin{cor}[common zeros of parametrized analytic functions]\label{square-loss type perturbation}
    Let $f: E \times \bR^d \to \bR$ be analytic, where $E \siq \bR^s$. Suppose that for any $z \in E$, $f(z, \cdot)$ is not constant-zero. Given $n > s$, for any prescribed $x_1, ..., x_n \in \bR^d$ and any $\vep > 0$, we can perturb each of them with no more than $\vep$-distance to obtain $x_1', ..., x_n'$, so that 
    \begin{equation*}
        \calA_E := \{z \in E: f(z, x_1') = ... = f(z, x_n') = 0\}
    \end{equation*}
    is empty. In particular, $\calA_E = \emptyset$ for almost all $(x_1', ..., x_n') \in \bR^{nd}$. 
\end{cor}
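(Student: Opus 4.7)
The plan is to argue by induction on $k = 0, 1, \dots, n$, using Lemma \ref{zero set of multivariable analytic function} to shrink the dimension of the common zero set by one at each step, until for $k = n > s$ the available dimension is exhausted and the set is forced to be empty. I tacitly assume $E$ is open in $\bR^s$, the setting in which the preceding lemmas apply. Set $\calA_0 := E$. The inductive claim is that one can choose $x_1', \dots, x_k'$ with $\|x_j' - x_j\| < \vep$ so that
\[
    \calA_k := \{z \in E : f(z, x_1') = \cdots = f(z, x_k') = 0\}
\]
is contained in a countable union $\bigcup_i \calM_{k,i}$ of embedded analytic submanifolds of $E$, each of dimension at most $s - k$ and each admitting an analytic coordinate chart in the sense of Lemma \ref{analytic version of rank theorem}. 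The base case $k = 0$ is trivial with $\calM_{0,1} = E$.

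The core of the induction step is a double-Fubini argument showing that Lebesgue-a.e.\ $x \in \bR^d$ makes $z \mapsto f(z, x)$ not identically zero on any connected component of any $\calM_{k,i}$. Fix one $\calM_{k,i}$ with its natural Hausdorff measure. The hypothesis that $f(z, \cdot)$ is not a zero map, combined with Lemma \ref{zero set of multivariable analytic function}(a) applied to $f(z, \cdot)$, forces $\lambda_d\{x : f(z,x) = 0\} = 0$ for every $z \in \calM_{k,i}$; hence $\{(z, x) \in \calM_{k,i} \times \bR^d : f(z, x) = 0\}$ is null in the product measure, and Fubini yields a $\lambda_d$-full-measure set of $x$ for which $\{z \in \calM_{k,i} : f(z, x) = 0\}$ has intrinsic measure zero. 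Lemma \ref{zero set of multivariable analytic function}(a) applied in the chart to each connected component of $\calM_{k,i}$ then promotes this to: $f(\cdot, x)$ is not constant-zero on any connected component. Since $E$ is second countable, there are only countably many components across all $\calM_{k,i}$, so intersecting the corresponding full-measure sets in $x$ remains full-measure and meets $B(x_{k+1}, \vep)$; pick $x_{k+1}'$ there.

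It remains to turn the resulting zero set into the next family of submanifolds. Exhausting each $\calM_{k,i}$ by countably many compacts and invoking Lemma \ref{zero set of multivariable analytic function}(b) for $f(\cdot, x_{k+1}')$ read through the analytic chart of $\calM_{k,i}$, the zero set of $f(\cdot, x_{k+1}')|_{\calM_{k,i}}$ is contained in a countable union of analytic submanifolds of dimension at most $\dim \calM_{k,i} - 1 \le s - k - 1$, each admitting an analytic chart. Collecting over $i$ gives the new family $\{\calM_{k+1, i}\}_i$ and completes the induction. At $k = n > s$ the dimension bound is strictly negative, so $\calA_n = \calA_E = \emptyset$, as required. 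I expect the main obstacle to be precisely the double-Fubini step: one must use Lemma \ref{zero set of multivariable analytic function}(a) twice — once on the $x$-slices (per fixed $z$) and once on the $z$-slices (per fixed $x$, in the chart of $\calM_{k,i}$) — together with the bookkeeping that Lemma \ref{zero set of multivariable analytic function}(b) only supplies a local finite decomposition, forcing countable rather than finite families at every level of the induction.
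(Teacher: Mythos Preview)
Your proof is correct and follows the same inductive dimension-reduction skeleton as the paper: at each step, choose $x_{k+1}'$ so that $f(\cdot,x_{k+1}')$ is not identically zero on any piece of the current decomposition, then invoke Lemma~\ref{zero set of multivariable analytic function}(b) in charts to drop the dimension by one. The only substantive difference is in how the next sample is selected. The paper picks a single witness point $z_k'$ from each component on which $f(\cdot,x_{k+1})$ happens to vanish, and then intersects the countably many full-measure sets $\{x:f(z_k',x)\neq 0\}$ to find $x_{k+1}'$; you instead run a Fubini argument on $\calM_{k,i}\times\bR^d$ to show directly that $\lambda_d$-a.e.\ $x$ makes $f(\cdot,x)$ non-vanishing on every connected component simultaneously. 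Your route is cleaner and avoids the case split in the paper's proof, at the cost of needing to say a word about the $\sigma$-finiteness of the intrinsic measure on each $\calM_{k,i}$ (which is immediate). The paper also stops perturbing after step $s+1$ and leaves $x_{s+2},\dots,x_n$ untouched, whereas you formally run the induction to $k=n$; this is harmless since the set is already empty once $k>s$.
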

\begin{proof}
By hypothesis, $f$ is not constant-zero, whence there is some $x_1' \in \bR^d$ with $|x_1' - x_1| < \vep$ such that $f(\cdot, x_1')|_E$ is not constant-zero. By Lemma \ref{zero set of multivariable analytic function} (b), for each $N \in \bN$ and $x \in \bR^d$ with $f(\cdot, x)$ not constant-zero, $f(\cdot, x)^{-1}\{0\} \cap (\overline{B(0,N)} \cap E)$ is contained in a finite union of codimension-1 submanifolds of $\bR^s$. Thus, $f(\cdot, x_1')^{-1}\{0\}$ is contained in a union of countably many submanifolds of $\bR^s$, which we denote by $\{\calM_{1n}: n \in \bN\}$. 

For the choice of $x_2'$ we consider two cases. Suppose first that there are (possibly finite) $\calM_{1n_1}, ..., \calM_{1n_k}, ...$ such that $f(\cdot, x_2)$ restricted to each of them is a zero map. Select points $\{z_k' \in \calM_{1n_k}: k \in \bN\}$. For each $k$, there is an open, dense $O_k' \siq \bR^d$ such that $\lambda_d (\bR^d\cut O_k') = 0$ and $f(z_k', x) \neq 0$ for all $x \in O_k'$. Thus, $\bigcap_{k=1}^\infty O_k'$ is dense and have (Lebesgue measure) zero in $\bR^d$. Therefore $f(z_k', x) \neq 0$ for all $k \in \bN$, whenever $x \in \bigcap_{k=1}^\infty O_k'$. Let $x_2' \in \bigcap_{k=1}^\infty O_k'$ with $|x_2' - x_2| < \vep$. Otherwise, $f(z, x_2')$ is not constant-zero on $\bigcup_{k=1}^\infty \calM_{1n}$; in this case simply let $x_2' = x_2$. By Lemma \ref{zero set of multivariable analytic function}, the set on which $f(\cdot, x_1')|_E = f(\cdot, x_2')|_E = 0$ is contained in a countable union of codimension-2 (analytic) submanifolds of $\bR^s$. 

Repeat this procedure to perturb $x_1, ..., x_s$. We see that in general, for $s' \leq s$, 
\begin{equation*}
    \calA_{E,s'} := \{z \in E: f(z, x_1') = ... = f(z, x_{s'}') = 0\}
\end{equation*}
is contained in a countable (possibly finite) union of manifolds of codimension $s'$. In particular, $\calA_{E,s}$ is a countable (possibly finite) set. For simplicity, denote $\calA_{E,s} := \{z_k: k \in \bN\}$. For each $k$, there is an open $O_k \siq \bR^d$ such that $\lambda_d (\bR^d\cut O_k) = 0$ and $f(z_k, x) \neq 0$ for all $x \in O_k$. Let $x_{s+1}' \in \bigcap_{k=1}^\infty O_k$ with $|x_{s+1}' - x_{s+1}| < \vep$. Finally, let $x_{s+2}' = x_{s+2}$, ..., $x_n' = x_n$. The parameters $x_1' ,..., x_{s+1}', ..., x_n'$ satisfy our requirements. 
\end{proof}
\begin{remark}
    This technique enables us to control the size of the global minimum of $R$. See also Lemmas \ref{separated branch} (b) and \ref{underparametrized system}. 
\end{remark}

\subsection{Separating Inputs are Almost Everywhere}\label{Subsection Separating inputs are almost everwhere}

\begin{cor}[separating inputs works almost everywhere]\label{Cor separating inputs works ae}
    Suppose that Assumptions \ref{generic activation} and \ref{finite-feature setting} hold. Given separating inputs $\{x_i\}_{i=1}^n$ for some fixed distinct weights $w_1^*, ..., w_r^* \in \bR^d \cut \{0\}$. Given any $1 \leq j \leq r$, there is an open dense subset of $\bR^{jd}$ such that for any $(w_1, ..., w_j)$ in it, $\{x_i\}_{i=1}^n$ are separating inputs for $(w_1, ..., w_j, w_{j+1}^*, ..., w_r^*)$.
\end{cor} 
\begin{proof}
Let $A = A((w_k)_{k=1}^r, (x_i)_{i=1}^n)$ be the matrix (\ref{eq 1 of Defn Sepaating inputs}) in Definition \ref{Defn Separating inputs}. We may assume that $n \le (d+1)r$, because when $n > (d+1)r$ we can always find distinct $i_1, ..., i_{(d+1)r} \in \{1, ..., (d+1)r\}$ such that $A((w_k^*)_{k=1}^r, (x_{i_j})_{j=1}^{(d+1)r})$ has full rank. \\

Since $n \le (d+1)r$, the number of columns of $A$ is no less than the number of rows of $A$, so the fact that it is full rank means there is some $n\times n$ submatrix of it, denoted by $B((w_k^*)_{k=1}^r, (x_i)_{i=1}^n)$, such that $\det B = B((w_k^*)_{k=1}^r, (x_i)_{i=1}^n) \ne 0$. Notice that when $w_{j+1}^*, ..., w_r^*$ and $x_1, ..., x_n$ are fixed, we can view $\det B$ as an analytic function in $w_1, ..., w_j$: 
\[
    \bR^{jd} \ni (w_1, ..., w_j) \mapsto \det B((w_1, ..., w_j, w_{j+1}^*, ..., w_r^*), (x_i)_{i=1}^n). 
\]
Therefore, our proof above implies that it is not constant zero on $\bR^{jd}$, whence by Lemma \ref{zero set of multivariable analytic function} (a), its zero set, which is clearly closed, has $\lambda_{jd}$-measure zero. Thus, $\det B \ne 0$, and thus $A$ has full rank for $(w_1, ..., w_j)$ an open dense subset of $\bR^{jd}$. 
\end{proof}

Argue in the same way as above, we can see that separating inputs are almost everywhere: 

\begin{lemma}[separating inputs are almost everywhere]\label{open dense set of separating inputs}
    Given $\theta = (a_k, w_k)_{k=1}^m \in \bR^{(d+1)m}$ with $w_1,..., w_m \ne 0$. Almost all choices of $(x_1, ..., x_n) \in \bR^{nd}$ are separating inputs for $\theta$. 
\end{lemma}

\subsection{Geometry of \texorpdfstring{$Q^*$}{M}}\label{Subsection geometry of Minfty} 

In this part, we investigate the geometry of $Q^*$ under the assumption that $\sigma$ is a generic activation (Assumption \ref{generic activation}). In this case, Refs. \cite{BSimsek, KFukumizu} have shown that the global minima of the loss function is a union of subsets of $\bR^{(d+1)m}$ whose closure are affine subspaces which we call ``branches". For completeness of our study for local recovery, we present this result and prove it with our notations (see Proposition \ref{prop fragmentation/stratification}). Based on this, \cite{BSimsek} shows that $Q^*$ is connected. We further investigate how different branches intersect with each other. First let's recall partition and refinement. \\

\begin{defn}[partition and refinement]
    A partition of $\{1, ..., m\}$ is denoted by $P := (q_0, ..., q_r)$ for some $r \in \bN$, where $0 = q_0 < q_1 < ... < q_r = m$. Given another partition $P' = (q_0', ..., q_{r'}')$, we say $P'$ is a refinement of $P$ if $\{q_0, ..., q_r\} \siq \{q_0', ..., q_{r'}'\}$. 
\end{defn}

We then partition $Q^*$ into symmetric subsets based on the number of distinct $w_k$'s. 

\begin{defn}[fragmentation/stratification of $Q^*$]\label{fragmentation/stratification of M^infty}

    Given $m_0 \leq r \leq m$, a partition $P = (q_0, ..., q_r)$ and a permutation $\pi$ of $\{1, ..., m\}$. We define the following subsets of $Q^*$. 
    \begin{itemize}
        \item [(a)] $Q^r := \{\theta^* \in Q^*: \card \{w_k^*: 1 \leq k \leq m\} = r\}$, namely, it consists of points $(a_k^*, w_k^*)_{k=1}^m \in Q^*$ which has precisely $r$ distinct $w_k^*$'s. 

        \item [(b)] $Q_P^r$ consists of $\theta^* = (a_k^*, w_k^*)_{k=1}^m \in Q^*$ such that $w_k^* = w_{q_t}^*$ for all $q_{t-1} < k \leq q_t$ and $1 \leq t \leq r$, and $w_{q_t}^* = \bar{w}_t$ for all $1 \leq t \leq m_0$. For example, if $m_0 = 2$, $r = 3$, $m = 6$ and $P = (0, 2, 4, 6)$ then $\theta^* \in Q_P^3$ if and only if it has the form 
        \[
        \theta^* = (a_1^*, \bar{w}_1, a_2^*, \bar{w}_1, a_3^*, \bar{w}_2, a_4^*, \bar{w}_2, a_5^*, u, a_6^*, u)
        \]
        for some $u \in \bR^d$. 

        \item [(c)] $Q_{P,\pi}^r := \{(a_{\pi(k)}^*, w_{\pi(k)}^*)_{k=1}^m: (a_k^*, w_k^*)_{k=1}^m \in Q_P^r\}$. We call each $Q_{P,\pi}^r$ a brunch of $Q^*$. 
        
    \end{itemize}
\end{defn}

The notations $Q_P^r$ and $Q_{P,\pi}^r$ make sense only when $P$ partitions $\{1, ..., m\}$ into $r$ subsets. Thus, when we write these notations we implicitly assume that $P$ satisfies this requirement. Then, clearly, $Q_{P,\pi}^r$ and $Q_P^r$ are both subsets of $Q^r$. Also, notice that for any $(a_k^*, w_k^*)_{k=1}^m \in Q_P^{m_0}$ ($Q_{P,\pi}^{m_0}$), $w_k^* \in \{\bar{w}_j: 1 \leq j \leq m\}$ for each $k$, while for any $(a_k^*, w_k^*)_{k=1}^m \in Q_P^m$ ($Q_{P,\pi}^m$), $w_k^* \neq w_j^*$ whenever $k \neq j$. \\

An immediate observation is that for any permutation $\pi$, $Q_{P,\pi}^r$ is the image of $Q_P^r$ under a coordinate transformation, so they have the same geometric properties. To see this, let $\{e_i: 1 \leq i \leq (d+1)m\}$ be the standard basis of $\bR^{(d+1)m}$. Clearly, $\pi$ induces a permutation $\tau$ on $\{1, ..., (d+1)m\}$ that maps $i$ to $\tau(i) = (d+1)(\pi(i)-1) + k$ if $i = k \bmod{(d+1)}$, $1 \leq k \leq d+1$. Let $\rho$ be the linear map on $\bR^{(d+1)m}$ such that $\rho(e_i) = e_{\tau(i)}$ for all $1 \leq i \leq (d+1)m$. Then $Q_{P,\pi}^r = \rho(Q_P^r)$. \\

Since $Q_{P,\pi}^r$ is isometric to $Q_P^r$, $Q^*$ has a strong symmetry property. In particular, to study the structure of $Q_{P,\pi}^r$ we need only study that of $Q_P^r$. Similarly, given $r'$, partition $P'$ and permutation $\pi'$, to investigate $\overline{Q_{P,\pi}^r} \cap \overline{Q_{P',\pi'}^{r'}}$ it suffices to investigate $\overline{Q_P^r} \cap \overline{Q_{P',\tau}^{r'}}$, where $\tau = \pi^{-1} \circ \pi'$. More importantly, to study the behavior of $R$ near $Q_{P,\pi}^r$ we need only study its behavior near $Q_P^r$. 

\begin{prop}[fragmentation/stratification of $Q^*$, Theorem 3.1 in \cite{BSimsek}]\label{prop fragmentation/stratification}
    Suppose that Assumption \ref{generic activation} and Assumption \ref{finite-feature setting} hold. Fix $m_0 \leq r < r' \leq m$. 
    \begin{itemize}
        \item [(a)] $Q^* = \bigcup_{r''=m_0}^m Q^{r''}$. 
        
        \item [(b)] $Q^r \cap Q^{r'} = \emptyset$, and $Q^r = \bigcup_{P,\pi} Q_{P,\pi}^r$, the union taken over all possible partition $P$'s and possible permutation $\pi$'s. Thus, $Q^* = \cup_{r''=m_0}^m \cup_{P,\pi} Q^{r''}$. 
        
        \item [(c)] For any partition $P = (q_0, ..., q_r)$, $\overline{Q_P^r}$ is an affine subspace of dimension $(m-r) + (r-m_0)d$. Furthermore, for any $\theta^* \in Q_P^r$, $\theta^*$ has a neighborhood $U \siq \bR^{(d+1)m}$ such that $U \cap Q_P^r$ equals $U \cap \overline{Q_P^r}$. 
    \end{itemize}
\end{prop}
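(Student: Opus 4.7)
The plan is to reduce every part to the linear independence of neurons from Corollary \ref{independent activation}. For (a), I would take $\theta^* = (a_k^*, w_k^*)_{k=1}^m \in Q^*$, group the indices by equality of the $w_k^*$'s, and let $v_1, \ldots, v_r$ be the resulting distinct weights with pooled coefficients $A_j := \sum_{k:\, w_k^* = v_j} a_k^*$. The identity $g(\theta^*, \cdot) = f^*(\cdot)$ then reads as a vanishing linear combination of $\{\sigma(u \cdot x) : u \in \{v_j\} \cup \{\bar{w}_t\}\}$. Since these $u$'s are distinct, Corollary \ref{independent activation} forces every coefficient to vanish; but any $\bar{w}_t$ not lying among the $v_j$'s would contribute the nonzero coefficient $-\bar{a}_t$, impossible by Assumption \ref{finite-feature setting}. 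Hence $\{\bar{w}_1, \ldots, \bar{w}_{m_0}\} \siq \{v_1, \ldots, v_r\}$, which yields $m_0 \leq r \leq m$ and therefore (a). For (b), disjointness is immediate from the cardinality definition, and for the decomposition I would take $\theta^* \in Q^r$ and construct a permutation $\pi$ that groups equal-weight indices into consecutive blocks, placing the $\bar{w}_t$-groups into the first $m_0$ blocks in natural order; the resulting block endpoints supply a partition $P$ with $\theta^* \in Q_{P,\pi}^r$.

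For (c), my approach is to separate the linear equality constraints defining $Q_P^r$ from the open distinctness constraints. The equalities are $w_k = w_{q_t}$ for $q_{t-1} < k \leq q_t$ and $w_{q_t} = \bar{w}_t$ for $t \leq m_0$; the open condition is that $w_{q_1}, \ldots, w_{q_r}$ be pairwise distinct, ensuring $\theta^* \in Q^r$. Under distinctness, Corollary \ref{independent activation} collapses the equation $g(\theta^*, \cdot) = f^*(\cdot)$ into exactly the $r$ scalar constraints $\sum_{k=q_{t-1}+1}^{q_t} a_k^* = \bar{a}_t$ for $t \leq m_0$ and $\sum_{k=q_{t-1}+1}^{q_t} a_k^* = 0$ for $t > m_0$. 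Counting free parameters then gives $m - r$ in the coefficients and $(r - m_0) d$ in the free block-weights $w_{q_{m_0+1}}, \ldots, w_{q_r}$, so the affine subspace $\mathcal{A}_P \siq \bR^{(d+1)m}$ cut out by the equalities has dimension $(m - r) + (r - m_0) d$. Then $Q_P^r$ is precisely $\mathcal{A}_P$ minus the finite union of strictly lower-dimensional affine subspaces on which some pair $w_{q_t} = w_{q_{t'}}$ coincides, so $\overline{Q_P^r} = \mathcal{A}_P$. Finally, since distinctness is an open condition, any $\theta^* \in Q_P^r$ admits a Euclidean ball $U$ that avoids the removed subspaces, giving $U \cap Q_P^r = U \cap \overline{Q_P^r}$ and completing (c).

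The hardest step will be the coefficient reduction in (c): one must verify that once distinctness of $w_{q_1}, \ldots, w_{q_r}$ is imposed, the functional identity $g(\theta^*, \cdot) = f^*(\cdot)$ really collapses to exactly the $r$ scalar equations listed above, with no hidden extra relations linking coefficients across blocks. This is precisely where the full strength of Corollary \ref{independent activation} — linear independence of arbitrarily many distinct-weight neurons — is indispensable; a weaker approximation-type independence would not be enough to rule out further algebraic relations and hence would not pin down the affine structure of $\overline{Q_P^r}$.
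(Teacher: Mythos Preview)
Your proposal is correct and follows essentially the same route as the paper's own proof: both arguments reduce (a) and (c) to the neuron independence of Corollary~\ref{independent activation}, characterize $Q_P^r$ by exactly the same equality constraints (block-sums on the $a_k^*$'s and block-constancy on the $w_k^*$'s) together with the open distinctness condition, and deduce the closure and neighborhood statements from the fact that the removed locus is a finite union of strictly lower-dimensional affine subspaces. The only cosmetic difference is that the paper writes the affine space $\overline{Q_P^r}$ explicitly as a translate of a product $\prod_{t\leq m_0}(A_t\times\{0\})\times\prod_{t>m_0}(A_t\times W_t)$ and sums the block dimensions, whereas you count free parameters directly to reach $(m-r)+(r-m_0)d$.
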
 
\begin{remark}
    By (b) and (c) and our remark above, the Hausdorff dimension of $Q^r$ is $(m-r) + (r-m_0)d$. 
\end{remark} 
\begin{proof}
\begin{itemize}
    \item [(a)] If $\theta^* \in Q^*$, we have 
    \begin{equation*}
        g(\theta^*, x) - f^*(x) = \sum_{k=1}^m a_k^* \sigma(w_k^* \cdot x) - \sum_{k=1}^{m_0} \bara_k \sigma(\barw_k \cdot x)
    \end{equation*}
    for all $x \in \bR^d$. Suppose that $\card\{w_1^*, ..., w_m^*\} = r''$. By Corollary \ref{independent activation}, $\{w_1^*, ..., w_m^*\} \soq \{\barw_1, ..., \barw_{m_0}\}$ and thus $r'' \geq m_0$. Furthermore, if $w_{k_1}^*, ..., w_{k_j}^* = \barw_k$ for some $k \in \{1, .., m_0\}$, then we must have $\sum_{t=1}^j a_{k_t}^* = \bara_k$; otherwise, we must have $\sum_{t=1}^j a_{k_t}^* = 0$. This shows that $\theta^* \in Q^{r''}$. Since $m_0 \leq r'' \leq m$, it follows that $Q^* = \bigcup_{r''=m_0}^m Q^{r''}$. 
    
    \item [(b)] By definition, $Q^r \cap Q^{r'} = \emptyset$. Fix $\theta^* \in Q^r$. We can find distinct weights $w_{k_1}^*, ..., w_{k_r}^*$ such that 
    \begin{equation*}
        \{w_{k_1}^*, ..., w_{k_r}^*\} \soq \{\barw_1, ..., \barw_{m_0}\}. 
    \end{equation*} 
    By reordering them if necessary, we may further assume that for any $t \in \{1, ..., m_0\}$, $w_{k_t}^* = \barw_t$. For each $t \in \{1, ..., r\}$, let $w_{t(1)}^*, ..., w_{t(\delta_t)}^*$ be all the $w_k^*$'s that are equal to $w_{k_t}^*$. Set $q_t' = \sum_{j=1}^t \delta_j$; also set $q_0' = 0$. Let $P' = (q_0', q_1', ..., q_r')$ and $\pi'$ be the permutation satisfying $\pi'(q_{t-1}+j) = t(j)$, $1 \leq j \leq \delta_t$, for all $t \in \{1, ..., r\}$. But then $\theta^* \in Q_{P',\pi'}^r$, so $Q^r$ is the union of $Q_{P,\pi}^r$'s. Since $r$ is arbitrary, we conclude from (a) that $Q^* = \cup_{r''=m_0}^m \cup_{P,\pi} Q_{P,\pi}^{r''}$. 
    
    \item [(c)] Clearly, $\theta^* \in Q_P^r$ if and only if it satisfies 
        \begin{itemize}
            \item [i)] $\sum_{k=q_{t-1}+1}^{q_t} a_k^* = \bara_t$ for $1 \leq t \leq m_0$; 
            \item [ii)] $\sum_{k=q_{t-1}+1}^{q_t} a_k^* = 0$ for $m_0 < t \leq r$; 
            \item [iii)] $w_{q_{t-1}+1}^* = ... = w_{q_t}^* = \barw_t$ for $1 \leq t \leq m_0$; 
            \item [iv)] $w_{q_{t-1}+1}^* = ... = w_{q_t}^* \notin \{w_{q_1}^*, ..., w_{q_{t-1}}^*\}$ for $m_0 < t \leq r$, 
        \end{itemize}
    from which we can see that the closure of $Q_P^r$ consists of $\theta^* \in \bR^{(d+1)m}$ with 
        \begin{itemize}
            \item [i)] $\sum_{k=q_{t-1}+1}^{q_t} a_k^* = \bara_t$ for $1 \leq t \leq m_0$; 
            \item [ii)] $\sum_{k=q_{t-1}+1}^{q_t} a_k^* = 0$ for $m_0 < t \leq r$; 
            \item [iii)] $w_{q_{t-1}+1}^* = ... = w_{q_t}^* = \barw_t$ for $1 \leq t \leq m_0$; 
            \item [vi)] $w_{q_{t-1}+1}^* = ... = w_{q_t}^* \in \bR^d$ for $m_0 < t \leq r$. 
        \end{itemize}
    Fix $\ttheta^* \in Q_P^r$. Given $t \in \{1, ..., r\}$, let $A_t$ be the subspace of $\bR^{q_t-q_{t-1}}$ such that for any $(z_1, ..., z_{q_t-q_{t-1}}) \in A_t$, $z_1 + ... + z_{q_t-q_{t-1}} = 0$, and $W_t$ be the subspace of $\prod_{k=1}^{q_t-q_{t-1}} \bR^d = \bR^{(q_t-q_{t-1})d}$, such that for any $(z_1, ..., z_{q_t-q_{t-1}}) \in W_t$, $z_1 = z_2 = ... = z_{q_t-q_{t-1}} \in \bR^d$. By making the identification 
    \begin{align*}
        \bR^{(d+1)m} &\simeq \prod_{t=1}^r \prod_{k=1}^{q_t-q_{t-1}} (\bR \times \bR^d) \\
        &\simeq \prod_{t=1}^r (\bR^{(q_t-q_{t-1}} \times \bR^{(q_t-q_{t-1})d}) \\
        &= \prod_{t=1}^r \{(a_{q_{t-1}+1}, ..., a_{q_t}, w_{q_{t-1}+1}, ..., w_{q_t}) \in \bR^{q_t-q_{t-1}} \times \bR^{(q_t-q_{t-1})d} \}, 
    \end{align*}
    we can see that
    \begin{equation*}
        \overline{Q_P^r} \simeq \ttheta^* + \prod_{t=1}^{m_0} (A_t \times \{0\}) \times \prod_{t=m_0+1}^r (A_t \times W_t). 
    \end{equation*}
    Thus, $\overline{Q_P^r}$ is an affine subspace of dimension 
    \begin{equation*}
        \sum_{t=1}^{m_0} (q_t - q_{t-1}-1) + \sum_{t=m_0+1}^r [(q_t - q_{t-1}-1) + d] = (m - r) + (r - m_0)d. 
    \end{equation*}
    Since $\overline{Q_P^r}\cut Q_P^r$ consists of $\theta^* \in \overline{Q_P^r}$ such that for some $t\in \{m_0 +1, ..., r\}$ we have $w_{q_t}^* \in \{w_{q_1}^*, ..., w_{q_{t-1}}^*\}$, it is a close subset of $\bR^{(d+1)m}$. Fix $\theta^* \in Q_P^r$. Thus, any neighborhood of $\theta^*$ that intersects trivially with $\overline{Q_P^r}\cut Q_P^r$ satisfies $U \cap Q_P^r = U \cap \overline{Q_P^r}$. 
\end{itemize}
\end{proof}

\vs{0.5em}

\begin{lemma}[branch intersection]\label{intersection of brances of M infty I}
    Suppose that Assumptions \ref{generic activation} and \ref{finite-feature setting} hold. Let $m_0 \leq r \leq m$. Suppose that $P = (q_0, ..., q_r)$ and $P' = (q_0', ..., q_{r'}')$ are two partitions of $\{1, ..., m\}$. Also let $\pi$ be a permutation of $\{1, ..., m\}$. If there are indices $\{t_1, ..., t_{m'}\}$ such that $\{\pi(k): q_{t_{i-1}}' < k \leq q_{t_i}'\}$ is not contained in $(q_{t-1}, q_t]$ for all $t$, then $\overline{Q_P^r} \cap \overline{Q_{P',\pi}^{r'}} \siq \bigcup_{k=m_0}^{r-m'} Q^k$. In particular, if $\theta^* \in Q_P^r \cap \overline{Q^{r'}}$, then there is a partition $\tilde{P} = (\tilde{q}_0, ..., \tilde{q}_r)$ finer than $P$ and a permutation $\tilde{\pi}$ of $\{1, ..., m\}$ such that $\theta^* \in Q_P^r \cap \overline{Q_{\tilde{P},\tilde{\pi}}^{r'}}$. 
\end{lemma}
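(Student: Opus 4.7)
The plan is to combine the two closure conditions on $\theta^*$ and track the resulting weight-coincidences.

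\textbf{Step 1 (weight constancy).} By Proposition \ref{prop fragmentation/stratification}(c), any $\theta^* = (a_k^*, w_k^*)_{k=1}^m \in \overline{Q_P^r}$ has $w_k^*$ constant on each $P$-block $B_s := (q_{s-1}, q_s]$, with the constant equal to $\bar{w}_s$ for $s \leq m_0$. Since $\overline{Q_{P',\pi}^{r'}} = \rho_\pi(\overline{Q_{P'}^{r'}})$, applying the same characterization to $\rho_\pi^{-1}(\theta^*) \in \overline{Q_{P'}^{r'}}$ shows that if in addition $\theta^* \in \overline{Q_{P',\pi}^{r'}}$, then $w_k^*$ is constant on each set $J_t := \pi^{-1}((q'_{t-1}, q'_t])$, with the constant equal to $\bar{w}_t$ for $t \leq m_0$.

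\textbf{Step 2 (the inclusion).} For $\theta^*$ in the intersection, whenever some $J_t$ meets both $B_s$ and $B_{s'}$ with $s \neq s'$, the common constants on $B_s$ and $B_{s'}$ must agree. I encode this via a graph $G$ on $\{1,\ldots,r\}$ with an edge $\{s,s'\}$ whenever some $J_t$ meets both $B_s$ and $B_{s'}$; the number of distinct weights of $\theta^*$ equals the number of connected components of $G$. The hypothesis provides $m'$ multi-spanning $J_{t_i}$'s, each contributing an edge; the distinctness of the target weights $\bar{w}_1,\ldots,\bar{w}_{m_0}$, which are forced to sit on specific $P$-blocks and on specific $J_t$-blocks, rules out redundant merges among these edges and yields a reduction of at least $m'$ in the component count. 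Hence $\theta^* \in Q^k$ for some $m_0 \leq k \leq r - m'$, as the $m_0$ distinct target weights always survive the merging.

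\textbf{Step 3 (the in-particular statement).} By Proposition \ref{prop fragmentation/stratification}(b), $\overline{Q^{r'}} = \bigcup_{(P',\pi)} \overline{Q_{P',\pi}^{r'}}$ is a finite union, so $\theta^* \in \overline{Q_{P',\pi}^{r'}}$ for some $(P',\pi)$. Because $\theta^* \in Q_P^r$ has exactly $r$ distinct weights organized by $P$, Step 1 forces the set partition $\{J_1,\ldots,J_{r'}\}$ to refine $P$, with $J_s \subseteq B_s$ for all $s \leq m_0$ by label matching. I then build $\tilde{P}$ by reordering the $J_t$'s so that those inside $B_1$ come first, then those inside $B_2$, and so on, while positioning the labelled $J_s$'s so the first $m_0$ slots of $\tilde{P}$ still carry the labels $\bar{w}_1,\ldots,\bar{w}_{m_0}$. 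The cumulative block-sizes under this ordering pass through each $q_s$ because $\sum_{J_t \subseteq B_s}|J_t| = |B_s|$, so the resulting ordered partition $\tilde{P}$ (with $r'$ blocks) is finer than $P$ in the sense of ordered partitions. Any bijection $\tilde{\pi}$ realizing this reordering gives $\theta^* \in \overline{Q_{\tilde{P},\tilde{\pi}}^{r'}}$.

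\textbf{Main obstacle.} The trickiest step is the independence-of-merges count in Step 2: a priori, several of the hypothesised $J_{t_i}$'s could redundantly merge the same pair of $P$-blocks and thereby reduce the component count by fewer than $m'$. The resolution uses the label-distinctness of $\bar{w}_1,\ldots,\bar{w}_{m_0}$ and the fact that the labelling rigidly pins down the $B_s$–$J_t$ correspondence on these blocks, so the $m'$ crossings contribute $m'$ genuinely independent edges of $G$. A secondary technical point in Step 3 is matching the reordering of $J_t$'s with the labelling convention of $Q_{\tilde{P}}^{r'}$; this is handled by choosing $\tilde{\pi}$ to transfer each $J_t$ into the intended interval of $\tilde{P}$.
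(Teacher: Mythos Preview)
Your approach is essentially the paper's. Where the paper argues via convergent sequences in $Q_P^r$ and $Q_{P',\pi}^{r'}$, you invoke the closure description from Proposition~\ref{prop fragmentation/stratification}(c) directly; where the paper observes that each crossing block forces two $P$-blocks to share a weight and then asserts that ``$m'$ such occurrences'' give a drop of $m'$, you encode the same bookkeeping with a graph on $\{1,\ldots,r\}$. The ``in particular'' construction (force each $J_t$ into a single $P$-block, then reorder) is likewise the same as the paper's.

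The one substantive difference is that you try to justify the drop-by-$m'$ count, which the paper simply asserts. Your resolution via label-distinctness of $\bar w_1,\ldots,\bar w_{m_0}$ does not work: it gives no traction when every crossing block $J_{t_i}$ has $t_i>m_0$ and hence carries no fixed label. Concretely, take $m_0=1$, $m=6$, $r=r'=3$, $P=P'=(0,2,4,6)$, and $\pi$ the transposition $(4\ 5)$. Then $J_1=\{1,2\}\subseteq B_1$, while $J_2=\{3,5\}$ and $J_3=\{4,6\}$ each meet both $B_2=\{3,4\}$ and $B_3=\{5,6\}$, so $m'=2$ with $t_1=2$, $t_2=3$. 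But combining the closure constraints only forces $w_3^*=w_4^*=w_5^*=w_6^*$, a single merge, so $\theta^*$ can have two distinct weights and lie in $Q^2$, not in $\bigcup_{k\le r-m'}Q^k=Q^1$. In your graph $G$ the two edges coincide and the component count drops by one, not two. The paper's proof handles this point by assertion rather than argument, so you are no worse off than the paper here; but you should not claim the labels settle it.
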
 
\begin{proof}
Suppose that $\theta^* \in \overline{Q_P^r} \cap \overline{Q_{P',\pi}^{r'}}$. Then there are sequences $\{ \theta_n\}_{n=1}^\infty$ in $Q_P^r$ and $\{\theta'_n\}_{n=1}^\infty$ in $Q_{P',\pi}^{r'}$ that converge to $\theta^*$, respectively. Fix $1 \leq i \leq m'$. By hypothesis there are distinct $j_1, j_2 \in \{1, ..., r\}$ and $k_1, k_2 \in \{q_{t_{i-1}+1}', ..., q_{t_i}'\}$ such that 
\begin{equation*}
    q_{j_1-1} < \pi(k) \leq q_{j_1}; \quad q_{j_2-1} < \pi(k_2) \leq q_{j_2}. 
\end{equation*} 
It follows that 
\begin{equation*}
    \limftyn (w_{q_{j_1}})_n = \limftyn (w_{\pi(k_1)})_n' = \limftyn (w_{\pi(k_2)})_n' = \limftyn (w_{q_{j_2}})_n 
\end{equation*} 
because for each $n$ we have $(w_{\pi(k_1)})_n' = (w_{\pi(k_2)})_n'$. Therefore, 
\begin{equation}
    \limftyn (w_k)_n = w_t^* = \limftyn (w_s)_n
\end{equation} 
for all $q_{j_1-1} < k \leq q_{j_1}$, $q_{j_2-1} < s \leq q_{j_2}$ and all $t \in \{q_{j_1-1}+1,..., q_{j_1}\} \cup \{q_{j_2-1}+1,..., q_{j_2}\}$. This means any occurrence of such indices $t_i$ ``reduces" the number of distinct $w_t^*$'s of $\theta^*$ by 1; since there are $m'$ such occurrences, the number of distinct $w_t^*$'s of $\theta^*$ is at most $r-m'$. It is also clear that (the possibly empty set) $\overline{Q_P^r} \cap \overline{Q_{P',\pi}^{r'}} \siq Q^*$. Therefore, $\overline{Q_P^r} \cap \overline{Q_{P',\pi}^{r'}} \siq \bigcup_{k=m_0}^{r-m'} Q^k$. 

Now assume that $\theta^* \in Q_P^r \cap \overline{Q^{r'}}$, so there is some partition $\tilde{P}$ and permutation $\tilde{\pi}$ with $\theta^* \in Q_P^r \cap \overline{Q_{\tilde{P}, \tilde{\pi}}^{r'}}$. Also, since $\theta^* \in Q_P^r$, $\theta^* \in Q^r$. Then our proof above implies that for any index $j \in \{1, ..., r'\}$, there is some $t \in \{1, ..., r\}$ with 
\begin{equation*}
    \{\tilde{\pi}(k): \tilde{q}_{j-1} < k \leq \tilde{q}_j\} \siq (q_{t-1}, q_t]. 
\end{equation*} 
This means there is a permutation $\tau$ on $\{1, ..., m\}$ with 
\begin{equation*}
    \{\tau\tilde{\pi}(k): \tilde{q}_{j-1} < k \leq \tilde{q}_j\} \siq (q_{t-1}, q_t] 
\end{equation*}
and $\tau\tilde{\pi}(\tilde{q}_{j-1}) - \tau\tilde{\pi}(\tilde{q}_j) = 1$ for all $1 \leq j \leq r'$. Note that we always have $\tau\tilde{\pi}(\tilde{q}_0) = 0$ and $\tau\tilde{\pi}(\tilde{q}_{r'}) = m$. Replacing $\tilde{P}$ with the partition $(\tau\tilde{\pi}(\tilde{q}_0), ..., \tau\tilde{\pi}(\tilde{q}_{r'}))$ and replacing $\tilde{\pi}$ with $\tau^{-1}$ concludes the proof. \\
\end{proof}

\textbf{Example. } Consider a three-neuron model $g(\theta, x) = \sum_{k=1}^3 a_k \sigma(w_k x)$ a one-neuron target function $f^*(x) = \bara \sigma(\barw x)$. Up to permutations of $\{0, 1, 2, 3\}$, $Q^*$ consists of the following branches: 
\begin{itemize}
    \item [(a)] $Q_{\{0,3\}}^1$: points of the form $(a_1, \barw, a_2, \barw, \bara - (a_1 + a_2), \barw)$. This set has Hausdorff dimension $2$. 

    \item [(b)] $Q_{\{0,2,3\}}^2$: points of the form $(a_1, \barw, \bara - a_1, \barw, 0, w_3)$. This set has Hausdorff dimension $1 + d$. 

    \item [(c)] $Q_{\{0,1,3\}}^2$: points of the form $(\bara, \barw, a_2, w_2, -a_2, w_2)$. This set has Hausdorff dimension $1 + d$. 

    \item [(d)] $Q_{\{0,1,2,3\}}^3$: points of the form $(\bara, \barw, 0, w_2, 0, w_3)$. This set has Hausdorff dimension $2d$. 
\end{itemize}
Clearly, the closure of each such branch is an affine subspace of $\bR^{3(d+1)}$. Moreover, we have $\overline{Q_{\{0,2,3\}}^2} \cap \overline{Q_{\{0,1,3\}}^2} \siq Q_{\{0,3\}}^1$, $\overline{Q_{\{0,2,3\}}^2} \cap \overline{Q_{\{0,1,2,3\}}^3} \siq Q_{\{0,2,3\}}^2$, and $\overline{Q_{\{0,1,3\}}^2} \cap Q_{\{0,1,2,3\}}^3 \siq Q_{\{0,1,3\}}^2$. See also the figures below for how different branches are related to one another. \\

\begin{figure}[H]
    \centering
    \begin{minipage}[b]{0.49\textwidth}
        \includegraphics[width=\textwidth]{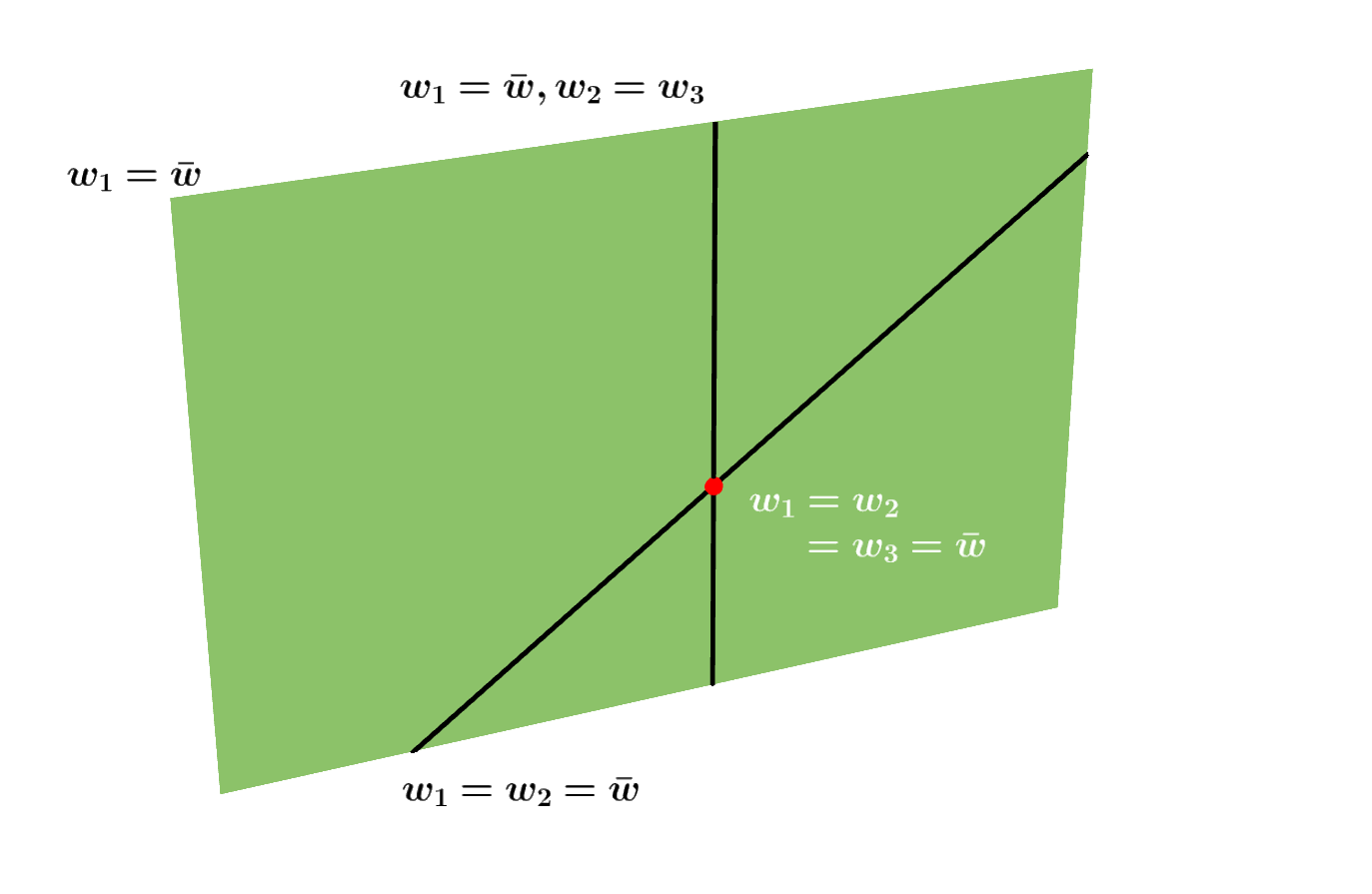}
    \end{minipage} 
    \begin{minipage}[b]{0.49\textwidth}
        \includegraphics[width=\textwidth]{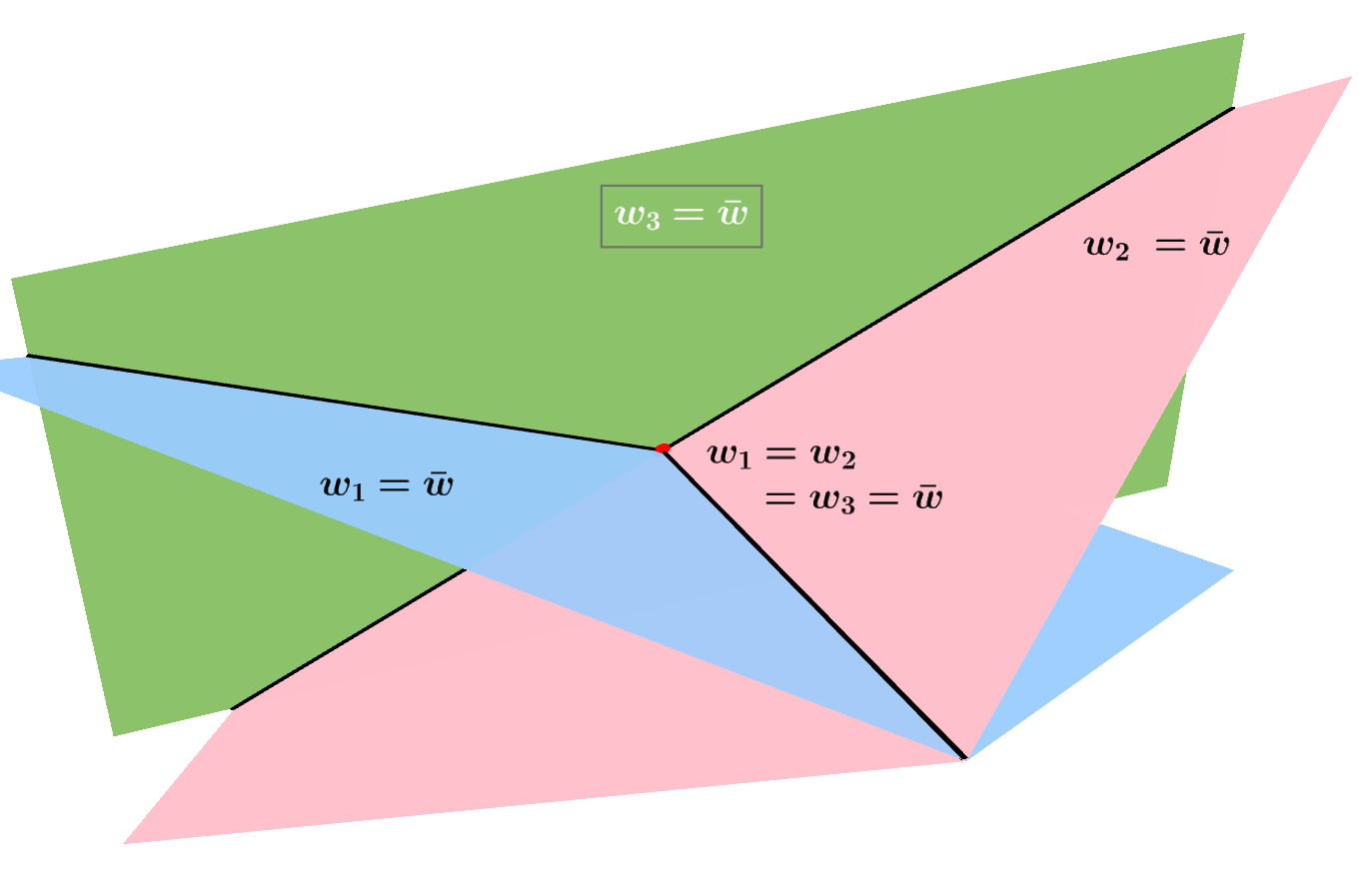}
    \end{minipage}
    \caption{Intersection of different $Q_P^r$'s, view from ``$w$-space". The left one shows the intersection of $Q_{\{0,1,2,3\}}^3$ (green surface), $Q_{\{0,1,3\}}^2$ (black line) and $Q_{\{0,3\}}^1$ (red dot). The right one shows the intersection of $Q_P^2$'s. Clearly $Q^2$ consists of three (geometrically) identical branches (green surface) with same $r$ but different permutation. Their intersections are blue lines and the red dot, which are also identical up to permutation.}
    \label{Fig: BranchIntersection_wSpace}
\end{figure}

\begin{figure}[thbp]
    \centering
    \includegraphics[width=0.5\textwidth]{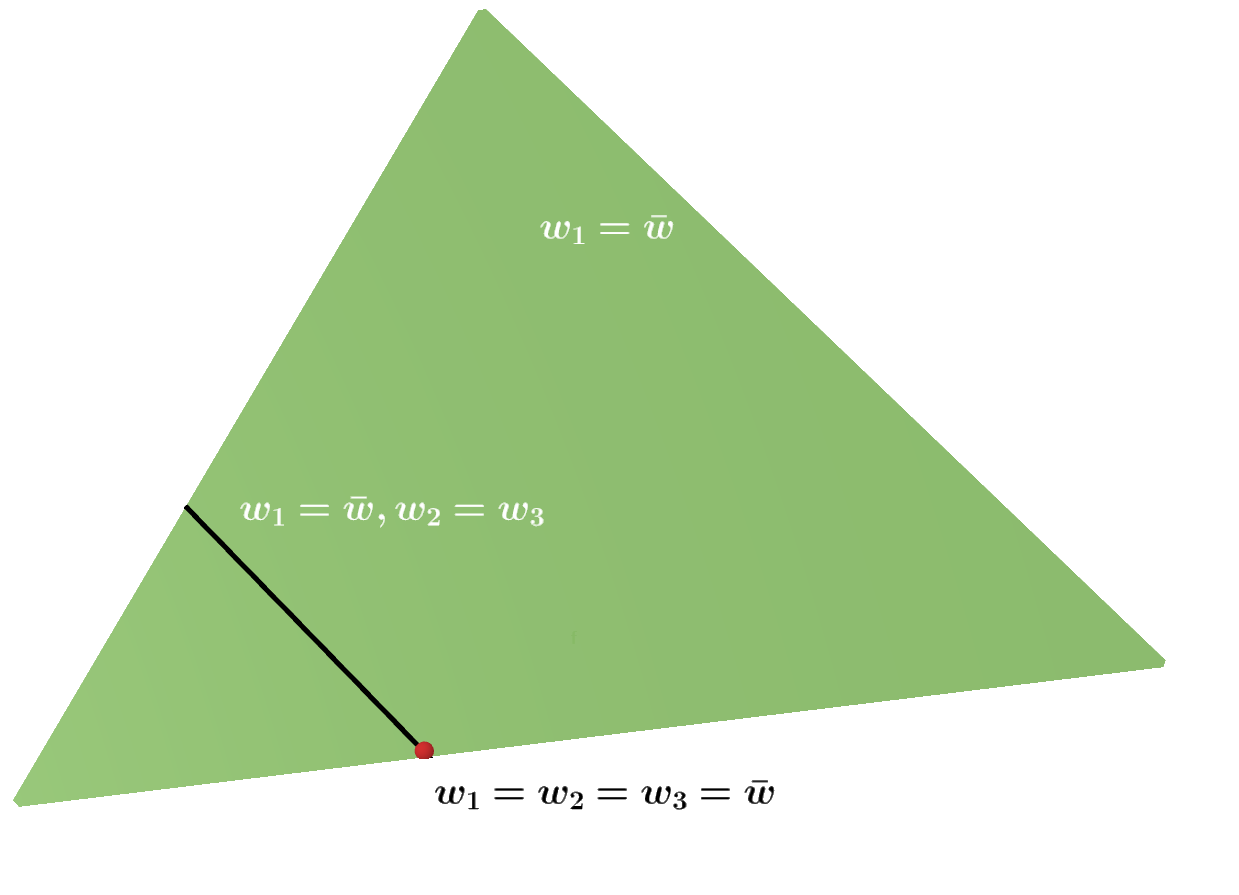}
    \caption{Intersection of branches $Q_{\{0,1,2,3\}}^3$ (green surface), $Q_{\{0,1,3\}}^2$ (tilted black line) and $Q_{\{0,3\}}^1$ (red dot), view from ``$a$-space".}
    \label{fig BranchIntersection_DistinctDim_aSpace}
\end{figure}

\begin{defn}[deficient number]
    Let $P = (q_0, ..., q_r)$ be a partition of $\{1, ..., m\}$. We say that the deficient number of $P$ is $l$ ($l \geq 0$) if there are precisely $t_1, ..., t_l \in \{m_0+1, ..., r\}$ such that $q_{t_j} - q_{t_j-1} = 1$ for all $1 \leq j \leq l$.
\end{defn} 

\vs{0.5em}

Clearly, if $P$ has deficient number $l > 0$ and $\theta^* = (a_k^*, w_k^*)_{k=1}^r\in Q_P^r$, then there are exactly $l$ distinct $w_k^*$'s whose corresponding $a_k^* = 0$. As we shall see in Section 3 and 4, the deficient number together with separating inputs provide information about the rank of the Hessian of $R$ at $Q_P^r$. 

\begin{lemma}\label{Lem range of deficient number}
    Let $P = (q_0, ..., q_r)$ be a partition of $\{1, ..., m\}$. If $r \le \frac{m+m_0}{2}$, the smallest possible deficient number of $P$ is $l = 0$. If $r > \frac{m+m_0}{2}$, the smallest possible deficient number of $P$ is $l = 2r - m - m_0$. 
\end{lemma}
\begin{proof}
    When $r \le \frac{m+m_0}{2}$ we can set $q_t = t$ for all $0 \le t \le m_0$ and $q_t - q_{t-1} \ge 2$ otherwise. Then clearly $l = 0$. \\

    When $r > \frac{m+m_0}{2}$, we must have $l \ge 1$. To get the smallest possible $l$, we need to make $q_{m_0}$ as small as possible in order to ``leave room for" non-unique $w_k^*$'s. Thus, still set $q_t = t$ for $0 \leq t \leq m_0$. This gives $q_k - q_{m_0} = m - m_0$. By definition of deficient number of $P$, there are $r - m_0 - l$ distinct $t > m_0$ satisfying $q_t - q_{t-1} \geq 2$, whence 
    \[
        2(k - m_0 -l) + l \leq m - m_0. 
    \]
    Solving the inequality, we see that $l \geq 2k - m - m_0$. This completes the proof. 
\end{proof}

\section{Loss Landscape Near \texorpdfstring{$Q^*$}{M}}\label{Section Loss landscape near Minfty}

\begin{lemma}[$R$ is analytic]\label{Lem Rmu is analytic}
    Let $\sigma$ be an analytic activation. Then $R$ defined as equation (\ref{Eq Definition of Rmu}) is analytic. 
\end{lemma}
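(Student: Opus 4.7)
The plan is to verify analyticity by writing $R$ as a finite combination of analytic building blocks and then invoking the closure of the class of real analytic functions under composition, multiplication, addition, and squaring. This mirrors the argument already used for Corollary~\ref{R mu is real analytic}, so the reasoning is quite direct and requires no new machinery beyond standard facts about real analytic functions of several variables.

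First, I would fix an arbitrary sample $x_i \in \bR^d$ and observe that the linear map $w_k \mapsto w_k \cdot x_i$ from $\bR^d$ to $\bR$ is trivially real analytic. Since $\sigma: \bR \to \bR$ is analytic by hypothesis, the composition $w_k \mapsto \sigma(w_k \cdot x_i)$ is real analytic on $\bR^d$ by the classical fact that a composition of real analytic maps is real analytic. Multiplying by the coordinate projection $a_k$ (which is linear, hence analytic) then yields an analytic function $(a_k, w_k) \mapsto a_k \sigma(w_k \cdot x_i)$ on $\bR \times \bR^d$, since pointwise products of analytic functions are analytic.

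Next, summing these terms over $k = 1, \dots, m$ gives $\theta \mapsto g(\theta, x_i) = \sum_{k=1}^m a_k \sigma(w_k \cdot x_i)$ as a finite sum of analytic functions, hence analytic on $\bR^{(d+1)m}$. Subtracting the constant $f^*(x_i)$ (analytic as a constant function of $\theta$) and then squaring preserve analyticity, so each summand $|g(\cdot, x_i) - f^*(x_i)|^2$ is analytic. Finally, summing over $i = 1, \dots, n$ expresses $R$ as a finite sum of analytic functions on $\bR^{(d+1)m}$, which is therefore analytic.

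There is no genuine obstacle here, since the argument reduces entirely to well-known closure properties of the class of real analytic functions. The only point worth spelling out is the composition step: one must interpret ``$\sigma$ is analytic'' in the standard sense of admitting a convergent local power series expansion at every point of $\bR$, so that $\sigma(w_k \cdot x_i)$ is analytic globally in $w_k$ rather than only on a disk of convergence of a single expansion. Under that (standard) interpretation, which is the one used in Assumption~\ref{good activation} together with analytic continuation along the real line, the conclusion is immediate.
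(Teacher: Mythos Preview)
Your proposal is correct and follows essentially the same approach as the paper: the paper's own argument (given in Corollary~\ref{R mu is real analytic}) notes that each $a_k\sigma(w_k\cdot x_i)$ is analytic by composition, and then that sums and products of analytic functions are analytic, so $R$ is analytic. Your write-up simply spells out these closure steps in more detail.
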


Recall the definition of Morse--Bott functions, which are summarized from \cite{PFeehan}. They play an essential role in our study of the local convergence of gradient flow near each branch of $Q^*$. 

\begin{defn}[Morse--Bott function, rephrased from \cite{PFeehan}]\label{Morse--Bott function defn}
    Let $f: \bR^{(d+1)m} \to \bR$ be a smooth function and let $M \subseteq (\nabla f)^{-1}\{0\}$ be a non-empty submanifold of $\bR^{(d+1)m}$. Let $NM \to M$ be the normal bundle of $M$. Given $p \in M$, let $\operatorname{Hess}_M f(p): N_x M \times N_x M \to \bR$ be the restriction of $\operatorname{Hess} f(p)$ to $N_x M$. $M$ is called a non-degenerate critical manifold of $f$ if $\operatorname{Hess}_M f(p)$ is non-degenerate for each $p \in M$, and such $f$ is called Morse--Bott (at $M$) if there is a neighborhood of $M$ restricted to which $(\nabla f)^{-1}\{0\} = M$. 
\end{defn}

For our notation, we view $\mathrm{Hess}_M f(x)$ as a map in Definition \ref{Morse--Bott function defn}. Since each bilinear map corresponds to a matrix, we may also view $\mathrm{Hess}_M f(x)$ as a matrix, as we usually do for $\operatorname{Hess}f(x)$. In this case, $\mathrm{Hess}_M f(x)$ is simply the restriction of $\operatorname{Hess}f(x)$ to a subspace of $\bR^{(d+1)m}$. \\

Morse--Bott functions are not rare, the most trivial examples being $f(x,y) = x^n$, $n = 2, 3, 4, ...$. Another important example is analytic function whose set of critical points are submanifold(s) and whose Hessian are non-degenerate at the normal spaces of it. Below we give more examples of them with the focus on neural network type functions.
\begin{itemize}
    \item [(a)] The linear regression model with $L^2$ loss
    \[
        R (\theta) = R(w) = \sum_{i=1}^n \left| g(w, x_i) - y_i \right|^2, \quad g(w, x_i) = w \cdot x
    \]
    is clearly a Morse--Bott function. This is because $\operatorname{Hess}R$ is a constant matrix-valued function, which implies that $(\nabla R)^{-1}\{0\}$ is a union of affine spaces which are all orthogonal to $\ker(\operatorname{Hess}R)$. 

    \item [(b)] Consider $\sigma(x) = e^x$ and $m = n =1$, namely, 
    \[
        R(\theta) = (g(\theta, x) - y)^2, \quad g(\theta, x) = a e^{w\cdot x}, 
    \]
    where $x$ and $y$ are chosen and fixed. Since 
    \[
        \parf{R}{a} = 2 \left( a e^{w\cdot x} - y \right) e^{w\cdot x}, \quad \parf{R}{w} = 2a (a e^{w\cdot x} - y)e^{w\cdot x} x
    \]
    we can see that $\nabla R(\theta) = 0$ if and only if $\theta \in \Rmuzero$, if and only if $a = y e^{-w\cdot x}$. This shows that $M :=(\nabla R)^{-1}\{0\} = \Rmuzero$ and is a submanifold of codimension 1 in $\bR^{d+1}$. Then we can calculate the Hessian of $R$ on it: 
    \[
        \operatorname{Hess}R(\theta) = 2
        \begin{pmatrix}
            e^{2w\cdot x}       & a e^{2w\cdot x} x_1   &... & ae^{2w\cdot x} x_d \\ 
            a e^{2w\cdot x} x_1 & a^2 e^{2w\cdot x}x_1^2  &... & a^2 e^{2w\cdot x}x_1 x_d \\ 
            \vdots              &\vdots                 &\ddots &\vdots \\ 
            a e^{2w\cdot x} x_d & a^2 e^{2w\cdot x}x_1 x_d &... & a^2 e^{2w\cdot x}x_d^2
        \end{pmatrix}. 
    \]
    To simplify, if $v$ denotes the first column of $R(\theta)$, then $\operatorname{Hess}R(\theta) = (v, ax_1 v, ..., ax_d v)$, whence it has constant rank 1 on $M$. Since $T_\theta M \siq \ker \operatorname{Hess}R(\theta)$, we see that $\operatorname{Hess}_M R(\theta)$ must be non-degenerate. 

    \item [(c)] In fact, as we shall see in Lemma \ref{separated branch}, under our finite-feature setting of 2-layer neural networks (see Assumption \ref{finite-feature setting}), for sufficiently many samples (number depending on $r \in \{m_0, ..., m\}$), $R$ is Morse--Bott on a dense, relatively open subset of $Q_P^r$ for $r \geq (m+m_0)/2$ and many choices of the partition $P$. \\
\end{itemize}

\begin{figure}[H]
    \centering
    \begin{minipage}[b]{0.5\textwidth}
        \includegraphics[width=\textwidth]{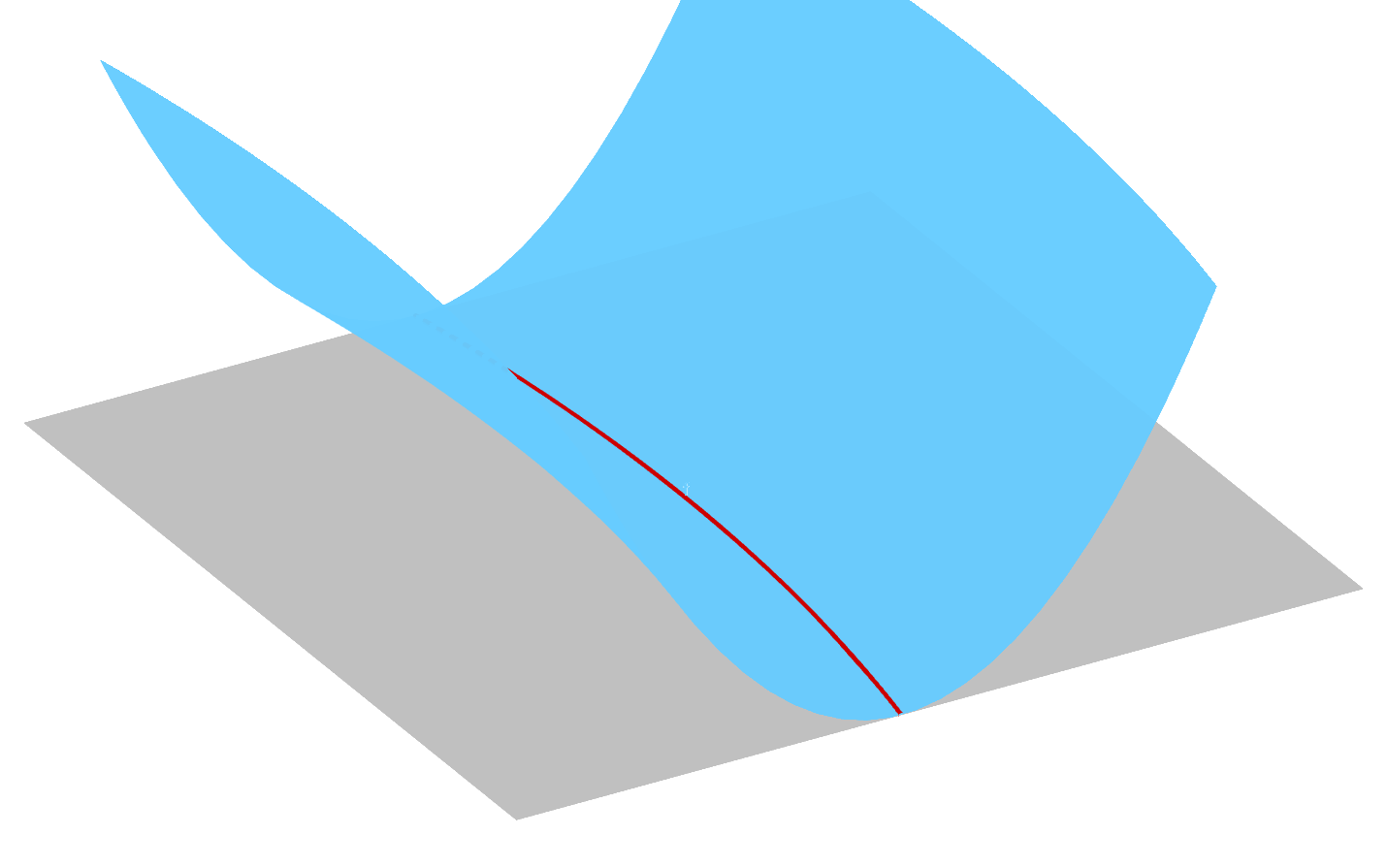}
    \end{minipage} 
    \begin{minipage}[b]{0.4\textwidth}
        \includegraphics[width=\textwidth]{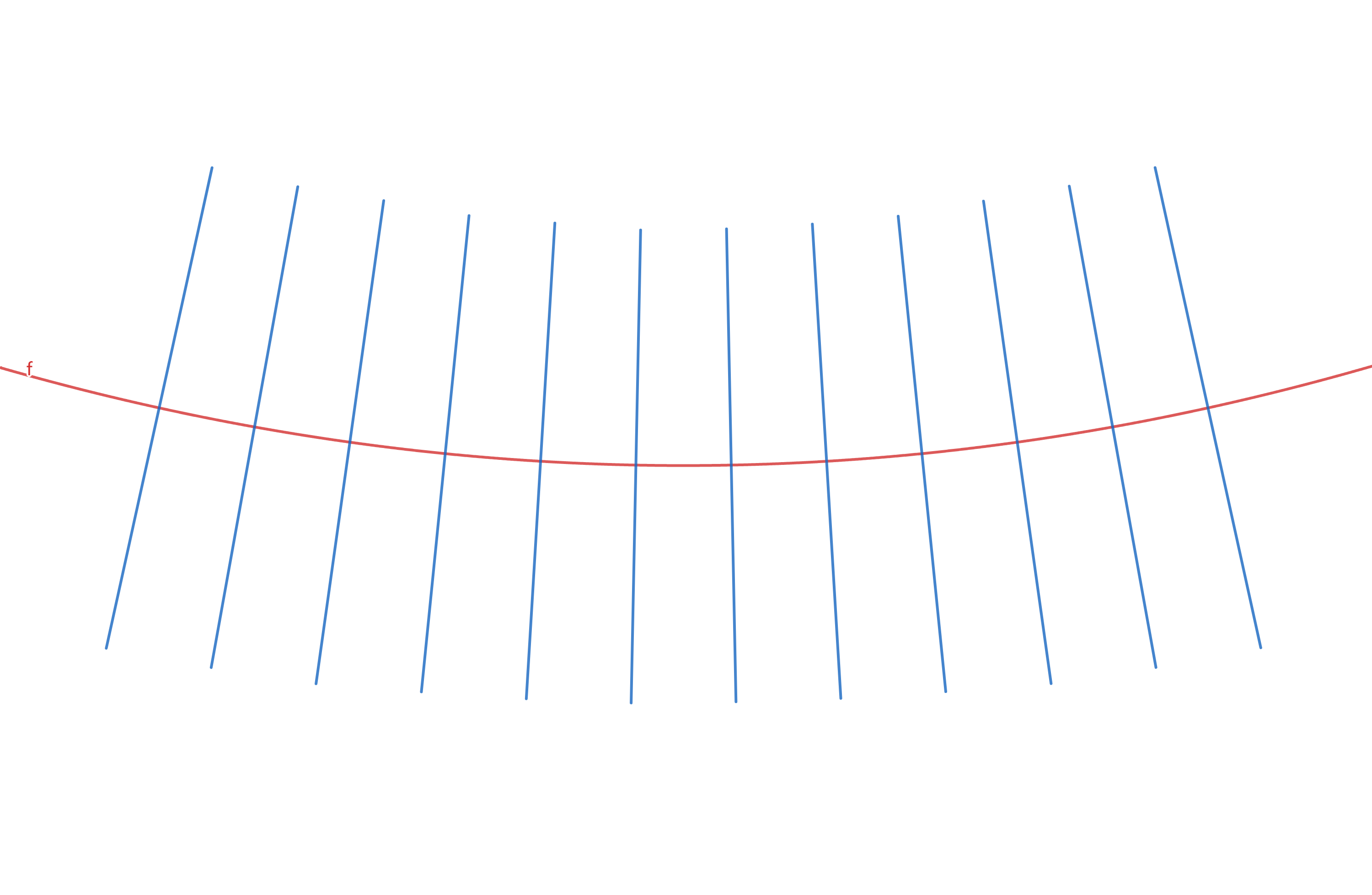}
    \end{minipage}
    \caption{Illustration of $f(x,y) = (x^2 - y)^2$. The left one shows the graph of this function as a submanifold in $\bR^3$ (blue manifold) and its zero set (red curve). The right one shows $f^{-1}\{0\} \siq \bR^2$, the blue lines are the normal bundles along it, along which $\mathrm{Hess} f$ is non-degenerate.}
\end{figure}

We then investigate how $\operatorname{Hess}R$ near $Q^*$ depends on the choice of samples and the branches $Q_P^r$. In fact, we will give sample size thresholds (i) for branches to be separated from the imperfect global minima $\Rmuzero \cut Q^*$, and (ii) for certain branches at which the loss $R$ becomes Morse--Bott. Meanwhile, we show that for other branches, they are provably not Morse--Bott once they are separated from the imperfect global minima. Therefore, we obtain a hierarchical (with respect to sample size) characterization of $\Rmuzero$ near $Q^*$ when the system is overparametrized.

\begin{lemma}[separation of $Q^*$ -- overparametrized case]\label{separated branch}
    Suppose that Assumptions \ref{generic activation} and \ref{finite-feature setting} hold. Let $n \in \bN$. Suppose that $\{x_i\}_{i=1}^{(d+1)m}$ are separating inputs for $(\bar{w}_1, ..., \bar{w}_m)$ for an arbitrary choice of distinct $\barw_{m_0}, ..., \barw_m \in \bR^d\cut\{0, \barw_k\}_{k=1}^{m_0}$. Given $m_0 \leq r \leq m$ and $1 \le n \leq (d+1)m$, define $R(\theta) = \sum_{i=1}^n |g(\theta, x_i) - f^*(x_i)|^2$ as before. Then the following results hold.
    \begin{itemize}
        \item [(a)] When $n \leq (d+1)m_0$, any $\theta^* \in Q^*$ has an open neighborhood $U$ such that $R$ is Morse--Bott at $U \cap \Rmuzero$, namely, if $M:= U \cap \Rmuzero$ then for any $\ttheta^* \in U \cap \Rmuzero$, $\mathrm{Hess}_M R(\ttheta^*)$ is non-degenerate. 
        
        \item [(b)] Let $P = (q_0, ..., q_r)$ be a partition with deficient number $l$. When $n \leq r + (r-l)d$, there is an open $U \siq \bR^{(d+1)m}$ such that $R$ is Morse--Bott at $U \cap \Rmuzero$, with $U \cap Q_P^r \siq U \cap \Rmuzero$. When $n \ge r + (m + m_0 - r)d$, up to an arbitrarily small perturbation of samples, we have an open $U \siq \bR^{(d+1)m}$ such that $R$ with $U \cap \Rmuzero = U \cap Q_P^r$. 
        
        \item [(c)] Let $P$ be a partition with deficient number $l$. Suppose that $n \geq r + (m+m_0-r)d$, the samples $\{x_i\}_{i=1}^n$, and open $U\siq \bR^{(d+1)m}$ are chosen so that (b) holds. When $r < (m+m_0)/2$, $R$ is not Morse--Bott at $U \cap Q_P^r$. When $r \geq (m+m_0)/2$, the samples and $U$ can be chosen so that $R$ becomes Morse--Bott at $U \cap Q_P^r$ if and only if $l = 2r - m - m_0$. 
    \end{itemize}
\end{lemma}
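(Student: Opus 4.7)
All three parts hinge on the identity $\operatorname{Hess}R(\theta^*) = 2\,J(\theta^*)^{\mathsf T}J(\theta^*)$ for $\theta^* \in R^{-1}\{0\}$, where $J(\theta)$ is the $n \times (d+1)m$ Jacobian of the residual map $\theta \mapsto (g(\theta,x_i) - f^*(x_i))_{i=1}^n$. At $\theta^* \in Q_P^r$ with deficient number $l$, the column-redundancy analysis from the proof of Lemma \ref{basic lower bound of Hess R mu} shows the distinct columns of $J(\theta^*)$ are spanned by $r$ ``$\sigma$-vectors'' (one per block) together with $(r-l)d$ ``$\sigma'$-vectors'' (the $l$ blocks with $a_k^* \equiv 0$ contribute no weight-columns). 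The type-II separating sample hypothesis, upgraded via Corollary \ref{open dense set of separating samples} to accommodate the freely chosen weights $\tilde w_t$, certifies that these $r+(r-l)d$ vectors are linearly independent, yielding $\operatorname{rank}J(\theta^*) = \min\{n,\, r+(r-l)d\}$.

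For part (a), the elementary inequality $ld \leq (r-m_0)(d+1)$ (immediate from $l \leq r-m_0$) rearranges to $(d+1)m_0 \leq r+(r-l)d$, so $\operatorname{rank}J(\theta^*) = n$ at every $\theta^* \in Q^*$. Lower semicontinuity of matrix rank together with the upper cap $n$ pins the rank at $n$ throughout a neighborhood $U$ of $\theta^*$; the analytic rank theorem (Lemma \ref{analytic version of rank theorem}) then presents $U \cap R^{-1}\{0\}$ as an analytic submanifold of codimension $n$ with tangent $\ker J(\theta)$, on whose orthogonal complement $J(\theta)^{\mathsf T}J(\theta)$ is positive definite. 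The first half of part (b) is verbatim: the hypothesis $n \leq r+(r-l)d$ again gives $\operatorname{rank}J(\theta^*) = n$, while $U \cap Q_P^r \subseteq U \cap R^{-1}\{0\}$ is automatic from $Q_P^r \subseteq Q^*$.

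The second half of part (b), the case $n > r+(m+m_0-r)d$, is where perturbation enters. The plan is to invoke Lemma \ref{zero set of multivariable analytic function}(c): it suffices to exhibit $R^{-1}\{0\}$ locally near $\theta^*$ as a finite union of analytic submanifolds of dimension $(m-r)+(r-m_0)d$, one of which is $Q_P^r$ (itself affine by Proposition \ref{prop fragmentation/stratification}). Other branches of $Q^*$ passing through $\theta^*$ are controlled by Lemma \ref{intersection of brances of M infty I}, which drops their intersections with $Q_P^r$ into strictly lower-dimensional strata $\bigcup_{k<r} Q^k$, so they contribute no full-dimensional sheet. Components of $R^{-1}\{0\}$ lying outside $Q^*$ correspond to spurious $\theta$ with $g(\theta,x_i) = f^*(x_i)$ at every sample despite $g(\theta,\cdot)\not\equiv f^*$; such points are eliminated by applying Corollary \ref{square-loss type perturbation} to the analytic map $(\theta,x) \mapsto g(\theta,x)-f^*(x)$ under an arbitrarily small perturbation of the $x_i$. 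The delicate coordination, which looks to be the main obstacle, is arranging the perturbation so that it simultaneously preserves the type-II separating property (an open condition by Corollary \ref{open dense set of separating samples}) and collapses all extraneous zero-sheets through $\theta^*$.

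Part (c) then reduces to a dimension audit. Under the hypothesis of (b) one may take $U \cap R^{-1}\{0\} = U \cap Q_P^r$, whence Morse--Bottness at $U \cap Q_P^r$ amounts to $\operatorname{rank}\operatorname{Hess}R(\theta^*) = \operatorname{codim}Q_P^r = r+(m+m_0-r)d$ on $Q_P^r$. Matching this with the computed value $r+(r-l)d$ forces $l = 2r-m-m_0$. Since $l \geq 0$, this equation has no solution when $r < (m+m_0)/2$, ruling out Morse--Bott in that regime. When $r \geq (m+m_0)/2$, the explicit construction from the closing discussion of the proof of Lemma \ref{basic lower bound of Hess R mu} (take $q_t = t$ for $t \leq m_0$ and balance the remaining blocks so that exactly $2r-m-m_0$ of them have size one) exhibits partitions with precisely $l = 2r-m-m_0$. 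For such $P$, rank and codimension agree, the inclusion $T_{\theta^*}Q_P^r \subseteq \ker J(\theta^*)$ becomes an equality by dimension count, and Morse--Bott at $U \cap Q_P^r$ follows.
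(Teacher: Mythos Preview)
Your proposal tracks the paper's proof essentially step for step: the identity $\operatorname{Hess}R(\theta^*)=2J(\theta^*)^{\mathsf T}J(\theta^*)$ on $R^{-1}\{0\}$, the rank count $\operatorname{rank}J(\theta^*)=\min\{n,\,r+(r-l)d\}$ via the type-II separating hypothesis and Corollary~\ref{open dense set of separating samples}, the rank-theorem argument for the Morse--Bott claims in (a) and the first half of (b), perturbation in the style of Corollary~\ref{square-loss type perturbation} combined with Lemma~\ref{zero set of multivariable analytic function}(c) for the second half of (b), and the dimension match $r+(r-l)d=r+(m+m_0-r)d\iff l=2r-m-m_0$ for (c). One caveat worth flagging: Lemma~\ref{intersection of brances of M infty I} only forces \emph{incompatible} branches (those with $m'\geq 1$) into $\bigcup_{k<r}Q^k$; refinement branches $Q_{\tilde P,\tilde\pi}^{r'}$ with $r'>r$ and $m'=0$ are not covered by that lemma and, for $d\geq 2$, have dimension strictly exceeding $\dim Q_P^r$, so before invoking Lemma~\ref{zero set of multivariable analytic function}(c) you must place $\theta^*$ generically in $Q_P^r$, away from the finitely many proper affine subspaces $\overline{Q_{P',\pi'}^{r'}}\cap Q_P^r$ --- a step the paper's own proof also leaves implicit.
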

\begin{proof}~
\begin{itemize}
    \item [(a)] By Proposition \ref{prop fragmentation/stratification}, $Q^* = \bigcup_{r=m_0}^m \bigcup_{P,\pi} Q_{P,\pi}^r$, so there are $k_1, ..., k_{m_0} \in \{1, ..., m\}$ such that $a_{k_t}^* \neq 0$ and $w_{k_t}^* = \barw_t$ for all $1 \leq t \leq m_0$. By hypothesis, $\{x_i\}_{i=1}^{(d+1)m}$ are separating inputs for $\barw_1, ..., \barw_m$. Thus, for each $n \le (d+1)m$, $\{x_i\}_{i=1}^n$ are separating inputs for $\barw_1, ..., \barw_m$; in particular, they are separating inputs for $\barw_1, ..., \barw_{m_0}$ as well. This implies that for any $1 \leq i \leq n$, 
    \[
        \nabla_\theta (g(\theta^*,x_i) - f^*(x_i)) =: h_i(\theta^*) \neq 0
    \] 
    and thus 
    \[
        \mathrm{Hess}\,(g(\theta^*, x_i) - f^*(x_i))^2 = h_i(\theta^*) h_i(\theta^*)^\TT
    \]
    has rank 1, with $h_i(\theta^*)$ being an eigenvector for the only non-zero eigenvalue of this Hessian (because $\theta^* \in \Rmuzero$; in fact, this holds for any $\ttheta^* \in \Rmuzero$). Also, it implies that
    \begin{align*}
        &\,\,\,\,\,\,\, \operatorname{rank} 
        \begin{pmatrix}
            \sigma(w_{k_1}^*\cdot x_1) &... &\sigma(w_{k_{m_0}}^*\cdot x_1) &\sigma'(w_{k_1}^*\cdot x_1)x_1^\TT &... &\sigma'(w_{k_{m_0}}^*\cdot x_1)x_1^\TT\\ 
            \vdots                 &\ddots &\vdots                  &\vdots                         &\ddots &\vdots \\ 
            \sigma(w_{k_1}^*\cdot x_n) &... &\sigma(w_{k_{m_0}}^*\cdot x_n) &\sigma'(w_{k_1}^*\cdot x_n)x_n^\TT &... &\sigma'(w_{k_{m_0}}^*\cdot x_n)x_n^\TT
        \end{pmatrix} \\ 
        &= \operatorname{rank}  
       \begin{pmatrix}
            \sigma(\barw_1\cdot x_1) &... &\sigma(\barw_{m_0}\cdot x_1) &\sigma'(\barw_1\cdot x_1)x_1^\TT &... &\sigma'(\barw_{m_0}\cdot x_1)x_1^\TT\\ 
            \vdots                 &\ddots &\vdots                  &\vdots                         &\ddots &\vdots \\ 
            \sigma(\barw_1\cdot x_n) &... &\sigma(\barw_{m_0}\cdot x_n) &\sigma'(\barw_1\cdot x_n)x_n^\TT &... &\sigma'(\barw_{m_0}\cdot x_n)x_n^\TT
        \end{pmatrix} \\ 
        &= n. 
    \end{align*}
    Therefore, $\{h_i(\theta^*)\}_{i=1}^n$ is a linearly independent set. By continuity of the $h_i$'s, there is an open neighborhood $U$ of $\theta^*$ such that $\{h_i(\theta)\}_{i=1}^n$ is linearly independent for any $\theta \in U$. This implies that $M := U \cap \Rmuzero$ is the transverse intersection of $n$ codimension-1 submanifolds of $U$: 
    \[
        (g(\cdot, x_i) - f^*(x_i))|_{U}^{-1}\{0\}, \quad 1 \leq i \leq n. 
    \]
    Therefore, $M$ has codimension $n$. On the other hand, when $\ttheta^* \in U \cap \Rmuzero$, each $h_i(\ttheta^*)$ is an eigenvector for the only non-zero eigenvalue of $\mathrm{Hess}\,(g(\ttheta^*, x_i) - f^*(x_i))^2$. Since 
    \[
        \mathrm{Hess}\, R(\ttheta^*) = \sum_{i=1}^n \mathrm{Hess}\,(g(\ttheta^*, x_i) - f^*(x_i))^2
    \] 
    the linear independence of $h_i(\ttheta^*)$'s yields $\operatorname{rank}\left(\mathrm{Hess} R(\ttheta^*)\right) = n$. It follows that $\mathrm{Hess}_M R(\ttheta^*)$ is non-degenerate for each $\ttheta^* \in M$. Thus, $R$ is Morse--Bott at $M = U \cap \Rmuzero$. 
    
    \item [(b)] First assume that $n \leq r + (r-l)d$. Without loss of generality, we may let $q_t = t$ for all $m - l \leq t \leq m$; the general case can be reduced to this one by a rearrangement of indices. Since $r\le m$ and $r-l \le m$ and since the $\{x_i\}_{i=1}^n$ are separating inputs for $\barw_1, ..., \barw_m$, $\{x_i\}_{i=1}^n$ are both separating inputs for $\barw_1, ..., \barw_r$ and $\barw_1, ..., \barw_{r-l}$. 
    
    Let $\theta^* = (a_k^*, w_k^*)_{k=1}^m \in Q_P^r$ be such that
    \begin{itemize}
        \item [i)] $w_{q_t}^* = \barw_{q_t}$ for all $1 \leq t \leq m$. 
        \item [ii)] For any $1 \leq t \leq m - l$, there is some $k_t \in \{q_{t-1}+1, ..., q_t\}$ with $a_{k_t}^* \neq 0$. 
    \end{itemize}
    For simplicity of our computation below, we may also set $k_t = t$ when $m - l \leq t \leq m$. Then $a_{k_t}^* \neq 0$ if and only if $t \leq m - l$. Thus, by our hypothesis on the separating inputs, we have 
    \begin{align*}
        &\,\,\,\,\,\,\, \operatorname{rank}
        \begin{pmatrix}
            \sigma(w_1^*\cdot x_1) &... &\sigma(w_m^*\cdot x_1) &a_1^*\sigma'(w_1^*\cdot x_1)x_1^\TT &... &a_m^*\sigma'(w_m^*\cdot x_1)x_1^\TT\\ 
            \vdots                 &\ddots &\vdots                  &\vdots                         &\ddots &\vdots \\ 
            \sigma(w_1^*\cdot x_n) &... &\sigma(w_m^*\cdot x_n) &a_1^*\sigma'(w_1^*\cdot x_n)x_n^\TT &... &a_m^*\sigma'(w_m^*\cdot x_n)x_n^\TT
        \end{pmatrix} \\ 
        &= \operatorname{rank}
        \begin{pmatrix}
            \sigma(\barw_1\cdot x_1) &... &\sigma(\barw_r\cdot x_1) &a_{k_1}^*\sigma'(\barw_1\cdot x_1)x_1^\TT &... &a_{k_r}^*\sigma'(\barw_r\cdot x_1)x_1^\TT\\ 
            \vdots                 &\ddots &\vdots                  &\vdots                         &\ddots &\vdots \\ 
            \sigma(\barw_1\cdot x_n) &... &\sigma(\barw_r\cdot x_n) &a_{k_1}^*\sigma'(\barw_1\cdot x_n)x_n^\TT &... &a_{k_r}^*\sigma'(\barw_r\cdot x_n)x_n^\TT
        \end{pmatrix} \\ 
        &= \operatorname{rank}
        \begin{pmatrix}
            \sigma(\barw_1\cdot x_1) &... &\sigma(\barw_r\cdot x_1) &a_{k_1}^*\sigma'(\barw_1\cdot x_1)x_1^\TT &... &a_{k_{r-l}}^*\sigma'(\barw_{r-l}\cdot x_1)x_1^\TT\\ 
            \vdots                 &\ddots &\vdots                  &\vdots                         &\ddots &\vdots \\ 
            \sigma(\barw_1\cdot x_n) &... &\sigma(\barw_r\cdot x_n) &a_{k_1}^*\sigma'(\barw_1\cdot x_n)x_n^\TT &... &a_{k_{r-l}}^*\sigma'(\barw_{r-l}\cdot x_n)x_n^\TT
        \end{pmatrix} \\ 
        &= \operatorname{rank}
        \begin{pmatrix}
            \sigma(\barw_1\cdot x_1) &... &\sigma(\barw_r\cdot x_1) &\sigma'(\barw_1\cdot x_1)x_1^\TT &... &\sigma'(\barw_{r-l}\cdot x_1)x_1^\TT\\ 
            \vdots                 &\ddots &\vdots                  &\vdots                         &\ddots &\vdots \\ 
            \sigma(\barw_1\cdot x_n) &... &\sigma(\barw_r\cdot x_n) &\sigma'(\barw_1\cdot x_n)x_n^\TT &... &\sigma'(\barw_{r-l}\cdot x_n)x_n^\TT
        \end{pmatrix} \\ 
        &= n. 
    \end{align*} 
    Argue in the same way as we do in (a), we show that $\operatorname{rank}\left(\mathrm{Hess} R(\theta^*)\right) = n$. Also, note that this is the largest possible rank of $\operatorname{Hess}R(\theta^*)$ for $\theta^* \in Q_P^r$. By continuity, $\theta^* \in Q_P^r$ has an (open) neighborhood $U \siq \bR^{(d+1)m}$ such that $\operatorname{rank}\left(\mathrm{Hess} R(\theta)\right) \geq n$ for all $\theta \in U$. On the other hand, the computation above shows that $M := U \cap \Rmuzero$ is the transverse intersection of $n$ submanifolds in $\bR^{(d+1)m}$, whence a submanifold having codimension $n$. By the proof in (a), it follows that for any $\ttheta^* \in M$, $\mathrm{Hess}_M R(\ttheta^*)$ is non-degenerate, namely, $R$ is Morse--Bott at $U \cap \Rmuzero$. 

    Now assume that $n \geq r + (m + m_0 - r)d$. Following the proof of Corollary \ref{square-loss type perturbation} with $E = Q^*$ and $s' = r + (m+m_0-r)d$, we can perturb $x_1, ..., x_{s'}$ arbitrarily small so that $\calA_{Q^*, s'}$ is contained in a countable (possibly finite) union of analytic submanifolds $\calM_1, \calM_2, ...$ of $\bR^{(d+1)m}$ with codimension $s' = r + (m+m_0-r)d$. By Proposition \ref{prop fragmentation/stratification} (c), $\overline{Q_P^r}$ is an affine subspace of codimension 
    \[
        (d+1)m - [(m-r) + (r-m_0)d] = r + (m+m_0-r)d, 
    \]
    and each $\theta^* \in Q_P^r$ has an open neighborhood $V$ such that $V \cap Q_P^r = V \cap \overline{Q_P^r}$. Thus, for any such $V$, $V \cap Q_P^r$ is a submanifold of codimension $r + (m+m_0-r)d$. In other words, the zero set of $R|_{V}$, the restriction of $R$ to $V$, is contained in a countable union of submanifolds of codimension $r + (m+m_0-r)d$. Now by Lemma \ref{zero set of multivariable analytic function} (c), we can find an open $U \siq V$ such that $U \cap \Rmuzero = U \cap Q_P^r \neq \emptyset$. 
    
    \item [(c)] Because $R$ is constant zero at $U \cap Q_P^r$, $R$ is Morse--Bott at $U \cap Q_P^r$ if and only if $\operatorname{rank}\left(\mathrm{Hess} R\right)= \operatorname{codim}(U \cap Q_P^r)$, which is also the largest possible rank of $\operatorname{Hess}R$. We know that $\operatorname{rank}\left(\mathrm{Hess} R(\theta^*)\right) \leq r + (r-l)d$. Thus, if $R$ is Morse--Bott at $U \cap Q_P^r$, we must have 
    \[
        r + (r - l)d \geq \operatorname{codim} (U \cap Q_P^r) = r + (m + m_0 - r)d, 
    \]
    which yields $l \leq 2r - m - m_0$. Since $l \geq 2r - m - m_0$ (recall Lemma \ref{Lem range of deficient number}) and obviously $l \geq 0$, we conclude that $l = 2r - m - m_0$ and $r \geq (m+m_0)/2$. In other words, when $r < (m+m_0)/2$ or $l > 2r - m - m_0$, we can never make $R$ to be Morse--Bott at $U \cap Q_P^r$, no matter what samples and what $U$ we choose. 

    To prove the remaining part of (c), let $r \geq (m+m_0)/2$ and let $l = 2r - m - m_0$ and $n \geq r + (m + m_0 - r)d = r + (r-l)d$. By (b), there is some $\theta^* = (a_k^*, w_k^*)_{k=1}^m \in Q_P^r$ such that the separating inputs $\{x_i\}_{i=1}^{n'}$, $n' = r + (r-l)d$, satisfies 
    \begin{align*}
    &\,\,\,\,\,\,\,\operatorname{rank} 
    \begin{pmatrix}
        \sigma(w_1^*\cdot x_1) &... &\sigma(w_m^*\cdot x_1) &a_1^*\sigma'(w_1^*\cdot x_1)x_1^\TT &... &a_m^*\sigma'(w_m^*\cdot x_1)x_1^\TT\\ 
        \vdots                 &\ddots &\vdots                  &\vdots                         &\ddots &\vdots \\ 
        \sigma(w_1^*\cdot x_n) &... &\sigma(w_m^*\cdot x_n) &a_1^*\sigma'(w_1^*\cdot x_{n'})x_{n'}^\TT &... &a_m^*\sigma'(w_m^*\cdot x_{n'})x_{n'}^\TT
    \end{pmatrix} \\ 
    &= r + (m+m_0-r)d. 
    \end{align*}
    Since $\Rmuzero$ is contained in the zero set of the function $\theta \mapsto \sum_{i=1}^{n'} |g(\theta, x_i) - f(x_i)|^2$, we must have 
    \begin{align*}
    &\,\,\,\,\,\,\,\operatorname{rank} 
    \begin{pmatrix}
        \sigma(w_1^*\cdot x_1) &... &\sigma(w_m^*\cdot x_1) &a_1^*\sigma'(w_1^*\cdot x_1)x_1^\TT &... &a_m^*\sigma'(w_m^*\cdot x_1)x_1^\TT\\ 
        \vdots                 &\ddots &\vdots                  &\vdots                         &\ddots &\vdots \\ 
        \sigma(w_1^*\cdot x_n) &... &\sigma(w_m^*\cdot x_n) &a_1^*\sigma'(w_1^*\cdot x_n)x_n^\TT &... &a_m^*\sigma'(w_m^*\cdot x_n)x_n^\TT
    \end{pmatrix} \\ 
    &= r + (r - l)d = r + (m + m_0 - r)d. 
    \end{align*}
    Then, as in (b), there is an open neighborhood $U$ of $\theta^*$ such that $U \cap \Rmuzero = U \cap Q_P^r$ and $R$ is Morse--Bott at $U \cap \Rmuzero$. 
\end{itemize}
\end{proof}

\begin{remark}
    All the three parts of lemma \ref{separated branch} can be strengthened. First, the results in (a) can be strengthened as: $R$ restricted to some open $U \siq \bR^{(d+1)m}$ containing $Q^*$ is Morse--Bott at $U \cap \Rmuzero$. This can be proved by simply taking the union of the neighborhoods we construct for points in $Q^*$. For (b) and (c), we note that  $U \siq \bR^{(d+1)m}$ can be chosen so that $U \cap Q_P^r$ is a dense subset of $Q_P^r$. Indeed, the set of $\theta^* \in Q_P^r$ such that $a_{k_t}^* \neq 0$ for some $q_{t-1} < k_t \leq q_t$ whenever $t \leq m_0$ or $q_t - q_{t-1} > 1$ is dense in $Q_P^r$, and each such point has a neighborhood on which (b) and/or (c) hold (for this we call $Q_P^r$ \textit{separated/Morse--Bott a.e.}). Therefore, we can simply let $U$ be the union of these neighborhoods. Moreover, the results in (b) and (c) also hold on $Q_{P,\pi}^r$, for any permutation $\pi$. This is because the geometry of $Q_{P,\pi}^r$ is the same as that of $Q_P^r$.\\
\end{remark}

\begin{figure}[H]
    \centering
    \includegraphics[trim={0 0 0 4cm}, clip, width=0.65\textwidth]{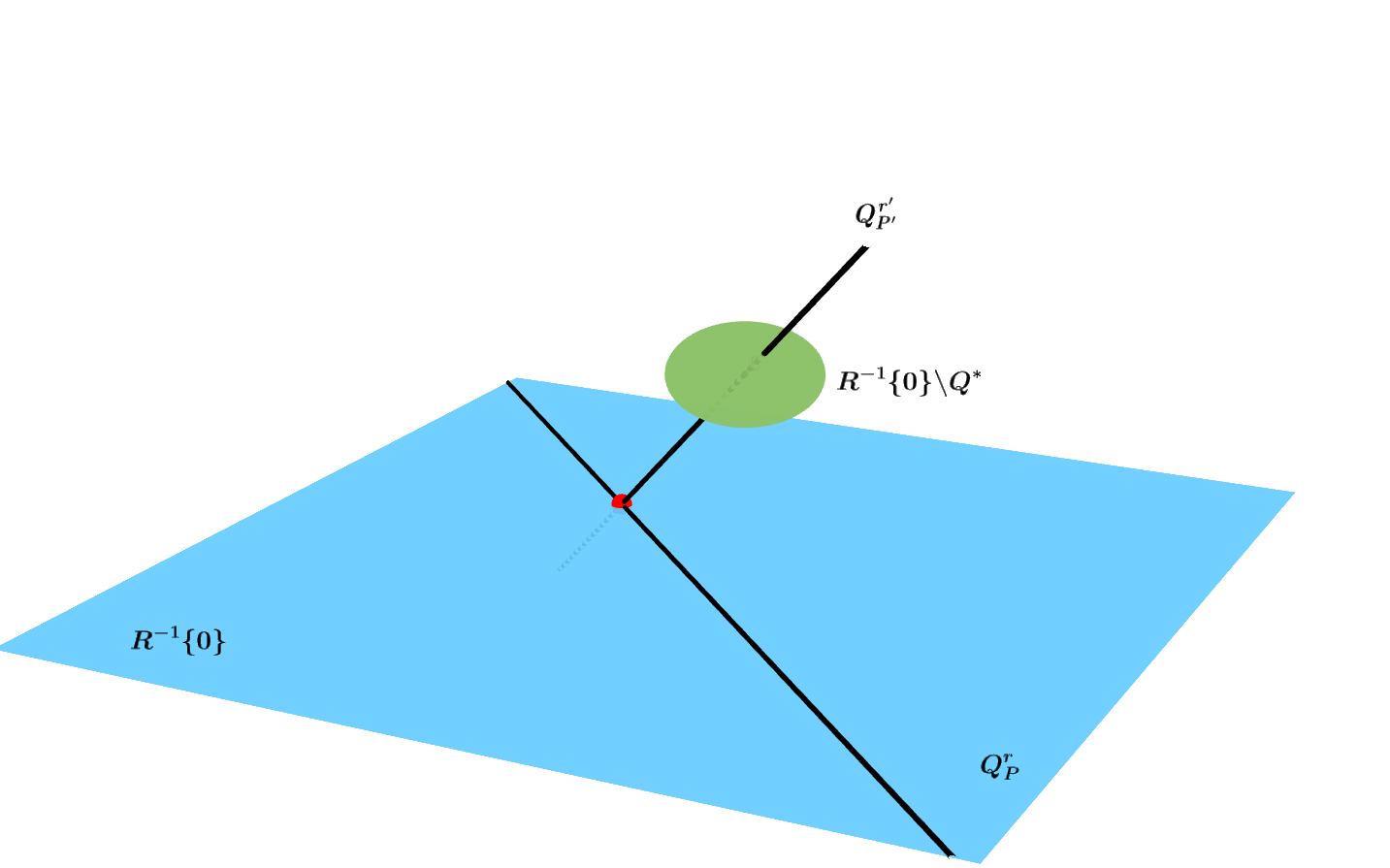}
    \caption{Separation of branches in overparameterized regime. In the figure, the $r$ and $r'$ are given. As illustrated above, there is some open $U \siq \bR^{(d+1)m}$ with $U \cap \Rmuzero = U \cap Q_{P'}^{r'}$, while this does not hold for $Q_P^r$. Also note that when $n \ge r' + (m+m_0-r')d$, $Q_{P'}^{r'}$ we can only guarantee that it is separated a.e. (e.g., not at points covered by the green disk).}
\end{figure}

\begin{lemma}[separation of $Q^*$ -- underparametrized case]\label{underparametrized system} 
    Suppose that Assumptions \ref{generic activation} and \ref{finite-feature setting} hold. 
    \begin{itemize}
        \item [(a)] For any $N \in \bN$ and any open $O \soq Q^*$, there is a finite collection of samples such that $\overline{B(0,N)} \cap \Rmuzero$ is contained in $O$. 
        
        \item [(b)] For any collection of inputs $\{x_i\}_{i=1}^n$ with $n > (d+1)m$ and any $\vep > 0$, we can perturb each $x_i$ with no more than $\vep$-distance to obtain a collection of new inputs $\{x_i'\}_{i=1}^n$, so that $\Rmuzero = Q^*$. In particular, for any $\theta = (a_k, w_k)_{k=1}^m \in \bR^{(d+1)m}$ with $w_1, ..., w_m \ne 0$ and any $n > (d+1)m$, almost all separating inputs $\{x_i'\}_{i=1}^n$ for $\theta$ make $\Rmuzero = Q^*$. 
    \end{itemize}
\end{lemma}
\begin{remark}
    Obviously, (a) follows from (b), but to demonstrate different techniques we use two ways to prove them. 
\end{remark}
\begin{proof}
\begin{itemize}
    \item [(a)] Fix $\theta \in \overline{B(0,N)} \cut O$, which is a compact subset of $\bR^{(d+1)m}$. Since $\theta \notin Q^*$, the set $\{x \in \bR^d: g(\theta, x) = f(x)\}$ has $\lambda_d$-measure zero. Thus, we can find some $x_\theta$ such that $g(\theta, x_\theta) \neq f(x_\theta)$; it follows by continuity that $g(\ttheta, x_\theta) \neq f(x_\theta)$ on an open subset $B_\theta$ of $\overline{B(0,N)}$. By compactness, we can find $\theta_1, ..., \theta_n \in \overline{B(0,N)}\cut O$ such that $\overline{B(0,N)}\cut O \siq \bigcup_{i=1}^n B_{\theta_i}$. Set $R = \sum_{i=1}^n |g(\theta, x_{\theta_i}) - f(x_{\theta_i})|^2$, then $\overline{B(0,N)} \cap \Rmuzero$ is contained in $O$. 
    
    \item [(b)] Let $E := \bR^{(d+1)m} \cut Q^*$. The function $(\theta, x) \mapsto g(\theta, x) - f(x)$ satisfies: for any $\theta \in E$, $x \mapsto g(\theta, x) - f(x)$ is not constant-zero. By Corollary \ref{square-loss type perturbation}, for almost all inputs $\{x_i'\}_{i=1}^n$ and $R(\theta) = \sum_{i=1}^n |g(\theta, x_i') - f^*(x_i')|^2$, we have $E \cap \Rmuzero = \emptyset$, or $\Rmuzero = Q^*$. Given such $\theta = (a_k, w_k)_{k=1}^m \in \bR^d$, the set of separating inputs for it is an open dense full-measure subset of $\bR^{dm}$, whence its intersection with the inputs making $\Rmuzero = Q^*$ is also a full-measure subset of $\bR^{dm}$. In particular, this means when $n \ge (d+1)m$, almost all separating inputs $\{x_i'\}_{i=1}^n$ makes $\Rmuzero = Q^*$.
\end{itemize}
\end{proof}

The discussion above are all based on the assumption that we know exactly what the target function is. In general, given samples $\{(x_i, y_i)\}_{i=1}^n$, the target functions $f^*$ with $f^*(x_i) = \sum_{k=1}^{m_0} \bara \sigma(\barw \cdot x_i) = y_i$ for all $i$ may not be unique. Luckily, the following proposition guarantees the uniqueness of target function $f^*$ for a dense set of separating inputs at overparameterization. Thus, in general we only need to deal with one fixed target function. 

\begin{prop}[uniqueness of representation]\label{isolated global minimum} 
    Suppose that Assumptions \ref{generic activation} and \ref{finite-feature setting} hold. Given $n > (d+1)m_0$, for almost all separating inputs $\{x_i'\}_{i=1}^n$ for $\barw_1, ..., \barw_{m_0}$, $\bar{\theta} = (\bara_k, \barw_k)_{k=1}^{m_0}$ is the unique global minimum of
    \begin{equation*}
        \bR^{(d+1)m_0} \ni (a_k, w_k)_{k=1}^{m_0} \mapsto \sum_{i=1}^n \left|\sum_{k=1}^{m_0} a_k \sigma(w_k \cdot x_i) - f^*(x_i) \right|^2. 
    \end{equation*}
\end{prop}
\begin{proof}
    Let $\{x_i\}_{i=1}^n$ be separating inputs for $\barw_1, ..., \barw_{m_0}$, using Lemma \ref{separated branch}. Then use Lemma \ref{underparametrized system} to perturb the inputs $\{x_i\}$ arbitrarily small to obtain $\{x_i'\}_{i=1}^n$ so that $\bar{\theta}$ becomes the unique global minimum of this function. Arguing in the same way as in Lemma \ref{underparametrized system}, almost all separating inputs have this property. 
\end{proof}

\section{Dynamics of Gradient Flow Near \texorpdfstring{$Q^*$}{M}}\label{Section Dynamics of gradient flow near Minfty} 

Based on the geometry of $Q^*$ and functional properties of $R$ which we characterized above, we are now able to give the complete characterization of gradient flows near $Q^*$. In this section, we apply Lojasiewicz type inequalities to show the convergence of gradient flow near the critical points of a real analytic function $f$. Then we discuss whether each point in $(\nabla f)^{-1}\{0\}$ is the limit of some gradient flow of $f$. By Assumption \ref{finite-feature setting} and/or our discussion in Section \ref{Section Loss landscape near Minfty}, all of them hold for $R$. Based on these results and Lemma \ref{separated branch}, we further characterize the convergence rate, limiting direction, and generalization stability (whether $g(\lim_{t\to+\infty} \gamma(t),\cdot )$ is stable under perturbation of $\lim_{t\to+\infty} \gamma(t)$, see also Definition \ref{Defn Recovery stability}) of gradient flow near $Q^*$. Thus we prove Theorem \ref{Thm gradient flow properties local, informal}, with a detailed understanding of the behavior of training dynamics near $\Rmuzero$. 

\subsection{Limiting Set of Gradient Flow} 

The following theorems, which can be seen as different types of Lojasiewicz inequality are summarized from \cite{PFeehan} and \cite{PAbsil}. They show that for an analytic function, any gradient flow near a local minimum converges. 

\begin{thm}[Theorem 1 of $\text{\cite{PFeehan}}$]\label{Lojasiewicz inequality for analytic and GMB function}
    Let $f: \bR^{(d+1)m} \to \bR$ be a real analytic function. For any critical point $p$ of $f$, there is a neighborhood $U$ of $p$ and constants $C > 0$, $\mu \in [1/2, 1)$ such that 
    \begin{equation}
        |\nabla f(q)| \geq C |f(q) - f(p)|^\mu 
    \end{equation}
    for any $q \in U$. 
\end{thm}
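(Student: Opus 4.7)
The statement is the classical Lojasiewicz gradient inequality for real analytic functions, and the attribution \cite{PFeehan} indicates that the paper will simply quote the result. My plan is to outline the standard route so a reader can see why quoting is justified. First, I would replace $f$ by $f - f(p)$, which preserves both $\nabla f$ and the set of critical points, so the problem reduces to showing: for an analytic $f$ with $f(p) = 0$ and $\nabla f(p) = 0$, there is a neighborhood $U$ of $p$ and constants $C > 0$, $\mu \in [1/2, 1)$ with $|\nabla f(q)| \geq C|f(q)|^\mu$ for all $q \in U$. The inequality is automatic on $f^{-1}\{0\}$, so the content is the lower bound on the ratio $|\nabla f(q)|/|f(q)|^\mu$ near $p$.

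The main step is an argument by contradiction combined with the curve selection lemma for semi-analytic sets. Suppose no such $C, \mu$ exist. Then the semi-analytic set $S_\mu := \{q : |\nabla f(q)|^2 < |f(q)|^{2\mu}\}$ accumulates at $p$ for every $\mu \in [1/2, 1)$. By the curve selection lemma (e.g. Milnor's treatment of semi-analytic sets), one obtains a real analytic curve $\gamma : [0, \epsilon) \to \bR^{(d+1)m}$ with $\gamma(0) = p$ and $\gamma(t) \in S_\mu$ for $t > 0$. Expand $f \circ \gamma$ and $\gamma'$ in their Puiseux series: assuming $f \circ \gamma \not\equiv 0$, one has $f(\gamma(t)) = a t^k + O(t^{k+1})$ for some $a \neq 0$, $k \in \bN$, and $|\gamma'(t)| = O(t^s)$ for some $s \geq 0$. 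Applying the chain rule $(f\circ \gamma)'(t) = \nabla f(\gamma(t)) \cdot \gamma'(t)$ and comparing leading orders yields
\[
|ak|\, t^{k-1} \lesssim |\nabla f(\gamma(t))|\, |\gamma'(t)| \lesssim |f(\gamma(t))|^\mu \cdot t^s \lesssim t^{\mu k + s},
\]
forcing $\mu k \leq k - 1 - s$. Since $k$ is bounded along the selected curve, choosing $\mu$ arbitrarily close to $1$ gives the contradiction. The case $f\circ\gamma \equiv 0$ is handled by choosing the curve inside a refined semi-analytic set that excludes $f^{-1}\{0\}$.

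The main obstacle is passing from a per-curve exponent $1 - 1/k$ to a \emph{uniform} exponent $\mu < 1$ valid on an entire neighborhood of $p$. This is exactly where Lojasiewicz's structure theorem for semi-analytic sets enters: it bounds the Puiseux exponent $k$ uniformly as the curve varies, giving a single $\mu$ that works on all of $U$. An alternative modern route replaces the structure theorem with Hironaka's resolution of singularities, transforming $f$ into a monomial on a blow-up and reducing the inequality to an elementary estimate. Since both completions are worked out in detail in \cite{PFeehan}, I would simply cite that reference in the paper rather than reproduce the argument.
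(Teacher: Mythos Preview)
Your proposal is correct and matches the paper exactly: the paper states this theorem with attribution to \cite{PFeehan} and provides no proof whatsoever, simply quoting it as a known result. Your sketch of the curve-selection/resolution-of-singularities route is additional context not present in the paper, but your bottom line---cite \cite{PFeehan} rather than reproduce the argument---is precisely what the paper does.
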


\begin{thm}[rephrased from Theorem 2.2 in $\text{\cite{PAbsil}}$]\label{convergence rate of analytic or GMB function}
    Let $f: \bR^{(d+1)m} \to \bR$ be a real analytic function. Suppose that $p$ is a local minimum of $f$. Then $p$ has a neighborhood $U$ such that any non-constant gradient flow with initial value $x_0 \in U$ converges to $f^{-1}\{f(p)\}$. Moreover, any such gradient flow converges with some rate $0 < \beta < 1$ depending only on $p$. Namely, the curve length 
    \begin{equation}
        l(\gamma[t, +\infty)) = O(|f(\gamma(t)) - f(p)|^\beta)
    \end{equation}
    as $t \to +\infty$, where $l(\gamma[t, +\infty))$ is the curve length of $\gamma[t, +\infty)$. 
\end{thm}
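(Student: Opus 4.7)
The plan is to combine the Łojasiewicz inequality from Theorem \ref{Lojasiewicz inequality for analytic and GMB function} with a standard ``trap argument'' for the gradient flow $\gamma$, following the classical approach of Łojasiewicz. Throughout I assume $f(p)=0$ (by subtracting a constant) and use that $p$ is a local min to pick a neighborhood $V$ of $p$ on which $f\ge 0$ and the inequality $|\nabla f(q)|\ge C\,f(q)^{\mu}$ holds for some $C>0$, $\mu\in[1/2,1)$.

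\textbf{Core computation.} Along any GF segment $\gamma([t_1,t_2])\subseteq V$ with $f(\gamma(s))>0$, I differentiate the auxiliary quantity $f(\gamma(s))^{1-\mu}$:
\begin{equation*}
    \tfrac{d}{ds}f(\gamma(s))^{1-\mu} = (1-\mu)\,f(\gamma(s))^{-\mu}\langle\nabla f(\gamma(s)),\dot\gamma(s)\rangle = -(1-\mu)\,f(\gamma(s))^{-\mu}|\nabla f(\gamma(s))|^{2}.
\end{equation*}
Applying Łojasiewicz in the form $f(\gamma(s))^{-\mu}|\nabla f(\gamma(s))|\ge C$, this gives the key estimate
\begin{equation*}
    |\dot\gamma(s)| = |\nabla f(\gamma(s))| \le -\frac{1}{(1-\mu)C}\,\tfrac{d}{ds}f(\gamma(s))^{1-\mu}.
\end{equation*}
Integrating from $t_1$ to $t_2$ bounds the arc length by $\tfrac{1}{(1-\mu)C}\bigl(f(\gamma(t_1))^{1-\mu}-f(\gamma(t_2))^{1-\mu}\bigr)$.

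\textbf{Trapping argument and convergence.} This length estimate is exactly what I need to keep $\gamma$ inside $V$. I choose $U\subset V$ to be a small ball $B(p,r)$ with $r$ so small that, by continuity of $f$ and the monotone decrease of $f\circ\gamma$, for any $\gamma(0)\in U$ the bound $\tfrac{1}{(1-\mu)C}f(\gamma(0))^{1-\mu}$ is smaller than the distance from $U$ to $\partial V$. A standard contradiction argument (``first exit time'') then shows $\gamma$ never leaves $V$, so the length estimate applies on $[0,+\infty)$ and $\gamma$ has finite total length; hence $\gamma(t)\to q^*$ for some $q^*\in\bar V$. Since $|\nabla f(\gamma(t))|^2=-\tfrac{d}{dt}f(\gamma(t))$ is integrable and $f\circ\gamma$ is monotone, $|\nabla f(\gamma(t))|\to 0$, so $q^*$ is a critical point. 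The case $f(\gamma(t_0))=0$ for some finite $t_0$ is trivial (the flow is already at a critical point and stays constant), so the ``non-constant'' hypothesis rules nothing essential out.

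\textbf{Rate.} Taking $t_1=t$ and $t_2\to+\infty$ in the integrated estimate, and using $f(\gamma(t_2))\to f(q^*)\ge 0$ (so $f(\gamma(t_2))^{1-\mu}\ge 0$), I obtain
\begin{equation*}
    l(\gamma[t,+\infty)) \le \frac{1}{(1-\mu)C}\,f(\gamma(t))^{1-\mu} = \frac{1}{(1-\mu)C}\,|f(\gamma(t))-f(p)|^{1-\mu},
\end{equation*}
which is the claimed rate with $\beta:=1-\mu\in(0,1/2]$; this $\beta$ depends only on $\mu$, hence only on $p$. The main subtlety will be the trap argument, specifically calibrating the radius $r$ of $U$: one must interleave the Łojasiewicz length bound with the requirement that $\gamma$ does not escape the open set on which the inequality holds, which in turn is used to justify the length bound. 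This is handled by defining the exit time $T=\inf\{s:\gamma(s)\notin V\}$, showing the length estimate on $[0,T)$, and concluding $T=+\infty$ from $l(\gamma[0,T))<\mathrm{dist}(U,\partial V)$.
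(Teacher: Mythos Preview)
Your argument is correct and is the classical \L ojasiewicz proof: differentiate $f(\gamma)^{1-\mu}$, use the inequality to bound speed by the derivative of this Lyapunov quantity, integrate for a length bound, and run a first-exit-time trap to keep $\gamma$ in the region where the inequality applies. The identification $\beta=1-\mu\in(0,1/2]$ is also right.

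The paper, however, does not prove this from scratch. Its proof simply invokes Lemma~2.1 of \cite{PAbsil} as a black box to obtain directly that there exist $c,\beta>0$ with
\[
    \int_{t_1}^{t_2}|\dot\gamma(t)|\,dt \le c\,|f(\gamma(t_1))-f(p)|^{\beta}
\]
for all $0<t_1<t_2$, and then lets $t_2\to+\infty$ by monotone convergence. So the paper's ``proof'' is essentially a citation plus one limit, whereas you supply the full mechanism behind that citation. Your route has the advantage of being self-contained (using only Theorem~\ref{Lojasiewicz inequality for analytic and GMB function} from the paper) and of making $\beta=1-\mu$ explicit; the paper's route is shorter but defers all the content to the reference.
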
 
\begin{proof}
    By Theorem 2.2 in \cite{PAbsil}, there is some neighborhood $U \siq \bR^{(d+1)m}$ around $p$ such that for any gradient flow $\gamma: [0, +\infty) \to \bR^{(d+1)m}$ with any $\gamma(0) = x_0 \in U$, $\gamma$ satisfies: there are some $c, \beta > 0$ such that for any $0 < t_1 < t_2 < +\infty$, 
    \[
        \int_{t_1}^{t_2} \left| \dot{\gamma}(t)\right| dt \leq c |f(\gamma(t_1)) - f(p)|^\beta. 
    \]
    Taking $t_2 \to +\infty$, the monotonic convergence theorem gives $l(\gamma[t, +\infty)) \leq c|f(\gamma(t_1) - f(p)|^\beta$, as desired. This shows the length of $\gamma$ is bounded, whence $\limftyt \gamma(t)$ exists. In particular, there is some neighborhood $V \siq U$ of $p$ such that gradient flows with initial value in $V$ lies in $U$ eventually. Since $p$ is a local minimum of $f$, the neighborhood $U$ can be chosen so that $f(q) \ge f(p)$, and (by Theorem \ref{Lojasiewicz inequality for analytic and GMB function}) $|\nabla f(q)| \ge C|f(q) - f(p)|^\mu$. Thus, $\nabla f(q) = 0$ is possible only for $q \in f^{-1}\{f(p)\}$, which implies that $\limftyt \gamma(t) \in f^{-1}\{f(p)\}$ whenever $\gamma$ has initial value $\gamma(0) \in V$. 
\end{proof}

Thus, if $f$ is non-negative and $f^{-1}\{0\}$ is non-empty, then there is some open $U \siq \bR^{(d+1)m}$ such that any gradient flow with initial value in $U$ converges to $f^{-1}\{0\}$. We then show the converse of the previous result, i.e., if $f$ is analytic near $N$, then any point in $N$ is the limit of a gradient flow, in other words the limiting set of gradient flow contains $N$. 

\begin{prop}[converse of Theorem \ref{convergence rate of analytic or GMB function}]\label{Prop Converse of gradient flow convergence}
    Let $f: \bR^{(d+1)m} \to [0, +\infty)$ be continuously differentiable. Suppose that each point $x^* \in f^{-1}\{0\}$ has a neighborhood $U$ satisfying 
    \begin{itemize}
        \item [(a)] For any $p \in U$, the gradient flow starting at $p$, $\gamma_p$, converges to a point in $f^{-1}\{0\}$, 
        \item [(b)] There are some $C, \alpha > 0$ such that for any $p \in U$, the curve length of $\gamma_p$ is bounded above by $C \dist{p}{f^{-1}\{0\}}^\alpha$. 
    \end{itemize}
    Then for any $x^* \in f^{-1}\{0\}$, there is a non-constant gradient flow converging to $x^*$ as $t \to +\infty$. In particular, we result holds when $f$ is analytic. 
\end{prop}
\begin{proof}
    By hypothesis, there is a sequence $\{x_j^*\}_{j=1}^\infty$ and a sequence of non-constant gradient flow $\{ \gamma_j \}_{j=1}^\infty$ such that $\lim_{t\to+\infty} \gamma_j(t) = x_j^* \in f^{-1}\{0\}$ and $\limftyj x_j^* = x^* \in f^{-1}\{0\}$ (this can be proved in the same way as we do in the remark above). Choose a compact neighborhood $V \siq U$ of $x^*$ (so $\bar{V} \siq U$). For each $j$, there is a largest $t_j \in \bR$ such that $p_j := \gamma_j (t_j) \in \partial V \cap \gamma_j$. Since $\partial V$ is compact, the sequence $\{p_j\}_{j=1}^\infty$ has an accumulation point $p$ in $V$. Moreover, hypothesis (b) implies that $p \notin f^{-1}\{0\}$. Since $p \in U$, the gradient flow $\gamma_p: [0, +\infty) \to \bR^{(d+1)m}$ converges to a point in $f^{-1}\{0\}$ and its curve length is bounded by $C\dist{p}{f^{-1}\{0\}}^\alpha$. 

    Let $l_j$ be the curve length of $\gamma_j[t_j, +\infty)$ and $l$ be the curve length of $\gamma_p$. For each $j \in \bN$, define $u_j: [0, +\infty) \to \bR^{(d+1)m}$ by 
    \begin{align*}
        u_j(0)       &= p_j; \\
        \dot{u}_j(t) &= - \frac{\nabla f(u_j(t))}{|\nabla f(u_j(t))|}, \quad 0 < t < l_j; \\
        u_j(t)       &= u_j (l_j), \quad\quad\quad\quad\,\,\, t \geq l_j. 
    \end{align*}
    Similarly, define $u: [0,+\infty) \to \bR^{(d+1)m}$ by 
    \begin{align*}
        u(0)       &= p; \\
        \dot{u}(t) &= - \frac{\nabla f(u(t))}{|\nabla f(u(t))|}, \quad 0 < t < l; \\
        u(t)       &= u(l), \quad\quad\quad\quad\,\,\,\, t \geq l. 
    \end{align*}  
    Note that the $u_j$'s and $u$ are exactly the trajectories of their corresponding gradient flows. 

    Fix $\vep > 0$. Choose any $k \in \bN$ with $|p - p_k| < \vep$. There is some $T > 0$ such that for any $t > T$, we have $\dist{u_j(t)}{f^{-1}\{0\}} < \vep^{1/\alpha}$ an $\dist{u(t)}{f^{-1}\{0\}} < \vep^{1/\alpha}$. Then the Grownwall's inequality and hypothesis (b) yield 
    \begin{align*}
        \left| \lim_{s\to+\infty} u(s) - \lim_{s\to+\infty} u_k(s) \right| 
        &\leq | \lim_{s\to+\infty} u(s) - u(t) | + |u(t) - u_k(t)| + |u_k(t) - p_k| \\ 
        &\leq C\dist{u(t)}{f^{-1}\{0\}}^\alpha + \exp(t) \vep + C\dist{u_k(t)}{f^{-1}\{0\}}^\alpha \\ 
        &\leq (2C + \exp(t)) \vep. 
    \end{align*}
    Since $\{p\} \cup \{p_j\}_{j=1}^\infty$ is a subset of the bounded $\partial V$, it follows that $\sup\{l, l_1, l_2, ...\} < +\infty$. In particular, there is some $T' > 0$ such that the $u_j$'s and $u$ are all constant on $T'$. Thus, we actually have 
    \[
        \left| \lim_{s\to+\infty} u(s) - \lim_{s\to+\infty} u_k(s) \right| \leq (2C + \exp(T')) \vep. 
    \]
    Letting $\vep \to 0$, we see that $\lim_{s\to+\infty} u(s) = \limftyk x_k^* = x^*$, which means $u$, and thus $\gamma_p$, converges to $x^*$. This shows the first part of the proposition. 

    Now suppose that $f$ is analytic. Let $x^* \in f^{-1}\{0\}$. By Theorem \ref{convergence rate of analytic or GMB function}. there is a bounded neighborhood $U$ of $x^*$ and some $\beta > 0$ such that for any $p \in U$, $\gamma_p$ converges at rate $\beta$. Since $U$ is bounded and $f$ is smooth, $f$ is Lipschitz on $U$, so there is some $L > 0$ with $|f(z_1) - f(z_2)| \leq L|z_1 - z_2|$ for any $z_1, z_2 \in U$. It follows that the curve length $l(\gamma_p)$ of any such $\gamma_p$ can be estimated by 
    \[
        l(\gamma_p) \leq \tilde{C} |f(p)|^\beta \leq \tilde{C} |f(p) - 0|^\beta \leq \tilde{C} L^\beta |p - q|^\beta, 
    \]
    where $\tilde{C} > 0$ is some constant, and $q \in f^{-1}\{0\}$ is any point satisfying $|p - q| = \dist{p}{f^{-1}\{0\}}$ ($q$ exists because $f^{-1}\{0\}$ closed). This shows $f$ satisfies the hypotheses and the desired result follows.
\end{proof} 

We now apply the results above to our loss function $R$. By Lemma \ref{Lem Rmu is analytic}, $R$ is analytic whenever $\sigma$ is a generic activation, whence Theorems \ref{Lojasiewicz inequality for analytic and GMB function}, \ref{convergence rate of analytic or GMB function} and Proposition \ref{Prop Converse of gradient flow convergence} hold for $R$. \\

\subsection{Convergence Rate and Limiting Direction of gradient flow}\label{Subsection Conv rate and lim direction of gradient flow}

In this part we investigate the convergence rate and limiting direction of a gradient flow near any (fixed) $Q_{P,\pi}^r$ for $r \in \{m_0, ..., m\}$, given that $Q_{P,\pi}^r$ is separated. As we shall see, the behavior of gradient flows depends largely on $r$ and partition $P$. \\

Let's begin with two general lemmas about a ``parameterized version of center manifold theory". Consider a dynamical system of the form 
\begin{equation}\label{Eq Generalized system concerning center manifold}
\begin{aligned}
    \dot{x}(t) &= f(x(t), y(t), z(t)); \\
    \dot{y}(t) &= H(z(t)) y(t) + g(x(t), y(t), z(t)); \\
    \dot{z}(t) &= m(x(t), y(t), z(t)), 
\end{aligned}
\end{equation}
where $(x, y, z) \in \bR^c \times \bR^s \times \bR^p$ and $f,g,H,m$ are $C^3$ functions, and there is an open $U_p \siq \bR^p$ such that for any $z \in U_p$, we have i) $H(z)$ is negative definite, ii) $f(0,0,z) = g(0,0,z) = m(0,0,z) = 0$, and iii) when $x,y \to 0$ the maps $f, g, m$ can be estimated as 
\begin{align*}
    f(x,y,z) &= O(|x|^2 + |y|^2); \\
    g(x,y,z) &= O(|x|^2 + |y|^2); \\
    m(x,y,z) &= O(|x|^2 + |y|^2). 
\end{align*} 
Specifically, when $c = 0$, we identify the space $\bR^0 \times \bR^s \times \bR^p$ with $\bR^s \times \bR^p$, so that the system (\ref{Eq Generalized system concerning center manifold}) becomes
\begin{align*}
    \dot{y}(t) &= H(z(t)) y(t) + g(y(t), z(t)); \\
    \dot{z}(t) &= m(y(t), z(t)). 
\end{align*}
Moreover, for each $z \in U_p$, We also consider the following simplified system 
\begin{equation}\label{eq for parametrized center manifold}
\begin{aligned}
    \dot{x}(t) &= f(x(t), y(t), z); \\
    \dot{y}(t) &= H(z) y(t) + g(x(t), y(t), z).
\end{aligned}
\end{equation}

\begin{lemma}\label{convergence direction}
    Consider the system (\ref{Eq Generalized system concerning center manifold}). Let $\gamma = (x, y, z): [0,+\infty) \to \bR^{c+s+l}$ be a solution curve to (\ref{Eq Generalized system concerning center manifold}) that converges to some $(0,0,z^*)\in \bR^{c+s+l}$ with $z^* \in U_p$. 
    \begin{itemize}
        \item [(a)] If $c = 0$ then $|z(t) - z^*| = O(|y(t)|^2)$ as $t \to +\infty$. 
        
        \item [(b)] If $c > 0$ and $\varliminf_{t \to +\infty, x(t) \neq 0} \frac{|y(t)|}{|x(t)|^2} < +\infty$, then $|y(t)| = O(|x(t)|^2)$ as $t \to +\infty$. 
    \end{itemize} 
\end{lemma}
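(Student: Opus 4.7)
The plan is to exploit the uniform negative-definiteness of $H(z)$ near $z^*$ via a Lyapunov-style energy estimate for $|y(t)|^2$. Fix a compact neighbourhood $V \subset U_p$ of $z^*$ on which $v^\TT H(z)v \leq -\lambda_0|v|^2$ for some $\lambda_0 > 0$ and every $v \in \bR^s$. Since $\gamma(t) \to (0,0,z^*)$, for $t$ large we have $z(t) \in V$ and $(x(t),y(t))$ arbitrarily small, so cubic and higher-order terms can eventually be absorbed into the leading ones. For part (a), where $c=0$ and $g, m = O(|y|^2)$, differentiating $|y|^2$ and using $|y \cdot g| \leq C|y|^3$ yields, for $|y|$ small,
\[
\tfrac{d}{dt}|y(s)|^2 \leq -2\lambda_0|y(s)|^2 + 2C|y(s)|^3 \leq -\mu|y(s)|^2, \qquad \mu := \lambda_0.
\]
Grönwall's inequality gives $|y(s)|^2 \leq |y(t)|^2 e^{-\mu(s-t)}$ for $s \geq t$ (both large); combined with $|\dot z| \leq C|y|^2$, integration from $t$ to $+\infty$ (permissible since $z(t) \to z^*$) yields $|z(t) - z^*| \leq C\int_t^{+\infty} |y(s)|^2\, ds \leq (C/\mu)|y(t)|^2$, proving (a).

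For part (b), refine the calculation using Young's inequality: since $g = O(|x|^2 + |y|^2)$,
\[
2|y \cdot g| \leq 2C|y|(|x|^2 + |y|^2) \leq \epsilon|y|^2 + \epsilon^{-1}C^2|x|^4 + 2C|y|^3,
\]
so for $|y|$ small we get $\tfrac{d}{dt}|y|^2 \leq -\mu|y|^2 + K|x|^4$ with constants $\mu, K > 0$. Let $M_0$ strictly exceed the liminf in the hypothesis, set $\beta := \max\{4K/\mu,\, M_0^2\}$, and define the Lyapunov function $\rho(t) := |y(t)|^2 - \beta|x(t)|^4$. From $\dot x = O(|x|^2 + |y|^2)$ one has $\bigl|\beta\,\tfrac{d}{dt}|x|^4\bigr| \leq 4\beta C(|x|^5 + |x|^3|y|^2)$, and for $|x|$ small enough (depending on $\beta$) this is absorbed to give
\[
\dot\rho \leq -\tfrac{\mu}{2}|y|^2 + 2K|x|^4 = -\tfrac{\mu}{2}\rho + \left(2K - \tfrac{\mu\beta}{2}\right)|x|^4,
\]
and the final bracket is non-positive by the choice $\beta \geq 4K/\mu$.

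Consequently $e^{\mu t/2}\rho(t)$ is non-increasing on the time interval where the smallness assumptions hold. Using the liminf hypothesis, pick a sufficiently large $t_k$ lying in this region with $|y(t_k)|/|x(t_k)|^2 \leq M_0$; then $\rho(t_k) \leq (M_0^2 - \beta)|x(t_k)|^4 \leq 0$, so $\rho(t) \leq 0$ for every $t \geq t_k$, i.e., $|y(t)| \leq \sqrt{\beta}\,|x(t)|^2 = O(|x(t)|^2)$. The main technical obstacle is the careful absorption of the higher-order remainders $|y|^3$, $|x|^5$, $|x|^3|y|^2$ into the favourable terms $-\mu|y|^2$ and $-(\mu\beta/2)|x|^4$; this forces $(x,y,z)$ to lie in an explicit small neighbourhood of $(0,0,z^*)$ whose size depends on $\beta$ (and hence on $M_0$), but this smallness is automatic for $t$ large since $\gamma(t) \to (0,0,z^*)$.
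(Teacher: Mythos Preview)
Your proof is correct, and for part (b) it takes a genuinely different route from the paper. In part (a) both arguments are essentially the same idea---exponential decay of $|y|$ plus integration of $|\dot z|\le C|y|^2$---though yours is more streamlined: the paper passes through a discrete-sum bound $\int_t^\infty y_j^2 \le C_j y_j^2(\lfloor t\rfloor)$ and then compares $y_j^2(\lfloor t\rfloor)$ with $y_j^2(t)$, whereas your direct Gr\"onwall estimate $|y(s)|^2\le |y(t)|^2 e^{-\mu(s-t)}$ gets there in one step.

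For part (b) the paper argues by contradiction with the \emph{quotient} $Q(t)=|y(t)|^2/|x(t)|^4$: assuming $\limsup Q=+\infty$ while $\liminf Q<+\infty$, it uses the intermediate value theorem to locate a sequence $t_n\to+\infty$ along which $Q(t_n)=k_0^2$ is a fixed level and $\dot Q(t_n)\ge 0$, then computes $\dot Q(t_n)$ directly and shows it is eventually strictly negative. Your approach replaces this with a \emph{difference} Lyapunov function $\rho=|y|^2-\beta|x|^4$ and the differential inequality $\dot\rho\le -\tfrac{\mu}{2}\rho$, so that a single time $t_k$ with $\rho(t_k)\le 0$ (furnished by the $\liminf$ hypothesis) forces $\rho\le 0$ forever after. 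Your argument is more constructive---it yields an explicit constant $\sqrt{\beta}$ depending only on $\mu$, $K$, and the $\liminf$ bound $M_0$---and it sidesteps the level-crossing analysis entirely. The price is that the smallness threshold on $|x|$ needed to absorb the remainders $|x|^5$ and $|x|^3|y|^2$ depends on $\beta$, hence on $M_0$; you note this correctly, and it is harmless since $\gamma(t)\to(0,0,z^*)$. The paper's approach, by contrast, fixes the level $k_0$ first and only then checks smallness, which is organisationally a bit different but amounts to the same dependency.
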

\begin{proof}
\begin{itemize}
\item [(a)]  Since $\dot{y}(t) = H(z(t)) y(t) + O(|y(t)|^2)$ when $c = 0$, for each $1 \leq j \leq s$, $|y_j(t)|$ decreases to 0 at exponential rate. Thus, there is some $T > 0$ such that for any $\zeta \geq t > T$ and for any $1 \leq j \leq s$, we have $|y_j(t)| < 1$, $|\dot{y}_j(t)| \geq \frac{|\lambda_j|}{2} |y_j(t)|$, and 
\begin{equation*}
    D_1 |y_j(t)| e^{-\mu_1(\zeta-t)} \leq |y_j(\zeta)| \leq D_2 |y_j(t)| e^{-\mu_2(\zeta-t)} 
\end{equation*}
for some $D_1, D_2 \geq 0, \mu_1, \mu_2 > 0$ depending only on $T$. These assumptions imply that $y_j$ is decreasing and $y_j = 0$ or the sign of $y_j$ does not change on $(T, +\infty)$. Thus, for any $t > T$, 
\begin{align*}
    \int_t^\infty y_j^2(\zeta) d\zeta 
    &\leq \sum_{k=\floor{t}}^\infty y_j^2(k) \\ 
    &\leq \left( \sum_{k=\floor{t}}^\infty y_j(k) \right)^2 \\ 
    &\leq \frac{4}{\lambda_j^2}\left( \int_{\floor{t}}^\infty \dot{y}_j(\zeta) d\zeta \right)^2 = \frac{4}{\lambda_j^2} y_j^2(\floor{t}), 
\end{align*} 
where $\floor{t}$ denotes the largest integer smaller than $t$. But then 
\begin{equation*}
    |y_j(t)| \geq D_1 |y_j(\floor{t})| e^{-\mu_1(t - \floor{t})} \geq D_1 |y_j(\floor{t})| e^{-\mu_1}. 
\end{equation*}
This means there is some $C_j > 0$ with $y_j^2(\floor{t}) \leq C_j y_j^2(t)$ whenever $t > T$. Since $m(x, y, z) = O(|y|^2)$ for $z \in U_p$, there is some $C_z > 0$ with $|\dot{z}(t)| \leq C_z |y(t)|^2$ for $t > T$. It follows that when $t > T$, 
\begin{align*}
    |z(t) - z^*| \leq \int_t^\infty |\dot{z}(\zeta)|d\zeta 
    &\leq C_z \sum_{j=1}^s \int_t^\infty y_j^2(\zeta) d\zeta \\ 
    &\leq C_z \sum_{j=1}^s \frac{4}{\lambda_j^2} C_j y_j^2(t) \\ 
    &\leq \left( C_z \sum_{j=1}^s \frac{4}{\lambda_j^2} C_j \right) |y(t)|^2, 
\end{align*}
which shows that $|z(t) - z^*| = O(|y(t)|^2$ as $t \to +\infty$. 
    
\item [(b)] Without loss of generality, assume that $H(z^*)$ is diagonal, and its eigenvalues are $\lambda_1, ..., \lambda_s$; denote $\lambda := \frac{1}{2}\max_{1\leq j\leq s} \lambda_j$. Let $C_y > 0$ be a constant such that $|g(x,y,z)| \leq C_y (|x|^2 + |y|^2)$. Consider the quotient 
\begin{equation*}
    Q(t) := \frac{|y(t)|^2}{|x(t)|^4} 
\end{equation*} 
for $t \in [0,+\infty)$ such that $x(t) \neq 0$. Suppose that (b) does not hold, then $\varlimsup_{t\to +\infty, x(t)\neq 0} Q(t) = +\infty$. By hypothesis, $\varliminf_{t\to +\infty, x(t) \neq 0} Q(t) < +\infty$, so by continuity of $Q$ we can find a $k_0 > 0$ and a sequence $\{t_n\}_{n=1}^\infty \to +\infty$ such that $|\lambda k_0| > 2C_y$, and $|y(t_n)| = k_0 |x(t_n)|^2$, $\dot{Q}(t_n) \geq 0$ for each $n \in \bN$. We will show that this gives a contradiction. A straightforward computation yields 
\begin{equation*}
    \dot{Q}(t) = \frac{2 \sum_{j=1}^s y_j(t)\dot{y}_j(t)}{|x(t)|^4} - \frac{4 |y(t)|^2 \sum_{j=1}^c x_j(t)\dot{x}_j(t)}{|x(t)|^6} 
\end{equation*} 
when $x(t) \neq 0$. As $t_n \to +\infty$, both $x(t_n), y(t_n) \to 0$ by hypothesis, thus for sufficiently large $t_n$ we have $|y(t_n)| \leq |x(t_n)|$ and $|(H(z(t))y(t))_j| \geq \frac{|\lambda_j|}{2} |y_j(t)|$ for all $j \in \{1, ..., s\}$. Then for such $t_n$, 
\begin{align*}
    \sum_{j=1}^s y_j(t_n)\dot{y}_j(t_n) 
    &= \<y(t_n), H(z(t)) y(t_n)\> + \< y(t_n), g(x(t_n), y(t_n), z(t_n))\> \\
    &\leq \lambda |y(t_n)|^2 + |y(t_n)| C_y(|x(t_n)|^2 + |y(t_n)|^2) \\
    &\leq \lambda |y(t_n)|^2 + 2C_y |y(t_n)||x(t_n)|^2 \\ 
    &\leq k_0 (\lambda k_0 + 2C_y) |x(t_n)|^4. 
\end{align*}
Since $\lambda < 0$, by our assumption we have $k_0 (\lambda k_0 + 2C_y) < 0$. On the other hand, for any $n \in \bN$, 
\begin{align*}
    |y(t_n)|^2 \left| \sum_{j=1}^s x_j(t_n) \dot{x}_j(t_n) \right| 
    &\leq k_0^2 |x(t_n)|^4 |x(t_n)||\dot{x}(t_n)| \\ 
    &\leq k_0^2 |x(t_n)|^5 m(x(t_n), y(t_n), z(t_n)) \\ 
    &= O(|x(t_n)|^7). 
\end{align*}
Therefore, 
\begin{equation*}
    \dot{Q}(t_n) \leq 2k_0(\lambda k_0 + 2C_y) - O(|x(t_n)|) 
\end{equation*} 
from which we can see that if $N \in \bN$ is chosen so that for any $n > N$, $O(|x(t_n)|) \leq k_0 (-\lambda k_0 - 2C_y)$, we would have $\dot{Q}(t_n) \leq k_0 (\lambda k_0 + 2C_y) < 0$, contradicting our assumption that $\dot{Q}(t_n) > 0$ for each $n$. 
\end{itemize}
\end{proof}
\begin{remark}\label{Rmk gradient flow converges at linear rate}
    When $c > 0$, we do not know much about the relationship between $|z(t) - z^*|$ and $|x(t)|$. The best estimate we know is
    \begin{align*}
        |z(t) - z^*| = O\left( \int_t^\infty |\dot{z}(s)| ds \right)
        &= O\left( \int_t^\infty \left(|x(s)|^2 + |y(s)|^2 \right) ds \right) \\
        &= O\left( \int_t^\infty |x(s)|^2 ds \right)
    \end{align*}
    as $t \to +\infty$. 
\end{remark}

Then we focus on systems with $c > 0$. Note that when $c = 0$, $|y(t)| = O(e^{-\beta t})$ for some $\beta > 0$; this is just similar to the Morse--Bott case. 

\begin{lemma}\label{convergence for degenerate case}
    Consider the systems (\ref{Eq Generalized system concerning center manifold}) and (\ref{eq for parametrized center manifold}) with $c > 0$. Suppose that for each $z \in U_p$, $(0,0,z)$ is an asymptotically stable equilibrium of (\ref{eq for parametrized center manifold}). Let $\gamma = (x, y, z): [0,+\infty) \to \bR^{c+s+p}$ be a solution curve to (\ref{Eq Generalized system concerning center manifold}) that converges to some $(0,0,z^*)\in \bR^{c+s+p}$ with $z^* \in U_p$. Then there is some $\beta > 0$ such that $|y(t)| = O(|x(t)|^2 + e^{-\beta t})$ as $t \to +\infty$. Moreover, $\beta$ can be made as close to the largest negative eigenvalue of $H$ as possible. 
\end{lemma}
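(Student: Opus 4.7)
The plan is to split into two cases depending on how $|y(t)|$ compares with $|x(t)|^2$, applying Lemma \ref{convergence direction}(b) in one case and a direct Lyapunov estimate in the other.

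Consider the quantity $L := \varliminf_{t \to +\infty,\, x(t) \neq 0} |y(t)|/|x(t)|^2$. If $L < +\infty$, Lemma \ref{convergence direction}(b) immediately upgrades this to $|y(t)| = O(|x(t)|^2)$, and the estimate holds with any $\beta > 0$. So the only substantive case is $L = +\infty$, which means $|x(t)|^2 / |y(t)| \to 0$ as $t \to +\infty$ along the subsequence where both $x(t)$ and $y(t)$ are non-zero. The sparse times where $x(t) = 0$ cause no issue, because $|x|^2 = 0$ there, and times where $y(t) = 0$ would force $L = 0$, contradicting $L = +\infty$; so we may work under the pointwise asymptotic $|x(t)|^2 = o(|y(t)|)$.

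In this remaining case, the coupling term satisfies
\[
    |g(x(t), y(t), z(t))| = O(|x(t)|^2 + |y(t)|^2) = o(|y(t)|),
\]
using $|x(t)|^2 = o(|y(t)|)$ and $|y(t)|^2 = |y(t)| \cdot o(1)$ since $y(t) \to 0$. Differentiating gives
\[
    \frac{1}{2}\frac{d}{dt}|y(t)|^2 = y(t) \cdot H(z(t)) y(t) + y(t) \cdot g(x(t), y(t), z(t)).
\]
Let $\lambda > 0$ denote the absolute value of the largest (closest to zero) eigenvalue of the symmetric part of $H(z^*)$. By continuity of $H$ and $z(t) \to z^* \in U_p$, for any fixed $\epsilon > 0$ there is $T > 0$ such that for all $t > T$,
\[
    y(t) \cdot H(z(t)) y(t) \leq -(\lambda - \epsilon) |y(t)|^2,
\]
while the cross term is $o(|y(t)|^2)$. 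For $t$ large enough we thus have
\[
    \frac{d}{dt}|y(t)|^2 \leq -2(\lambda - 2\epsilon)|y(t)|^2,
\]
and Gronwall's inequality yields $|y(t)| \leq C e^{-(\lambda - 2\epsilon) t}$ for some $C > 0$. Setting $\beta := \lambda - 2\epsilon$ (which may be taken arbitrarily close to $\lambda$) gives $|y(t)| = O(e^{-\beta t})$ in this case. Combining the two cases yields $|y(t)| = O(|x(t)|^2 + e^{-\beta t})$.

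The main obstacle is the clean use of the dichotomy: on one hand we must ensure that Case A fully exhausts the scenario $|y| \lesssim |x|^2$ (this is precisely what Lemma \ref{convergence direction}(b) buys us), and on the other hand we must correctly exploit $|x|^2 \ll |y|$ in Case B to absorb $g$ into the linear dissipation. A secondary subtlety is that the decay rate $\lambda$ arising from the quadratic Lyapunov function $|y|^2$ is governed by the symmetric part of $H(z^*)$; this matches the spectral abscissa when $H(z^*)$ is symmetric, and in the general case one obtains the same conclusion by replacing $|y|^2$ with an adapted quadratic form $y^\TT P y$ solving $H(z^*)^\TT P + P H(z^*) \prec 0$. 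The asymptotic-stability hypothesis on the simplified system (\ref{eq for parametrized center manifold}) is not used directly in this dichotomy argument, but underlies the $O(|x|^2)$ estimate of Lemma \ref{convergence direction}(b) via the parametrized center-manifold picture.
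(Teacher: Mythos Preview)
Your dichotomy argument is correct and takes a genuinely different route from the paper. The paper instead builds, for each frozen parameter $z \in U_p$, the local center manifold $h_z$ of the reduced system (\ref{eq for parametrized center manifold}), sets $\delta(t) := y(t) - h_{z(t)}(x(t))$, and verifies $\dot\delta(t) = H(z(t))\delta(t) + o(|\delta(t)|)$ by subtracting the center-manifold invariance equation; exponential decay of $\delta$ together with the standard approximation $h_z(x) = O(|x|^2)$ then gives $|y(t)| \le |h_{z(t)}(x(t))| + |\delta(t)| = O(|x(t)|^2 + e^{-\beta t})$ in a single stroke. Your approach is more elementary---it never invokes center manifolds and instead recycles Lemma~\ref{convergence direction}(b) for the case $L<\infty$, while the case $L=+\infty$ reduces to a clean Lyapunov estimate because the coupling $g$ becomes $o(|y|)$. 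The paper's route, by contrast, yields a single unified bound (no case split) and makes visible the geometric origin of the two terms: $|x|^2$ comes from the graph of the slow manifold, and $e^{-\beta t}$ from transverse contraction onto it.

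One minor correction to your closing paragraph: Lemma~\ref{convergence direction}(b) is proved by a direct computation on the quotient $|y|^2/|x|^4$ and uses neither center manifolds nor the asymptotic-stability hypothesis, so your argument in fact establishes the lemma without that hypothesis at all. (The paper's own proof likewise does not appear to use it in an essential way; existence and $O(|x|^2)$ approximation of $h_z$ require only the spectral splitting, not stability on the center manifold.)
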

\begin{proof}
Given $z \in U_p$, let $h_z$ be the center manifold of the equation (\ref{eq for parametrized center manifold}). In this way we obtain a family of center manifolds $\{h_z: z\in U_p\}$. Since $\lim_{t\to +\infty} \gamma(t) = (0,0,z^*)$ and since $z^* \in U_p$, there is some $T > 0$ such that for any $t \geq T$, $h_{z(t)}(x(t))$ exists. Define a map $\delta: [T, +\infty) \to \bR^s$, $\delta(t) = y(t) - h_{z(t)}(x(t))$. Clearly, 
\begin{equation*}
    \dot{\delta}(t) = H(z(t)) y(t) + g(x(t), y(t), z(t)) - Dh_{z(t)}(x(t)) f(x(t), y(t), z(t)). 
\end{equation*}
Since $h_{z(t)}$ is a center manifold for equation (\ref{eq for parametrized center manifold}) with $z = z(t)$, we have 
\begin{equation*}
    H(z(t)) h_{z(t)}(x(t)) + g(x(t), y(t), z(t)) = Dh_{z(t)}(x(t)) f(x(t), y(t), z(t)). 
\end{equation*}
Since $y(t), h_{z(t)}(x(t)) \to 0$ as $t \to +\infty$; this, together with $f(x,y,z(t)) = O(|x|^2 + |y|^2)$ and $g(x,y,z(t)) = O(|x|^2 + |y|^2)$, yields 
\begin{align*}
    &\,\,\,\,\,\,\,g(x(t), y(t), z(t)) - g(x(t), h_{z(t)}(x(t)), z(t)) \\
    &- Dh_{z(t)}(x(t)) [f(x(t), h_{z(t)}(x(t)), z(t)) - f(x(t), y(t), z(t))] = o(|\delta(t)|). 
\end{align*}
Therefore, 
\begin{align*}
    \dot{\delta}(t) 
    &= H(z(t)) \delta(t) + [g(x(t), y(t), z(t)) - g(x(t), h_{z(t)}(x(t)), z(t))] \\
    &\,\,\,\,\,\,\,- Dh_{z(t)}(x(t)) [f(x(t), h_{z(t)}(x(t)), z(t)) - f(x(t), y(t), z(t))] \\ 
    &= H(z(t)) \delta(t) + o(|\delta(t)|). 
\end{align*}
Since each $H(z(t))$ is negative definite and $\lim_{t\to +\infty} z(t) = z^*$, the continuity of $H$ implies that $\delta(t)$ decreases at exponential rate; in particular, there are $C, \beta > 0$ such that $|\delta(t)| \leq C_1 e^{-\beta t}$. 

Then we show that $\beta$ can be made as close to the largest negative eigenvalue of $H$ as possible. For each $z \in U_p$ the curve $(x(t), h_z(x(t)), z)$ is just the solution to the system 
\begin{equation}
\begin{aligned}
    \dot{x}(t) &= f(x(t), y(t), z(t)); \\
    \dot{y}(t) &= H(z(t))y(t) + g(x(t), y(t), z(t)); \\
    \dot{z}(t) &= 0, 
\end{aligned}
\end{equation}
with an initial value $(x(0), h_z(x(0)), z)$. Thus, by the smooth dependence of an autonomous system on initial value, we see that $h(x,z) := h_z(x)$ is twice continuously differentiable in both $x$ and $z$. Intuitively, this family of center manifolds deform smoothly. 

Since $f(x,y,z) = O(|x|^2 + |y|^2)$ and $g(x, y, z) = O(|x|^2 + |y|^2)$ for each fixed $z \in U_p$, if $\psi_i = \sum_{j,k} \alpha_{ijk} x_j x_k$ and $\psi = (\psi_1, ..., \psi_s)^\TT$, we have $\psi(x) = O(|x|^2)$ and $D\psi(x) = O(|x|)$, whence 
\begin{align*}
    &\,\,\,\,\,\,\,D\psi(x) f(x, \psi(x), z) - H\psi(x) - g((x, \psi(x), z) \\
    &= O(|x|) O(|x|^2 + O(|x|^4)) - O(|x|^2) + O(|x|^2 + O(|x|^4)) \\ 
    &= O(|x|^2). 
\end{align*} 
Thus, the approximation theory of center manifold (see e.g., Section 2.5 of \cite{JCarr}) implies that $h_z(x) = O(|x|^2)$. In particular, since $h$ is smooth in both $x$ and $z$, there is some $C_2 > 0$ such that for any $x$ close to $0$ and $z$ close to $z^*$, $|h_z(x)| \leq C_2|x|^2$. It follows that for $t$ large, 
\begin{equation}\label{partial result of convergence for degenerate case}
    |y(t)| \leq |h_{z(t)}(x(t))| + C_1 e^{-\beta t} \leq C_2|x(t)|^2 + C_1 e^{-\beta t}. 
\end{equation}
Let $\lambda$ be the largest eigenvalue of $H(z^*)$. Fix $\vep > 0$. In the proof above we have shown that $\dot{\delta}(t) = H(z(t))\delta(t) + o(|\delta(t)|)$ and $\delta(t) \to 0$, $z(t) \to z^*$ as $t \to +\infty$. Thus, there is some $T > 0$ such that for any $t \geq T$, $\< H(z(t))\delta(t), \delta(t)\> \geq (-\lambda - \vep)|\delta(t)|$ and the $o(|\delta(t)|) \leq \varepsilon |\delta(t)|$. It follows that 
\begin{equation*}
    |\dot{\delta}(t)| \geq ||H\delta(t)| - \vep|\delta(t)|| \geq (-\lambda - 2\vep) |\delta(t)|, 
\end{equation*}
and thus $\delta(t) = O(e^{(\lambda +2\vep) t})$ as $t \to +\infty$. 
\end{proof}

\begin{remark}\label{Rmk gradient flow may converge at sublinear rate}
    Observe that if $|x(t)| \geq \Omega (e^{-\frac{\mu}{2}t})$ for some $\mu > -\lambda$, then $|y(t)| = O(|x(t)|^2)$ as $t \to +\infty$. In general, we may not expect that the trajectory of $\gamma$ is ``biased" towards $h_{z(t)}$ in the sense that $|y(t)| = O(|x(t)|^2)$ as $t \to +\infty$, but if the tail length
    \begin{equation*}
        l_{xy}(\gamma)(t) := \int_t^\infty \sqrt{|\dot{x}(s)|^2 + |\dot{y}(s)|^2} ds = \Omega (e^{-\mu t}) 
    \end{equation*} 
    for some $\mu < -\lambda$, we have $|y(t)| = O(|x(t)|^2)$. To see this, choose any $\beta > 0$ as in Lemma \ref{convergence for degenerate case} with $\beta > \mu$. We have $|\dot{x}(s)|^2 + |\dot{y}(s)|^2 \leq C|x(s)|^4 + C'e^{-2\beta s}$ and thus 
    \begin{align*}
        De^{-\mu t} \leq l_{xy}(\gamma)(t) 
        &\leq \int_t^\infty \sqrt{C}|x(s)|^2 + \sqrt{C'} e^{-\beta s} ds \\ 
        &\leq \sqrt{C} \int_t^\infty |x(s)|^2 ds + \frac{\sqrt{C'}}{\beta} e^{-\beta t}  
    \end{align*} 
    for some constants $C, C', D > 0$. Thus, there is a sequence $\{t_n\}_{n=1}^\infty$ diverging to $\infty$ and some $D' > 0$ such that for any $n \in \bN$, $|x(t_n)| \geq D' e^{-\frac{\mu}{2} t_n}$. Using $|y(t)| = O(|x(t)|^2 + e^{-\beta t})$, we have $\varliminf_{t\to +\infty, x(t)\neq 0} \frac{|y(t)|}{|x(t)|^2} < +\infty$. By Lemma \ref{convergence direction}, this implies $|y(t)| = O(|x(t)|^2)$, as desired. 
\end{remark}

\begin{figure}[H]
    \centering
    \includegraphics[width=0.65\textwidth]{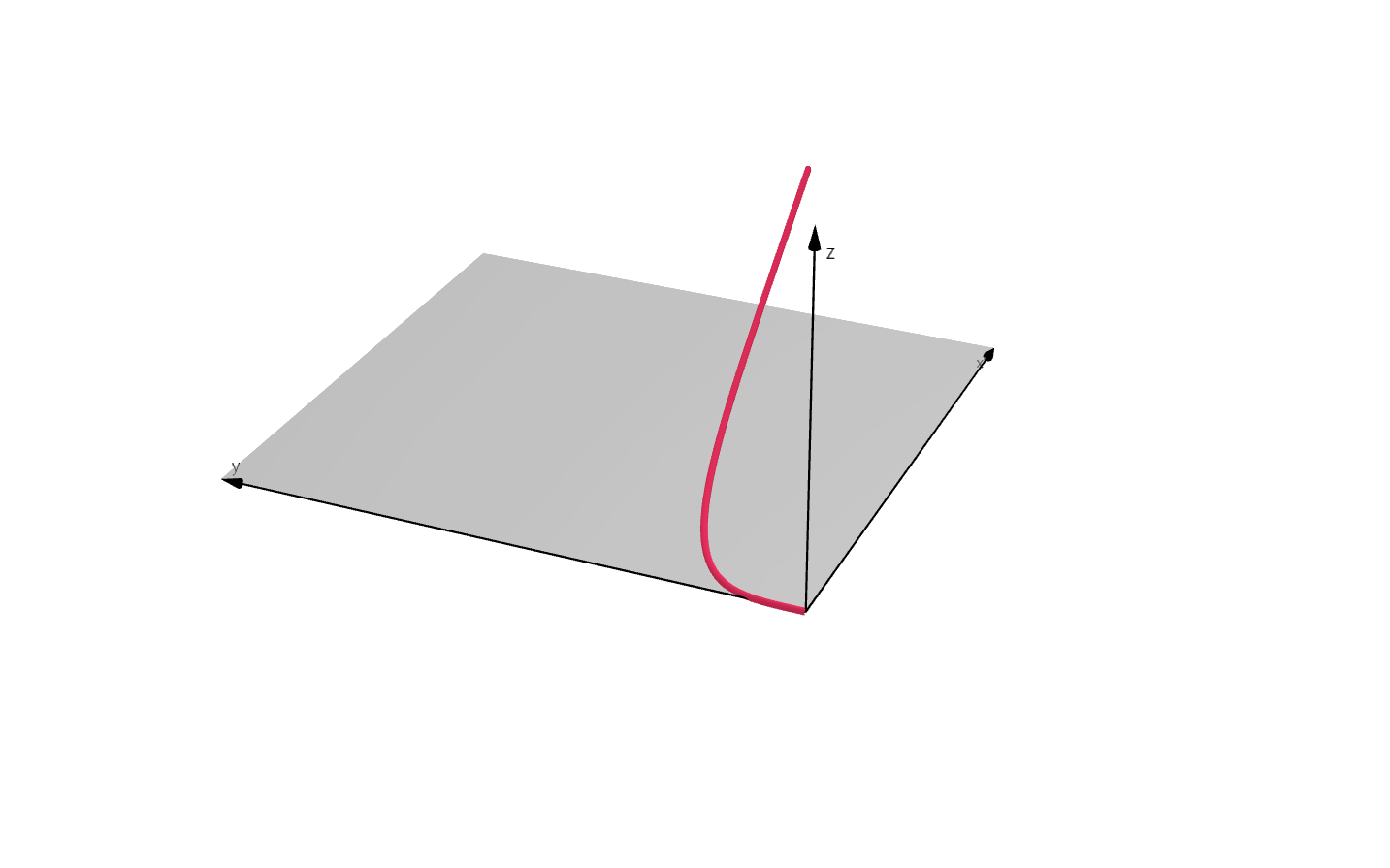}
    \caption{Limiting direction of the solution curve (blue curve) of system (\ref{Eq Generalized system concerning center manifold}) with $c > 0$. Viewed from $xy$-plane, this curve is ``biased towards" $y$-direction, which is non-degenerate.}
\end{figure}

Let $\theta \in \bR^{(d+1)m}$, consider $\operatorname{Hess}R(\theta)$. Let $V_s \siq \bR^{(d+1)m}$ be the largest subspace such that for any $v \in V_s \cut \{0\}$, $\<v, \operatorname{Hess}R(\theta) v\> \neq 0$. Let $V_p \siq (V_s)^\bot$ be the largest subspace such that $R|_{\theta + V_p}$ is locally constant near $\theta$. Let $V_c = (V_s + V_p)^\bot$. Define linear operators on $\bR^{(d+1)m}$: 
\begin{itemize}
    \item [i)]  $\pi_s$ be the orthogonal projection onto $V_s$. 
    \item [ii)] $\pi_c$ be the orthogonal projection onto $V_c$. 
    \item [iii)]$\pi_p$ be the orthogonal projection onto $V_p$. 
\end{itemize}
Finally, given the decomposition $\bR^{(d+1)m} := V_c \oplus V_s \oplus V_p$ as above, we write any element $\theta \in \bR^{(d+1)m}$ as $\theta := (x, y, z)$, where $x = \pi_c(\theta), y = \pi_s(\theta)$ and $z = \pi_p(\theta)$. \\
\textbf{Example. } Suppose that $R(x, y, z) = x^2 + y^4 + 0z$. Then 
\[
    \operatorname{Hess}R(x, y, z) = \maThree{2}{0}{0}{0}{12y^2}{0}{0}{0}{0}. 
\]
Thus, for $\operatorname{Hess}R(0,0,0)$, $V_s, V_c, V_p$ are the $x$-axis, $y$-axis and $z$-axis of $\bR^3$. \\

We are now ready to apply the lemmas above to the training dynamics of $R$. First we investigate the properties of the gradient flows near $R^{-1}\{0\}$, especially when it is near $Q^*$. Recall that $Q^* = \bigcup_{r=m_0}^m \left( \bigcup_{P,\pi} Q_{P,\pi}^r \right)$, whence by an appropriate coordinate transformation the gradient flow near some $Q_{P,\pi}^r$ becomes system (\ref{Eq Generalized system concerning center manifold}). Since $Q_{P,\pi}^r$ has the same geometry as $Q_P^r$, we will only prove the cases for $Q_P^r$'s; the results then apply to $Q_{P,\pi}^r$ for all permutation $\pi$'s on $\{1, ..., m\}$. 

\begin{thm}[convergence of gradient flow]\label{Thm Convergence rates of gradient flow} 
    For $\theta^* \in \Rmuzero$, let $\gamma$ denote any gradient flow converging to $\theta^*$. Let $\pi_s, \pi_c, \pi_p$ be defined as above. Given $m_0 \le r \le m$ and $n$ separating inputs of $\theta^*$. The following results hold. 
    \begin{itemize}
        \item [(a)] Suppose that $n \geq m + (m + m_0 - r)d$ and $\frac{m_0+m}{2} \le r \le m$. If $P$ has deficient number $l = 2r - m - m_0$, then for almost all $\theta^* \in Q_P^r$, $\gamma$ converges to $\theta^*$ at linear rate and satisfies $|\pi_p(\gamma(t) - \theta^*)| = O(|\pi_s(\gamma(t) - \theta^*)|)$ as $t \to +\infty$. When $n \le (d+1)m_0$, the same result holds for any $\theta^* \in \Rmuzero$ sufficiently close to $Q^*$.
        
        \item [(b)] Suppose that $n \geq m + (m + m_0 - r)d$. If $m_0 \leq r < \frac{m+m_0}{2}$ or the deficient number $l$ of $P$ satisfies $l < 2r - m - m_0$, then for almost all $\theta^* \in Q_P^r$, if 
        \begin{equation*}
            l(\gamma)(t) := \int_t^\infty \sqrt{|\pi_c(\gamma(\zeta) - \theta^*)|^2 + |\pi_s(\gamma(\zeta)-\theta^*)|^2} d\zeta = \Omega(e^{-\mu t}) 
        \end{equation*}  
        for some $\mu$ greater than the largest negative eigenvalue of $\operatorname{Hess} R(\theta^*)$, then $|\pi_s(\gamma(t) - \theta^*)| = O(|\pi_c(\gamma(t) - \theta^*)|^2)$ as $t \to +\infty$. 
        
        \item [(c)] In particular, (a) holds for all $\theta^* \in Q_P^m$ and (b) holds for all $\theta^* \in Q_P^{m_0}$. 
    \end{itemize}
\end{thm}
\begin{proof}
\begin{itemize}
    \item [(a)] First assume that $n \geq m + (m + m_0 - r)d$ and $\frac{m_0+m}{2} \le r \le m$. For almost all $\theta^* \in Q_P^r$ we have 
    \[
        s = \operatorname{rank}\left(\operatorname{Hess}R(\theta^*)\right) = \operatorname{codim}\,Q_P^r. 
    \]
    Fix any such $\theta^*$. Find some $U \ni \theta^*$ open, such that for any $\ttheta^* \in U \cap Q_P^r$ we have $\operatorname{rank} \operatorname{Hess}R(\ttheta^*) = \operatorname{rank}\left(\operatorname{Hess}R(\theta^*)\right)$. Let $\tau: U \to \bR^s \times \bR^{(d+1)m - s}$ be a (smooth) coordinate transformation such that $\tau(U \cap Q_P^r) \siq \{0\} \times \bR^p$, where $p = (d+1)m - s$, and let $(0, z^*) := \tau(\theta^*)$, where $0 \in \bR^{s}$ and $z^* \in \bR^{p}$. In this way we obtain the system (\ref{Eq Generalized system concerning center manifold}) with $c = 0$ and $U_p = \tau(U)$. By Remark \ref{Rmk gradient flow converges at linear rate}, $\tau(\gamma(t)) = (y(t), z(t))$ converges to $(0, z^*)$ at linear rate and by Lemma \ref{convergence direction} (a), $|z(t) - z^*| = O(|y(t)|^2)$. By transforming back to the original coordinate system, we see that $\gamma(t)$ converges at linear rate and $|\pi_p(\gamma(t) - \theta^*)| = O(|\pi_s(\gamma(t) - \theta^*)|)$. 

    Now assume that $n \le (d+1)m_0$. By Lemma \ref{separated branch} (a), there is some open $U \siq \bR^{(d+1)m}$ such that $Q^* \siq U \cap \Rmuzero$ and $R$ is Morse--Bott at $U \cap \Rmuzero$. In particular this means by applying a coordinate transformation we obtain the system (\ref{Eq Generalized system concerning center manifold}) with $c = 0$. Arguing in the same way as above, we can see that $\gamma(t) \to \theta^*$ at linear rate and $|\pi_p(\gamma(t) - \theta^*)| = O(|\pi_s(\gamma(t) - \theta^*)|)$, as $t \to +\infty$. 
    
    \item [(b)] Let $s$ be the maximum of $\operatorname{rank}\left(\operatorname{Hess}R(\theta^*)\right)$ for $\theta^* \in Q_P^r$. By Lemma \ref{separated branch}, for almost all $\theta^* \in Q_P^r$ we have $\operatorname{rank}\left(\operatorname{Hess}R(\theta^*)\right) = s$. Fix any such $\theta^*$. Find some $U \ni \theta^*$ open, such that for any $\ttheta^* \in U \cap Q_P^r$, $\operatorname{rank}\left(\operatorname{Hess}R(\ttheta^*)\right) = s$. Since $r < \frac{m_0+m}{2}$, by Lemma \ref{separated branch} $s < \operatorname{codim}\,Q_P^r$, so $c := \operatorname{codim}\,Q_P^r - s > 0$. Also define $p := \dim Q_P^r$. 

    At each $\ttheta^* \in Q_P^r$ there is a coordinate transformation $\tau_{\ttheta^*}: U \cap (\ttheta^* + (Q_P^r)^\bot) \to \bR^c \times \bR^s$, parametrized smoothly by $\ttheta^*$, such that 
    \begin{itemize}
        \item [i)]  $\tau(U \cap (\ttheta^* + V_s)) = \tau(\ttheta^*) + \{0\} \times \bR^s$, where $V_s \siq \bR^{(d+1)m}$ is the largest subspace such that for any $v \in V_s \cut \{0\}$, $\<v, \operatorname{Hess}R(\ttheta^*) v \> \neq 0$. 
        \item [ii)] $\tau(U \cap (\ttheta^* + V_c)) = \tau(\ttheta^*) + \bR^c \times \bR^s$, where $V_c = (V_s + Q_P^r)^\bot$.  
    \end{itemize}
    Since the eigenvectors of $\operatorname{Hess}R(\theta)$ depends smoothly on $\theta$, these $\tau_{\ttheta^*}$'s can be combined into an embedding $\tau: U \to \bR^c \times \bR^s \times \bR^p$. In this way we obtain system (\ref{Eq Generalized system concerning center manifold}) with $U_P = \tau(U)$. By Lemma \ref{convergence direction} (b) and Remark \ref{Rmk gradient flow may converge at sublinear rate}, we thus have $|y(t)| = O(|x(t)|^2)$. Now apply $\tau^{-1}: \ran \tau \to U$ to see that $|\pi_s(\gamma(t) - \theta^*)| = O(|\pi_c(\gamma(t) - \theta^*)|^2)$. 

    \item [(c)] When $r \in \{m_0, m\}$ and $n \geq r + (m+m_0-r)d$, the rank of $\operatorname{Hess} R$ is constant on $Q_P^r$.
\end{itemize}
\end{proof}

Recall that any gradient flow $\gamma$ sufficiently close to $\Rmuzero$ converges to $\Rmuzero$ (Theorem \ref{convergence rate of analytic or GMB function}). We then investigate the convergence rates of $R(\gamma)$. In particular, we show that even if $\gamma$ does not converge at linear rate, $R(\gamma)$ decreases to $0$ very quickly. 

\begin{cor}[convergence of loss]\label{Cor Convergence rates of loss}
    With the notations in Theorem \ref{Thm Convergence rates of gradient flow}, given $m_0 \le r \le m$ and $n$ separating inputs. The following results hold. 
    \begin{itemize}
        \item [(a)] Suppose that $n \geq m + (m + m_0 - r)d$ and $\frac{m_0+m}{2} \le r \le m$ and $P$ has deficient number $l = 2r - m - m_0$. For almost all $\theta^* \in Q_P^r$, we have $R(\gamma(t)) \to R(\theta^*) = 0$ at linear rate as $t \to +\infty$. When $n \le (d+1)m_0$, the same result holds for any $\theta^* \in \Rmuzero$ sufficiently close to $Q^*$. 

        \item [(b)] Suppose $m_0 \leq r < \frac{m+m_0}{2}$ and the deficient number $l$ of $P$ satisfies $l < 2r - m - m_0$. For almost all $\theta^* \in Q_P^r$, there is some $\beta > 0$ such that $R(\gamma(t)) = O(|\gamma(t) - \theta^*|^4 + e^{-\beta t})$ for all sufficiently large $t$. 
    \end{itemize}
\end{cor}
\begin{proof}~
\begin{itemize}
    \item [(a)] Since $R$ is analytic, it is in particular locally Lipschitz, so for any bounded open $U \siq \bR^{(d+1)m}$ containing $\theta^*$, there is a constant $c > 0$ with 
    \[
        R(\theta) = |R(\theta) - R(\theta^*)| \leq c|\theta - \theta^*|. 
    \]
    By Theorem \ref{Thm Convergence rates of gradient flow}, for almost all $\theta^* \in Q_P^r$ we have $|\gamma(t) - \theta^*| \leq c'e^{-\beta t}$ for some $c', \beta > 0$. Therefore, 
    \[
        R(\gamma(t)) \leq c|\gamma(t) - \theta^*| \leq cc' e^{-\beta t}. 
    \]
    In other words, $R(\gamma(t)) \to 0$ at linear rate as $t \to +\infty$. 

    Now assume that $n \le (d+1)m_0$. Then there is some open $U \siq \bR^{(d+1)m}$ such that $Q^* \siq U \cap \Rmuzero$ and $R$ is Morse--Bott at $U \cap \Rmuzero$. Thus, for any $\theta^* \in U \cap \Rmuzero$ we can apply Theorem \ref{Thm Convergence rates of gradient flow} and argue in the same way as above to deduce that $R(\gamma(t)) \to 0$ at linear rate as $t \to +\infty$. 

    \item [(b)] By Theorem \ref{Thm Convergence rates of gradient flow} (b), we actually have for almost all $\theta^* \in Q_P^r$, $|\pi_s(\gamma(t) - \theta^*)| = O(|\pi_c(\gamma(t) - \theta^*)|^2 + e^{-\mu t})$ as $t \to +\infty$. for some $\mu > 0$. Using the Talyor expansion of $R$ near $\theta^*$, we can write 
    \begin{align*}
        R(\theta) = R(\theta^*) &+ \< \nabla R(\theta^*), \theta - \theta^*\> \\ 
                                &+ \frac{1}{2}\< \operatorname{Hess}R(\theta^*)(\theta - \theta^*), \theta - \theta^*\> + O(|\theta - \theta^*|^3). 
    \end{align*}
    Since $R$ is analytic and $R \geq 0$, we can further write and simplify it as 
    \begin{align*}
        R(\theta) 
        &= \frac{1}{2}\< \operatorname{Hess}R(\theta^*)(\theta - \theta^*), \theta - \theta^*\> + O(|\theta - \theta^*|^4) \\ 
        &= \frac{1}{2}\< \operatorname{Hess}R(\theta^*)\pi_s(\theta - \theta^*), \pi_s(\theta - \theta^*)\> + O(|\theta - \theta^*|^4) \\ 
        &\leq \frac{1}{2} \norm{\operatorname{Hess}R(\theta^*)} |\pi_s(\theta - \theta^*)|^2 + O(|\theta - \theta^*|^4). 
    \end{align*}
    Since $\lim_{t\to+\infty} \gamma(t) = \theta^*$, it follows that for sufficiently large $t$ we have 
    \begin{align*}
        R(\gamma(t)) 
        &\leq \frac{1}{2} \norm{\operatorname{Hess}R(\theta^*)} |\pi_s(\gamma(t) - \theta^*)|^2 + O(|\gamma(t) - \theta^*|^4) \\ 
        &= \frac{1}{2} \norm{\operatorname{Hess}R(\theta^*)} O((|\pi_c(\gamma(t) - \theta^*)|^2 + e^{-\mu t})^2) +  O(|\gamma(t) - \theta^*|^4) \\
        &= \frac{1}{2} \norm{\operatorname{Hess}R(\theta^*)} O((|\gamma(t) - \theta^*|^2 + e^{-\mu t})^2) +  O(|\gamma(t) - \theta^*|^4) \\
        &= O(|\gamma(t) - \theta^*|^4 + e^{-\beta t}), 
    \end{align*}
    where $\beta > 0$ is some constant depending on $\mu$. This proves the desired result. 
\end{itemize}
\end{proof}

For clarity, we summarize the results of Theorem \ref{Thm Convergence rates of gradient flow} and Corollary \ref{Cor Convergence rates of loss} in the table below.

\begin{table}[H]\label{tab:2}
\centering
\begin{tabular}{|l|l|l|l|}
\multicolumn{4}{c}{Convergence Rates of Gradient Flows and Loss Near Different Branches\vs{0.4em}}\\
\hline
\textbf{Sample size}                  & \textbf{$Q_{P,\pi}^r$ condition}                                                      & \textbf{Convergence of GF} & \textbf{Convergence of Loss}                          \\ \hline
$\le (d+1)m_0$                        & arbitrary $Q_{P,\pi}^r$                                                               & Linear rate                & Linear rate                                           \\ \hline
\multirow{2}{*}{$\ge r + (m+m_0-r)d$} & \begin{tabular}[c]{@{}l@{}}$r \ge \frac{m+m_0}{2}$ and \\ $l = 2r-m-m_0$\end{tabular} & Linear rate                & Linear rate                                           \\ \cline{2-4} 
                                      & \begin{tabular}[c]{@{}l@{}}$r < \frac{m+m_0}{2}$ or \\ $l \ne 2r-m-m_0$\end{tabular}  & May not be linear rate     & $R(\gamma) = O(|\gamma - \theta^*|^4 + e^{-\beta t})$ \\ \hline
\end{tabular}
\caption{In the table above, ``GF" refers to gradient flow. The third column focuses on the convergence rate of gradient flow (near $Q_{P,\pi}^r$) and the last column indicates convergence of loss under gradient flow, i.e., convergence of $R(\gamma(t))$ for a gradient flow $\gamma$.}
\end{table}

\subsection{Local Recovery by Gradient Flow}

We end this section with a discussion about whether the points in $Q^*$ are stable under perturbation. Indeed, it is natural to ask by slightly perturbing a $\theta^* \in Q^*$ to some $\theta_0$, do we have $\limftyt \gamma_{\theta_0}(t) \in Q^*$? To study this problem, we first define ``recovery stability" below which works for a more general case. 

\begin{defn}[recovery stability]\label{Defn Recovery stability}
    Let $\theta^* \in \bR^{(d+1)m}$. We say $\theta^*$ is recovery stable if there is some $\delta > 0$ such that for any $\theta_0 \in B(\theta^*, \delta)$, the gradient flow $\gamma_{\theta_0}: [0,+\infty) \to \bR^{(d+1)m}$ with initial value $\theta_0$ converges and satisfies 
    \[
        g(\theta^*, x) = g\left( \lim_{t\to+\infty} \gamma_{\theta_0}(t), x \right)
    \] 
    for all $x \in \bR^d$. If this is not true, we say $\theta^*$ is recovery unstable. Given a subset $E \siq \bR^{(d+1)m}$, $E$ is called recovery stable if every $\theta^* \in E$ is recovery stable; otherwise, we say $E$ is recovery unstable. 
\end{defn} 

The following result is an immediate corollary of Theorem \ref{Thm Convergence rates of gradient flow}, Lemma \ref{separated branch} and Lemma \ref{underparametrized system}. It fully explains when a point in $Q^*$ is recovery (un)stable. \\

\begin{thm}[recovery stability]\label{Thm Recovery stability}
    Given $m_0 \le r \le m$, partition $P$ and permutation $\pi$ and separating inputs $\{x_i\}_{i=1}^n$. Then no point in $Q_{P,\pi}^r$ is recovery stable when $n \le r + (r-l)d$ ($l$ is the deficient number of $P$), and almost all points in $Q_{P,\pi}^r$ are recovery stable when $n \ge r + (m+m_0-r)d$. Moreover, all points in $Q^*$ are recovery stable when $n > (d+1)m$, namely, $Q^*$ is recovery stable. 
\end{thm}
\begin{proof}
    The desired result follows from the observation that a point $\theta^* \in Q^*$ is recovery stable if and only if it has a neighborhood $U \siq \bR^{(d+1)m}$ with $U \cap \Rmuzero = U \cap Q^*$. 
    
    So fix any $\theta^* \in Q^*$. First assume that this is not true, there is a sequence $\{\theta_n^*\}_{n=1}^\infty$ in $\Rmuzero \cut Q^*$ converging to $\theta^*$, whence by perturbing $\theta^*$ to any $\theta_n^*$, the gradient flow $\gamma_{\theta_n^*}$ starting at $\theta_n^*$ clearly satisfies 
    \[
        \limftyt \gamma_{\theta_n}(t) = \theta_n^* \in \Rmuzero \cut Q^*. 
    \]
    Conversely, if such a $U$ exists, we can shrink it if necessary, so that for any $\theta_0 \in U$, $\gamma_{\theta_0}$ satisfies 
    \[
        \limftyt \gamma_{\theta_0}(t) \in Q^*. 
    \]
    Since any two point in $Q^*$ represent the same model $f^*$, we clearly have 
    \[
        g(\theta^*, x) = f^*(x) = g(\limftyt \gamma_{\theta_0}(t), x) 
    \]
    for all $x \in \bR^d$. Therefore, $\theta^*$ is recovery stable.
\end{proof}

Below we use Theorem \ref{Thm Recovery stability} to illustrate when the points in each branch $Q^r$ of $Q^*$ become recovery stable as sample size increases.

\begin{table}[H]
\centering
\begin{tabular}{llllll}
\multicolumn{6}{c}{Sample Size and Recovery Stability of Points in $Q^*$\vs{0.4em}}\\
\hline
\multicolumn{1}{|l|}{\textbf{Sample size/Branches}} & \multicolumn{1}{l|}{$Q^{m_0}$} & \multicolumn{1}{l|}{...} & \multicolumn{1}{l|}{$Q^r$} & \multicolumn{1}{l|}{...} & \multicolumn{1}{l|}{$Q^m$} \\ \hline
\multicolumn{1}{|l|}{$\le (d+1)m_0$}                & \multicolumn{1}{l|}{\ding{55}}              & \multicolumn{1}{l|}{$...$}    & \multicolumn{1}{l|}{\textbf{\ding{55}}}              & \multicolumn{1}{l|}{$...$}    & \multicolumn{1}{l|}{\textbf{\ding{55}}}              \\ \hline
\multicolumn{1}{|l|}{$\ge m+m_0d$}                  & \multicolumn{1}{l|}{}              & \multicolumn{1}{l|}{}    & \multicolumn{1}{l|}{}              & \multicolumn{1}{l|}{}    & \multicolumn{1}{l|}{$\checkmark$}  \\ \hline
\multicolumn{1}{|l|}{$\vdots$}                      & \multicolumn{1}{l|}{}              & \multicolumn{1}{l|}{}    & \multicolumn{1}{l|}{}              & \multicolumn{1}{l|}{...} & \multicolumn{1}{l|}{$\vdots$}      \\ \hline
\multicolumn{1}{|l|}{$\ge r + (m+m_0-r)d$}          & \multicolumn{1}{l|}{}              & \multicolumn{1}{l|}{}    & \multicolumn{1}{l|}{$\checkmark$}  & \multicolumn{1}{l|}{...} & \multicolumn{1}{l|}{$\checkmark$}   \\ \hline
\multicolumn{1}{|l|}{$\vdots$}                      & \multicolumn{1}{l|}{}              & \multicolumn{1}{l|}{...} & \multicolumn{1}{l|}{$\vdots$}      & \multicolumn{1}{l|}{...} & \multicolumn{1}{l|}{$\vdots$}      \\ \hline
\multicolumn{1}{|l|}{$\ge m_0+md$}                  & \multicolumn{1}{l|}{$\checkmark$}  & \multicolumn{1}{l|}{...} & \multicolumn{1}{l|}{$\checkmark$}  & \multicolumn{1}{l|}{...} & \multicolumn{1}{l|}{$\checkmark$}  \\ \hline
\multicolumn{1}{|l|}{$> (d+1)m$}                              & \multicolumn{5}{l|}{\quad\quad\quad\quad\quad\,$\checkmark^*$} \\ \hline
\multicolumn{6}{|l|}{$\checkmark^*$: any point in $Q^*$ is recovery stable}\\ \hline
\end{tabular}
\caption{How sample size determines the recovery stability of points in branches of $Q^*$. The left-most column lists the important sample size thresholds. As shown in the table, when $n \le (d+1)m_0$, no point in $Q^*$ is recovery stable. For any $r \in \{m_0, ..., m\}$, when the sample size $n \ge r + (m+m_0-r)d$, in each of the branches $Q^{r}, Q^{r+1}, ..., Q^m$ almost all points are recovery stable. Moreover, when $n > (d+1)m$, i.e., when we are in underparameterized regime, any point in $Q^*$ is recovery stable.}
\end{table}

\section{Conclusion and Discussion}\label{Section Conclusion and discussion}

In this paper, we analyzed the geometry and dynamics of the loss landscape for two-layer neural networks, focusing on the vicinity of global minima. We showed that the global minima with zero generalization error can be partitioned into distinct branches, which become geometrically separated from other global minima as the sample size increases. We identified the sample size thresholds for this separation and demonstrated that, for sufficiently large sample sizes, the loss function $R$ is Morse--Bott at almost all points in certain branches, ensuring non-degenerate Hessians along their normal bundles. Our analysis revealed that gradient flows sufficiently close to the global minima converge to points within these minima, with linear convergence near Morse--Bott branches and sublinear convergence near others. We also introduced the concept of ``recovery stability'', showing that almost all points in certain branches are recovery stable when the sample size is large enough.\\

Our results provide a detailed understanding of the loss landscape and training dynamics of two-layer neural networks, explaining their ability to find well-generalizing solutions even in the overparameterized regime. This work lays a foundation for further studies on the global recovery capabilities and generalization performance of neural networks. We demonstrated that two-layer neural networks can locally recover the target function in the overparameterization regime, guaranteed by the separation of branches of global minima and the convergence properties of gradient flows.\\

Finally, we point out several possible future works. First, the analysis in this paper could be extended to deeper neural networks with more than two layers to understand the increased complexity in the loss landscape. Second, we shall figure out how different activation functions, various initialization schemes and regularization techniques would impact generalization capabilities, training dynamics and performance. Furthermore, we expect empirical studies to validate the theoretical findings and explore their practical implications. Finally, future research on global recovery and generalization could provide a comprehensive understanding of neural networks' ability to generalize from a global perspective.

\section{Acknowledgement} 
This work is sponsored by the National Key R\&D Program of China Grant No. 2022YFA1008200 (T. L., Y. Z.), the National Natural Science Foundation of China Grant No. 12101401 (T. L.), No. 12101402 (Y. Z.),  Shanghai Municipal Science and Technology Key Project No. 22JC1401500 (T. L.), the Lingang Laboratory Grant No.LG-QS-202202-08 (Y. Z.), Shanghai Municipal of Science and Technology Major Project No. 2021SHZDZX0102.

\clearpage
\bibliography{main_2}

\end{document}